\newtheorem{theorem}{Theorem}
\newtheorem{proposition}{Proposition}
\newtheorem{remark}{Remark}
\DeclareMathOperator*{\argmin}{arg\,min}
\DeclareMathOperator*{\minimize}{\mathrm{minimize}}
\newcommand{\sqnorm}[1]{\left\| #1 \right\|^2}
\newcommand{\Exp}[1]{\mathbb{E}\!\left[ #1 \right]}
\newcommand{\oma}{\omega_{\mathrm{av}}}
\definecolor{mygreen1}{rgb}{0,0.8,0}
\newcommand{\cmark}{\textcolor{mygreen1}{\ding{51}}}%
\newcommand{\xmark}{\textcolor{red}{\ding{55}}}%
\definecolor{mydarkgreen}{rgb}{0,0.42,0}
\definecolor{mydarkred}{rgb}{0.75,0,0}
\definecolor{mygreen2}{RGB}{0,120,20}
\newcommand{\algname}[1]{{\sf\color{mydarkred}\scalefont{0.90}{#1}}\xspace}
\newcommand{\eqdef}{\coloneqq}
\title{\bf EF-BV: A Unified Theory of Error Feedback  and Variance Reduction Mechanisms for Biased  and Unbiased Compression in Distributed Optimization}
\author{%
  Laurent Condat\thanks{Corresponding author. Contact: see https://lcondat.github.io/} \\
  KAUST  
   \And
Kai Yi \\
KAUST
   \AND
   Peter Richtárik\\
   King Abdullah University of Science and Technology (KAUST)
  \\
Thuwal 23955-6900, Kingdom of Saudi Arabia   
}
\begin{document}
 
\maketitle

\begin{abstract}
In distributed or federated optimization and learning, communication between the different computing units is often the bottleneck and gradient compression is widely used to reduce the number of bits sent within each communication round of iterative methods. There are two classes of compression operators and separate algorithms making use of them. In the case of unbiased random compressors with bounded variance (e.g., rand-k), the \algname{DIANA} algorithm of \citet{mis19}, which implements a variance reduction technique for handling the variance introduced by compression, is the current state of the art. In the case of biased and contractive compressors (e.g., top-k), the \algname{EF21}  algorithm of \citet{ric21}, which instead implements an error-feedback mechanism, 
	is the current state of the art. These two classes of compression schemes and algorithms are distinct, with different analyses and proof techniques. In this paper, we unify them into a single framework and propose a new algorithm, recovering \algname{DIANA} and \algname{EF21}  as particular cases. Our general approach works with a new, larger class of compressors, which has two parameters, the bias and the variance, and 
	includes unbiased and biased compressors as particular cases. 
	This allows us to inherit the best of the two worlds: like \algname{EF21}  and unlike \algname{DIANA}, biased compressors, like top-k, whose good performance in practice is recognized, can be used. And like \algname{DIANA} and unlike \algname{EF21}, independent randomness at the compressors allows to mitigate the effects of compression, with the convergence rate improving when the number of parallel workers is large. This is the first time that an algorithm with all these features is proposed. We prove its linear convergence under certain conditions. 
	Our approach takes a step towards better understanding of two so-far distinct worlds of communication-efficient distributed learning.
	\end{abstract}

	\section{Introduction}

	In the big data era, the explosion in size and complexity of the data
	arises in parallel to a shift towards distributed computations~\citep{ver21}, 
	as modern hardware increasingly relies on the power of uniting many parallel units into one system. For distributed optimization and learning tasks, specific issues arise, such as
	decentralized data storage. In the modern paradigm of \emph{federated learning}~\citep{ja2016,mcm17,kai19,li20}, a potentially huge number of devices, with their owners' data stored on each of them, are involved in the collaborative process of 
	training a global machine learning model. The goal is to exploit the wealth of useful information lying in the \emph{heterogeneous} data stored across the network of such devices.  
	But users are increasingly sensitive to privacy concerns and prefer their data to never leave their devices. Thus, the devices have to \emph{communicate} the right amount of information back and forth with a distant server, for this distributed learning process to work. 
	Communication, which can be costly and slow, is the main bottleneck in this framework. So, it is of primary importance to devise novel algorithmic strategies, which are efficient in terms of computation and communication complexities. A natural and widely used idea is 
	 to make use of (lossy) {\em compression}, to reduce the size of the communicated messages \citep{ali17,wen17,wan18,GDCI,alb20,bas20,dut20,sat20,xu21}. 
	
	 In this paper, we propose a stochastic gradient descent (\algname{SGD})-type method for distributed optimization, which uses possibly \emph{biased} and randomized compression operators. Our algorithm is variance-reduced \citep{han19,gor202,gow20a}; that is, it converges to the exact solution, with fixed stepsizes, without any restrictive assumption on the functions to minimize.

\noindent\textbf{Problem.}\ \  We consider the convex optimization problem
	\begin{equation}  
	\minimize_{x\in\mathbb{R}^d}\;\underbrace{\frac{1}{n}\sum_{i=1}^n f_i(x)}_{f(x)} + R(x),\label{eqpro1}
	\end{equation}
	where $d\geq 1$ is the model dimension; 
	$R:\mathbb{R}^d\rightarrow \mathbb{R}\cup\{+\infty\}$ is a  proper, closed, convex function \citep{bau17}, whose proximity operator 
$\mathrm{prox}_{\gamma R} : x \mapsto \argmin_{y\in \mathbb{R}^d} \big( \gamma R(y)+\frac{1}{2}\|x-y\|^2 \big)$ 
is easy to compute, for any $\gamma>0$~\citep{par14,con19,con22};
	$n\geq 1$ is the number of functions; each function $f_i:\mathbb{R}^d \rightarrow\mathbb{R}$ is convex and $L_i$-smooth, for some $L_i>0$; that is, $f_i$ is differentiable on $\mathbb{R}^d$ and its gradient $\nabla f_i$ is $L_i$-Lipschitz continuous: for every $x\in\mathbb{R}^d$ and $x'\in\mathbb{R}^d$, 
	$\|\nabla f_i(x)-\nabla f_i(x')\|\leq L_i \|x-x'\|$.

	We set $L_{\max}\eqdef\max_i L_i$ and $\tilde{L}\eqdef\sqrt{\frac{1}{n}\sum_{i=1}^n L_i^2}$. 
	The average function 
	$f\eqdef \frac{1}{n}\sum_{i=1}^n f_i$
	is $L$-smooth, for some $L\leq \tilde{L} \leq L_{\max}$. 
	A minimizer of $f+R$ is supposed to exist.
 For any integer $m\geq 1$, we define the set $\mathcal{I}_m\eqdef \{1,\ldots,m\}$.\medskip
	
\noindent\textbf{Algorithms and Prior Work.}\ \ 	
Distributed proximal \algname{SGD} solves the problem \eqref{eqpro1} by iterating
$	x^{t+1} \eqdef \mathrm{prox}_{\gamma R} \big(x^t -  \frac{\gamma}{n} \sum_{i=1}^n g_i^t\big)$, 
where $\gamma$ is a stepsize and the vectors $g_i^t$ are possibly stochastic estimates of the gradients $\nabla f_i(x^t)$, which are cheap to compute or communicate.
 Compression is typically performed by the application of a possibly randomized operator $\mathcal{C}:\mathbb{R}^d\rightarrow \mathbb{R}^d$; that is, for any $x$, $\mathcal{C}(x)$ denotes a realization of a random variable, whose probability distribution depends on $x$. Compressors have  the property that it is much easier/faster to transfer $\mathcal{C}(x)$ than the original message $x$. This can be achieved in several ways, for instance by sparsifying the input vector~\citep{ali18}, or by quantizing its entries~\citep{ali17,Cnat,gan19,may21,sah21}, or via a combination of these and other approaches~\citep{Cnat,alb20, bez20}. There are two classes of compression operators often studied in the literature: 1) unbiased compression operators, satisfying a variance bound proportional to the squared norm of the input vector, and 2) biased compression operators, whose square distortion is contractive with respect to the squared norm of the input vector; we present these two classes in Sections \ref{secun} and \ref{secbia}, respectively. \medskip

\noindent\textbf{Prior work: \algname{DIANA} with unbiased compressors.}\ \ 
An important contribution to the field in the recent years is the variance-reduced \algname{SGD}-type method called \algname{DIANA}~\citep{mis19}, which uses unbiased compressors; it is shown in Fig.~\ref{fig1}.  \algname{DIANA} was analyzed and extended in several ways, including bidirectional compression and acceleration,  see, e.g., the work of \citet{hor22,mis20,con22m,phi20,li2020,gor20}, and \citet{gor202,kha20} for general theories about \algname{SGD}-type methods, including variants using unbiased compression of (stochastic) gradients.\medskip

\noindent\textbf{Prior work: Error feedback with biased contractive compressors.}\ \ 
Our understanding of distributed optimization using biased compressors is more limited. The key complication comes from the fact that their naive use within methods like  gradient descent can lead to divergence, as widely observed in practice, see also Example~1 of  \citet{bez20}. 
\emph{Error feedback} (\algname{EF}), also called error compensation, techniques were proposed to fix this issue and obtain convergence, initially as heuristics \citep{sei14}. Theoretical advances have been made in the recent years in the analysis of \algname{EF}, see the discussions and references in \citet{ric21} and \citet{chu22}. But the question of whether it is possible to obtain a linearly convergent \algname{EF} method in the general heterogeneous data setting, relying on biased compressors only, was still an open problem; until last year, 2021, when \citet{ric21} re-engineered the classical \algname{EF} mechanism and came up with a new algorithm, called \algname{EF21}. 
 It was then extended in several ways, including by considering server-side compression, and the support of a regularizer $R$ in \eqref{eqpro1}, by \citet{fat21}. \algname{EF21}  is shown in Fig.~\ref{fig1}.\medskip

\noindent\textbf{Motivation and challenge.}\ \ 
While \algname{EF21} resolved an important theoretical problem  in the field of distributed optimization with contractive compression, there are still several open questions. In particular,  \algname{DIANA} with independent random compressors has a $\frac{1}{n}$ factor in its iteration complexity; that is, it converges faster when the number $n$ of workers is larger. 
\algname{EF21} does not have this property: its convergence rate does not depend on $n$. Also, the convergence analysis and proof techniques for the two algorithms are different: the linear convergence analysis of  \algname{DIANA} relies on $\|x^t-x^\star\|^2$ and $\|h_i^t-\nabla f_i(x^\star)\|^2$ tending to zero, where $x^t$ is the estimate of the solution $x^\star$ at iteration $t$ and $h_i^t$ is the control variate maintained at node $i$, whereas the analysis of \algname{EF21} relies on $(f+R)(x^t)-(f+R)(x^\star)$ and $\|h_i^t-\nabla f_i(x^t)\|^2$ tending to zero, and under different assumptions. This work aims at filling this gap. 
That is, we want to address the following open problem:

\begin{table*}[t]
\caption{Desirable properties of a distributed compressed gradient descent algorithm converging to an exact solution of \eqref{eqpro1} and whether they are satisfied by the state-of-the-art algorithms \algname{DIANA} and \algname{EF21} and their currently-known analysis, and the proposed algorithm \algname{EF-BV}.}  
\label{tab1}
\centering
\begin{tabular}{ccccc}
& \algname{DIANA}&\algname{EF21}&\algname{EF-BV}\\
\hline
handles unbiased compressors in $\mathbb{U}(\omega)$ for any $\omega\geq 0$&\cmark&\cmark${}^1$&\cmark\\
\hline
handles biased contractive compressors in $\mathbb{B}(\alpha)$ for any $\alpha\in (0,1]$&\xmark&\cmark&\cmark\\
\hline
handles  compressors in $\mathbb{C}(\eta,\omega)$ for any $\eta\in [0,1)$, $\omega\geq 0$&\xmark&\cmark${}^1$&\cmark\\
\hline
recovers  \algname{DIANA} and \algname{EF21} as particular cases&\xmark&\xmark&\cmark\\
\hline
the convergence rate improves when $n$ is large&\cmark&\xmark&\cmark\\
\hline
\end{tabular}

{\small ${}^1$: with pre-scaling with $\lambda<1$, so that $\mathcal{C}'= \lambda\mathcal{C}\in\mathbb{B}(\alpha)$ is used instead of $\mathcal{C}$}
\end{table*}

{\itshape
Is it possible to design an algorithm, which combines the advantages of  \algname{DIANA} and \algname{EF21}? That is, such that:
\begin{enumerate}
	\item[a.]
It deals with unbiased compressors, biased contractive compressors, and possibly even more.
	\item[b.]
It recovers  \algname{DIANA} and \algname{EF21} as particular cases.
\item[c.]
Its convergence rate improves with $n$ large.  
	\end{enumerate}%
}\medskip
	
\noindent\textbf{Contributions.}\ \ We answer positively this question and propose a new algorithm, which we name \algname{EF-BV}, for \emph{Error Feedback with Bias-Variance decomposition}, which for the first time satisfies the three aforementioned properties. This is illustrated in Tab.~\ref{tab1}. More precisely, our contributions are:
\begin{enumerate}
	\item We propose a new, larger class of compressors,
	which includes unbiased and biased contractive compressors as particular cases, and has two parameters, the \textbf{bias} $\eta$ and the \textbf{variance} $\omega$. A third parameter $\oma$ describes  the resulting variance from the parallel compressors after aggregation, and is key to getting faster convergence with large $n$, by allowing larger stepsizes than in \algname{EF21} in our framework.
	\item We propose a new algorithm, named \algname{EF-BV}, which exploits the properties of the compressors in the new class using two scaling parameters $\lambda$ and $\nu$. For particular values of $\lambda$ and $\nu$, \algname{EF21} and  \algname{DIANA} are recovered as particular cases. But by setting the values of $\lambda$ and $\nu$ optimally with respect to $\eta$, $\omega$, $\oma$ in \algname{EF-BV}, faster convergence can be obtained.
	\item We prove linear convergence of \algname{EF-BV} under a Kurdyka--{\L}ojasiewicz condition of $f+R$, which is weaker than strong convexity of $f+R$. 	Even for \algname{EF21} and  \algname{DIANA}, this is new.
	
	\item We provide new insights on \algname{EF21} and  \algname{DIANA}; for instance, we prove linear convergence of  \algname{DIANA} with biased compressors.

	\end{enumerate}%

\section{Compressors and their properties}

\subsection{Unbiased compressors}\label{secun}

For every $\omega\geq 0$, we introduce the set $\mathbb{U}(\omega)$ of unbiased compressors, which are randomized operators of the form $\mathcal{C}:\mathbb{R}^d\rightarrow \mathbb{R}^d$, satisfying
\begin{equation}
\mathbb{E}[\mathcal{C}(x)]=x\quad\mbox{and}\quad \mathbb{E}[\|\mathcal{C}(x)-x\|^2]\leq \omega\|x\|^2,\quad \forall x\in\mathbb{R}^d,
\label{eq4}
\end{equation}
where $\Exp{\cdot}$ denotes the expectation. 
The smaller $\omega$, the better, and $\omega=0$ if and only if $\mathcal{C}=\mathrm{Id}$, the identity operator, which does not compress.  We can remark that if $\mathcal{C}\in\mathbb{U}(\omega)$ is deterministic, then $\mathcal{C}=\mathrm{Id}$. So, unbiased compressors are random ones. A classical unbiased compressor is \texttt{rand-}$k$, for some $k\in \mathcal{I}_d$, which keeps $k$ elements chosen uniformly at random, multiplied by $\frac{d}{k}$, and sets the other elements to 0.  It is easy to see that \texttt{rand-}$k$ belongs to $\mathbb{U}(\omega)$ with $\omega=\frac{d}{k}-1$ \citep{bez20}.

\subsection{Biased contractive compressors}\label{secbia}

For every $\alpha\in (0,1]$, we introduce the set $\mathbb{B}(\alpha)$ of biased contractive compressors, which are possibly randomized operators of the form $\mathcal{C}:\mathbb{R}^d\rightarrow \mathbb{R}^d$, satisfying
\begin{equation}
 \mathbb{E}[\|\mathcal{C}(x)-x\|^2]\leq (1-\alpha)\|x\|^2,\quad \forall x\in\mathbb{R}^d.\label{eq5}
\end{equation}
We use the term `contractive' to reflect the fact that the squared norm in the left hand side of \eqref{eq5} is smaller, in expectation, than the one  in the right hand side, since $1-\alpha <1$. This is not the case in \eqref{eq4}, where $\omega$ can be arbitrarily large. 
The larger $\alpha$, the better, and $\alpha=1$ if and only if $\mathcal{C}=\mathrm{Id}$. 
Biased compressors need not be random: a classical biased and deterministic compressor is \texttt{top-}$k$, for some $k\in\mathcal{I}_d$, which keeps the $k$ elements with largest absolute values unchanged and sets the other elements to 0. It is easy to see that \texttt{top-}$k$ belongs to $\mathbb{B}(\alpha)$ with $\alpha=\frac{k}{d}$ \citep{bez20}.

\subsection{New general class of compressors}\label{sec23}

We refer to \citet{bez20}, Table 1 in \citet{saf21}, \citet{zha21}, \citet{sze22}, for examples of compressors in $\mathbb{U}(\omega)$ or $\mathbb{B}(\alpha)$, and to \citet{xu20} for a system-oriented survey.

In this work, we introduce a new, more general class of compressors, ruled by 2 parameters, to allow for a finer characterization of their properties. Indeed, with any compressor $\mathcal{C}$, we can do a {\bf bias-variance decomposition} of the compression error: 
for every $x\in\mathbb{R}^d$,
\begin{equation}
 \mathbb{E}\big[\|\mathcal{C}(x)-x\|^2\big] = {\underbrace{\big\| \mathbb{E}[\mathcal{C}(x)]-x\big\|}_{\text{bias}}}^2 + \underbrace{\mathbb{E}\Big[\big\|\mathcal{C}(x)-\mathbb{E}[\mathcal{C}(x)]\big\|^2\Big]}_{\text{variance}}.\label{eqbiva}
 \end{equation}
Therefore, to better characterize the properties of compressors, we propose to parameterize these two parts, instead of only their sum: for every  $\eta \in [0,1)$ and $\omega\geq 0$, 
we introduce the new class $\mathbb{C}(\eta,\omega)$ of possibly random and biased operators, which are randomized operators of the form $\mathcal{C}:\mathbb{R}^d\rightarrow \mathbb{R}^d$, satisfying, for every $x\in\mathbb{R}^d$, the two properties:
\begin{align*}
\mathrm{(i)}\quad &\big\| \mathbb{E}[\mathcal{C}(x)]-x\big\|\leq \eta \|x\|,\\
\mathrm{(ii)}\quad& \mathbb{E}\Big[\big\|\mathcal{C}(x)-\mathbb{E}[\mathcal{C}(x)]\big\|^2\Big]\leq \omega\|x\|^2.
\end{align*}
Thus, $\eta$ and $\omega$ control the relative bias and variance of the compressor, respectively. 
Note that $\omega$ can be arbitrarily large, but the compressors will be scaled in order to control the compression error, as we discuss  in Sect.~\eqref{secsca}. 
 On the other hand, we must have $\eta<1$, since otherwise, no scaling can keep the compressor's discrepancy under control.\medskip

We have the following properties:
\begin{enumerate}
\item  $\mathbb{C}(\eta,0)$ is the class of deterministic compressors in $\mathbb{B}(\alpha)$, with $1-\alpha=\eta^2$.

\item  $\mathbb{C}(0,\omega)=\mathbb{U}(\omega)$, for every $\omega\geq 0$. In words, if its bias $\eta$ is zero,  the compressor is unbiased with relative variance $\omega$.

\item  Because of the bias-variance decomposition \eqref{eqbiva},  if $\mathcal{C}\in\mathbb{C}(\eta,\omega)$ with $\eta^2+\omega < 1$, then $\mathcal{C}\in\mathbb{B}(\alpha)$ with 
\begin{equation}
1-\alpha = \eta^2+\omega.\label{eqalpha}
\end{equation}

\item  Conversely, if $\mathcal{C}\in\mathbb{B}(\alpha)$, one easily sees from \eqref{eqbiva} that there exist $\eta \leq \sqrt{1-\alpha}$ and $\omega \leq 1-\alpha$ such that $\mathcal{C}\in\mathbb{C}(\eta,\omega)$.
\end{enumerate}

Thus, the new class $\mathbb{C}(\eta,\omega)$  generalizes the two previously known classes $\mathbb{U}(\omega)$ and $\mathbb{B}(\alpha)$. Actually, for compressors in $\mathbb{U}(\omega)$ and $\mathbb{B}(\alpha)$, we can just use  \algname{DIANA} and  \algname{EF21}, and our proposed algorithm \algname{EF-BV} will stand out when the compressors are neither in $\mathbb{U}(\omega)$ nor in $\mathbb{B}(\alpha)$; that is why the strictly larger class $\mathbb{C}(\eta,\omega)$ is needed for our purpose.

We present new compressors in the class $\mathbb{C}(\eta,\omega)$ in Appendix~\ref{secappa}.

\subsection{Average variance of several compressors}\label{secavv}

Given $n$ compressors $\mathcal{C}_i$, $i\in \mathcal{I}_n$, we are interested in how they behave in average. Indeed distributed algorithms consist, at every iteration, in compressing vectors in parallel, and then averaging them. Thus, we 
 introduce the \textbf{average relative variance} 
$\oma\geq 0$ of the compressors,  
such that, for every $x_i\in\mathbb{R}^d$, $i\in\mathcal{I}_n$, 
\begin{equation}
\Exp{ \sqnorm{ \frac{1}{n}\sum_{i=1}^n \big(\mathcal{C}_i(x_i)-\Exp{\mathcal{C}_i(x_i)}\big)} } \leq \frac{\oma}{n} \sum_{i=1}^n \sqnorm{x_i }.
\label{eqbo}
\end{equation}
When every $\mathcal{C}_i$ is in $\mathbb{C}(\eta,\omega)$, for some $\eta \in [0,1)$ and $\omega\geq 0$, then 
 $\oma \leq \omega$; but $\oma$ can be much smaller than  $\omega$, and we will exploit this property in \algname{EF-BV}. We can also remark that $
\frac{1}{n} \sum_{i=1}^n \mathcal{C}^i \in \mathbb{C}(\eta,\oma)$.

An important property is the following: if the $\mathcal{C}_i$ are mutually independent, since the variance of a sum of random variables is the sum of their variances, then
\begin{equation*}
\oma=\frac{\omega}{n}.
\end{equation*}
There are other cases where the compressors are dependent but $\oma$ is much smaller than $\omega$. Notably, the following setting can be used to model partial participation of $m$ among $n$ workers at every iteration of a distributed algorithm, for some $m\in \mathcal{I}_n$, with the $\mathcal{C}_i$ defined jointly as follows: for every  $i\in \mathcal{I}_n$ and $x_i\in\mathbb{R}^d$, 
\begin{equation*}
\mathcal{C}_i(x_i) =\begin{cases} \;\frac{n}{m} x_i & \text{ if }\;  i\in\Omega \\ 
\;0 & \text{ otherwise} \end{cases},
\end{equation*}
where $\Omega$ is a subset of $\mathcal{I}_n$ of size $m$ chosen uniformly at random. 
 This is sometimes called $m$-nice sampling \citep{ric16,gow20}. Then  every $\mathcal{C}_i$ belongs to $\mathbb{U}(\omega)$, with $\omega=\frac{n-m}{m}$, and, as shown for instance in   \citet{qia19} and Proposition~1 in \citet{con22m},  \eqref{eqbo} is satisfied with
\begin{equation*}
\oma=\frac{n-m}{m(n-1)}=\frac{\omega}{n-1}\quad\mbox{($=0$ if $n=m=1$)}. 
\end{equation*}

\subsection{Scaling compressors}\label{secsca}

A compressor $\mathcal{C}\in\mathbb{C}(\eta,\omega)$ does not necessarily belong to $\mathbb{B}(\alpha)$ for any $\alpha \in (0,1]$, since $\omega$ can be arbitrarily large. 
Fortunately, the compression error can be kept under control 
by \emph{scaling} the compressor; that is, using  $\lambda \mathcal{C}$ instead of $\mathcal{C}$, for some scaling parameter $\lambda\leq 1$. We have:
\begin{proposition}
\label{prop3}
Let $\mathcal{C}\in\mathbb{C}(\eta,\omega)$, for some  $\eta \in [0,1)$ and $\omega\geq 0$, and $\lambda\in (0,1]$. Then
$\lambda\mathcal{C}\in\mathbb{C}(\eta',\omega')$ with $\omega'= \lambda^2 \omega$ and $\eta '=\lambda\eta+1-\lambda\in(0,1]$.\end{proposition}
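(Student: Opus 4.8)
The plan is to verify the two defining inequalities of the class $\mathbb{C}(\eta',\omega')$ for the scaled operator $\lambda\mathcal{C}$ separately, since condition (i) (bias) and condition (ii) (variance) are decoupled. Both reductions rest on the single observation that expectation is linear, so that $\mathbb{E}[\lambda\mathcal{C}(x)]=\lambda\,\mathbb{E}[\mathcal{C}(x)]$ for every $x\in\mathbb{R}^d$; the variance bound then follows by pure homogeneity, while the bias bound follows from a triangle inequality after an appropriate splitting.

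For the variance (condition (ii)), I would write $\lambda\mathcal{C}(x)-\mathbb{E}[\lambda\mathcal{C}(x)]=\lambda\big(\mathcal{C}(x)-\mathbb{E}[\mathcal{C}(x)]\big)$, take squared norms and pull the scalar $\lambda$ out of the norm, obtaining $\mathbb{E}\big[\|\lambda\mathcal{C}(x)-\mathbb{E}[\lambda\mathcal{C}(x)]\|^2\big]=\lambda^2\,\mathbb{E}\big[\|\mathcal{C}(x)-\mathbb{E}[\mathcal{C}(x)]\|^2\big]\leq\lambda^2\omega\|x\|^2$. This needs nothing beyond the hypothesis $\mathcal{C}\in\mathbb{C}(\eta,\omega)$ and immediately yields $\omega'=\lambda^2\omega$.

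For the bias (condition (i)), the key algebraic step is to insert and subtract $\lambda x$ so as to write $\lambda\,\mathbb{E}[\mathcal{C}(x)]-x=\lambda\big(\mathbb{E}[\mathcal{C}(x)]-x\big)-(1-\lambda)x$. Applying the triangle inequality (using $\lambda>0$ and $1-\lambda\geq 0$) together with the hypothesis $\|\mathbb{E}[\mathcal{C}(x)]-x\|\leq\eta\|x\|$ gives $\|\mathbb{E}[\lambda\mathcal{C}(x)]-x\|\leq\lambda\eta\|x\|+(1-\lambda)\|x\|=(\lambda\eta+1-\lambda)\|x\|$, so that $\eta'=\lambda\eta+1-\lambda$. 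To confirm that $\lambda\mathcal{C}$ is a legitimate member of the class, I would rewrite $\eta'=1-\lambda(1-\eta)$ and use $\lambda\in(0,1]$ with $\eta\in[0,1)$, which gives $\lambda(1-\eta)\in(0,1]$ and hence $\eta'\in[0,1)$; in particular $\eta'<1$, which is exactly the membership requirement.

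I do not expect any genuine obstacle here, as the result is elementary. The only point requiring a moment's care is the bias splitting $\lambda\,\mathbb{E}[\mathcal{C}(x)]-x=\lambda(\mathbb{E}[\mathcal{C}(x)]-x)-(1-\lambda)x$, which is what correctly exposes the convex-combination structure $\eta'=1-\lambda(1-\eta)$ behind the triangle-inequality estimate, and the subsequent bookkeeping to pin down the range of $\eta'$.
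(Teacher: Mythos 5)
Your proof is correct and follows essentially the same route as the paper's: homogeneity of the centered term for the variance bound, and the splitting $\lambda\,\mathbb{E}[\mathcal{C}(x)]-x=\lambda\big(\mathbb{E}[\mathcal{C}(x)]-x\big)-(1-\lambda)x$ plus the triangle inequality for the bias bound. Your extra verification that $\eta'=1-\lambda(1-\eta)\in[0,1)$ is a worthwhile addition the paper omits --- indeed the range $(0,1]$ stated in the proposition should read $[0,1)$, since $\eta'<1$ is exactly what membership in $\mathbb{C}(\eta',\omega')$ requires.
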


\begin{proof}Let $x\in\mathbb{R}^d$. Then 
$\mathbb{E}\Big[\big\|\lambda\mathcal{C}(x)-\mathbb{E}[\lambda\mathcal{C}(x)]\big\|^2\Big]  = \lambda^2\mathbb{E}\Big[\big\|\mathcal{C}(x)-\mathbb{E}[\mathcal{C}(x)]\big\|^2\Big] \leq  \lambda^2\omega\|x\|^2,
$ 
 and 
$\big\| \mathbb{E}[\lambda\mathcal{C}(x)]-x\big\| 
 \leq  \lambda
\big\| \mathbb{E}[\mathcal{C}(x)]-x\big\|+(1-\lambda) \|x\| \leq  (\lambda\eta+1-\lambda)\|x\| .
$
\end{proof}

So, scaling deteriorates the bias, with $\eta'\geq \eta$, but linearly, whereas it reduces the variance $\omega$ quadratically. This is key, since 
the total error factor $(\eta')^2+\omega'$ can be made smaller than 1 by choosing $\lambda$ sufficiently small:
\begin{proposition}
\label{propsmall}
Let $\mathcal{C}\in\mathbb{C}(\eta,\omega)$, for some  $\eta \in [0,1)$ and $\omega\geq 0$. There exists $\lambda\in(0,1]$ such that 
$\lambda\mathcal{C}\in \mathbb{B}(\alpha)$, for some $\alpha = 1- (1-\lambda+\lambda\eta)^2-{\lambda}^2\omega \in (0,1]$, and the best such $\lambda$, maximizing $\alpha$, is
\begin{equation*}
\lambda^\star=\min\left(\frac{1-\eta}{(1-\eta)^2+\omega},1\right).
\end{equation*}
\end{proposition}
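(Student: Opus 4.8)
The plan is to reduce the statement to minimizing a single-variable convex quadratic. First I would invoke Proposition~\ref{prop3} to record that, for any $\lambda\in(0,1]$, the scaled operator $\lambda\mathcal{C}$ lies in $\mathbb{C}(\eta',\omega')$ with $\eta'=1-\lambda(1-\eta)=1-\lambda+\lambda\eta$ and $\omega'=\lambda^2\omega$. Then, by the third property recorded in Section~\ref{sec23} (namely, that $\lambda\mathcal{C}\in\mathbb{C}(\eta',\omega')$ with $(\eta')^2+\omega'<1$ forces $\lambda\mathcal{C}\in\mathbb{B}(\alpha)$ with $1-\alpha=(\eta')^2+\omega'$), membership of $\lambda\mathcal{C}$ in $\mathbb{B}(\alpha)$ with
\[
\alpha=1-(1-\lambda+\lambda\eta)^2-\lambda^2\omega
\]
is immediate, \emph{provided} the right-hand side is positive; this already produces the displayed form of $\alpha$. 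Maximizing $\alpha$ over $\lambda$ is therefore the same as minimizing $g(\lambda):=(1-\lambda+\lambda\eta)^2+\lambda^2\omega$ on $(0,1]$.

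To carry out this minimization, I would set $a:=1-\eta\in(0,1]$ so that $1-\lambda+\lambda\eta=1-\lambda a$ and hence $g(\lambda)=(1-\lambda a)^2+\lambda^2\omega=1-2a\lambda+(a^2+\omega)\lambda^2$. This is a convex parabola in $\lambda$ (its leading coefficient $a^2+\omega$ is strictly positive because $a>0$), so its unique unconstrained minimizer is the vertex $\lambda=\frac{a}{a^2+\omega}=\frac{1-\eta}{(1-\eta)^2+\omega}$, which is positive. Since $g$ is increasing to the right of its vertex, the minimizer over $(0,1]$ is obtained by clipping: if the vertex does not exceed $1$ it is the optimum, and otherwise (which happens exactly when $\omega<\eta(1-\eta)$) the optimum is the endpoint $\lambda=1$. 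This yields $\lambda^\star=\min\big(\frac{1-\eta}{(1-\eta)^2+\omega},\,1\big)$.

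It remains to confirm that the resulting $\alpha$ genuinely lies in $(0,1]$, which also retroactively justifies having applied the bias--variance implication. In the interior case, substituting the vertex back gives $g(\lambda^\star)=\omega/\big((1-\eta)^2+\omega\big)$, so that $\alpha=(1-\eta)^2/\big((1-\eta)^2+\omega\big)$; since $1-\eta>0$ this is strictly positive, and it is at most $1$, with equality iff $\omega=0$. In the clipped case $\lambda^\star=1$ one has $g(1)=\eta^2+\omega$, and the condition $\omega<\eta(1-\eta)$ forces $\eta^2+\omega<\eta<1$, so again $\alpha=1-\eta^2-\omega\in(0,1]$. I expect the only genuinely delicate point to be the boundary bookkeeping --- correctly deciding when the vertex falls outside $(0,1]$ and verifying positivity of $\alpha$ in that clipped regime --- rather than the algebra itself, which is routine one-dimensional calculus.
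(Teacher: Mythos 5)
Your proof is correct and follows essentially the same route as the paper: reduce to minimizing the convex quadratic $P(\lambda)=(1-\lambda+\lambda\eta)^2+\lambda^2\omega$ on $(0,1]$, take the vertex from the first-order condition, and clip at $1$. Your explicit verification that $\alpha\in(0,1]$ in both the interior and clipped regimes is slightly more detailed than the paper, which instead deduces $P(\lambda^\star)<1$ directly from $P(0)=1$ and $P'(0)=\eta-1<0$, but the argument is the same in substance.
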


\begin{proof}
We define the polynomial $P:\lambda\mapsto(1-\lambda+\lambda\eta)^2+\lambda^2\omega$. 
After Proposition~\ref{prop3} and the discussion in Sect.~\ref{sec23}, we have to find $\lambda\in(0,1]$ such that $P(\lambda)<1$. Then 
$\lambda\mathcal{C}\in \mathbb{B}(\alpha)$, with $1-\alpha=P(\lambda)$. Since $P$ is a strictly convex quadratic function on $[0,1]$ with value $1$ and negative derivative $\eta-1$ at $\lambda=0$,  its minimum value on $[0,1]$ is smaller than 1 and is attained at $\lambda^\star$, which either satisfies the first-order condition $0 = P'(\lambda) = -2(1-\eta) +2\lambda\big((1-\eta)^2 + \omega\big)$, or, if this value is larger than 1, is equal to 1.
\end{proof}

In particular, if $\eta=0$, Proposition~\ref{propsmall} recovers Lemma 8 of \citet{ric21}, according to which, for  $\mathcal{C}\in\mathbb{U}(\omega)$, $\lambda^\star \mathcal{C}\in \mathbb{B}(\frac{1}{\omega+1})$, with $\lambda^\star=\frac{1}{\omega+1}$. 
For instance, the scaled \texttt{rand-}$k$ compressor, 
which keeps $k$ elements chosen uniformly at random unchanged and sets the other elements to 0, corresponds to scaling the unbiased \texttt{rand-}$k$ compressor, seen in Sect.~\ref{secun}, by $\lambda=\frac{k}{d}$.

We can remark that scaling is used to mitigate the randomness of a compressor, but cannot be used to reduce its bias: if $\omega=0$, $\lambda^\star=1$.

Our new algorithm \algname{EF-BV} will have two scaling parameters: $\lambda$, to mitigate the compression error in the control variates used for variance reduction, just like above, and $\nu$, to mitigate the error in the stochastic gradient estimate, in a similar way but with $\omega$ replaced by $\oma$, since we have seen in Sect.~\ref{secavv} that $\oma$ characterizes the randomness after averaging 
the outputs of several compressors.

\section{Proposed algorithm \algname{EF-BV}}

We propose the algorithm \algname{EF-BV}, shown in Fig.~\ref{fig1}. It makes use of compressors $\mathcal{C}_i^t \in \mathbb{C}(\eta,\omega)$, for some  $\eta \in [0,1)$ and $\omega\geq 0$, and we introduce $\oma\leq \omega$ such that \eqref{eqbo} is satisfied. That is, for any $x\in\mathbb{R}^d$, the $\mathcal{C}_i^t(x)$, for $i\in\mathcal{I}_n$ and $t\geq 0$, are distinct random variables; their laws might be the same or not, but they all lie in the class $\mathbb{C}(\eta,\omega)$. Also, $\mathcal{C}_i^t(x)$ and $\mathcal{C}_{i'}^{t'}(x')$, for $t\neq t'$, are independent. 

The compressors have the property that if their input is the zero vector, the compression error is zero, so we want to compress vectors that are close to zero, or at least converge to zero, to make the method variance-reduced. That is why each worker maintains a control variate $h_i^t$, converging, like $\nabla f_i(x^t)$, to $\nabla f_i(x^\star)$, for 
some solution $x^\star$. This way, the difference vectors $\nabla f_i(x^t)-h_i^t$ converge to zero, and these are the vectors that are going to be compressed. 
Thus,  \algname{EF-BV} takes the form of Distributed proximal \algname{SGD}, with $$g_i^t = h_i^t + \nu  \mathcal{C}_i^t\big(\nabla f_i(x^t)-h_i^t\big),$$ where the scaling parameter $\nu$ will be used to make the compression error, averaged over $i$, small; that is, to make $g^{t+1}=\frac{1}{n}\sum_{i=1}^n g_i^t $ close to $\nabla f(x^t)$. 
In parallel, the control variates are updated similarly as $$h_i^{t+1}= h_i^t + \lambda \mathcal{C}_i^t\big(\nabla f_i(x^t)-h_i^t\big),$$ where the scaling parameter $\lambda$ is used to make the compression error small, individually for each $i$; that is, to make $h_i^{t+1}$ close to $\nabla f_i(x^t)$.

\begin{figure*}[t]
\begin{minipage}{.31\textwidth}
	\begin{algorithm}[H]
	\scalefont{0.9}
		\caption{\algname{EF-BV} \\ proposed method}
		\begin{algorithmic}
			\STATE
			\noindent \textbf{Input:} $x^0, h_1^0, \dots, h_n^0 \in \mathbb{R}^d$,  
			$h^0= \frac{1}{n}\sum_{i=1}^n h_i^0$, $\gamma>0$, \\
			 {\color{blue}$\lambda\in(0,1]$}, {\color{red}$\nu\in(0,1]$}
			\FOR{$t=0, 1, \ldots$}
			\FOR{$i=1, 2, \ldots, n$ in parallel}
			\STATE $d_i^t \coloneqq \mathcal{C}_i^t\big(\nabla f_i(x^t)-h_i^t\big)$
			\STATE $h_i^{t+1} \coloneqq h_i^t + {\color{blue}\lambda} d_i^t$
			\STATE send $d_i^t$ to master
			\ENDFOR
			\STATE at master:
			\STATE $d^t \coloneqq \frac{1}{n}\sum_{i=1}^n d_i^t$
			\STATE $h^{t+1} \coloneqq h^t + {\color{blue}\lambda} d^t$
			\STATE $g^{t+1} \coloneqq h^t + {\color{red}\nu} d^t$ 
			\STATE $x^{t+1}\!\coloneqq\!\mathrm{prox}_{\gamma R}(x^t - \gamma g^{t+1})$%
			\STATE broadcast $x^{t+1}$ to all workers
			\ENDFOR
		\end{algorithmic}
	\end{algorithm}
	\end{minipage}
	\ \ \ \ \ \ \begin{minipage}{.31\textwidth}
	\begin{algorithm}[H]
	\scalefont{0.9}
		\caption{\algname{EF21} \\ \citep{ric21}}
		\begin{algorithmic}
			\STATE
			\noindent \textbf{Input:} $x^0, h_1^0, \dots, h_n^0 \in \mathbb{R}^d$,  
			$h^0= \frac{1}{n}\sum_{i=1}^n h_i^0$,  $\gamma>0$, \\
			\phantom{XXX}
			\FOR{$t=0, 1, \ldots$}
			\FOR{$i=1, 2, \ldots, n$ in parallel}
			\STATE $d_i^t \coloneqq \mathcal{C}_i^t\big(\nabla f_i(x^t)-h_i^t\big)$
			\STATE $h_i^{t+1} \coloneqq h_i^t +  d_i^t$
			\STATE send $d_i^t$ to master
			\ENDFOR
			\STATE at master:
			\STATE $d^t \coloneqq \frac{1}{n}\sum_{i=1}^n d_i^t$
			\STATE $h^{t+1} \coloneqq h^t +  d^t$
			\STATE $g^{t+1} \coloneqq h^t +  d^t$ 
			\STATE $x^{t+1}\!\coloneqq\!\mathrm{prox}_{\gamma R}(x^t - \gamma g^{t+1})$%
			\STATE broadcast $x^{t+1}$ to all workers
			\ENDFOR
		\end{algorithmic}
	\end{algorithm}
	\end{minipage}
	\ \ \ \ \ \ \begin{minipage}{.31\textwidth}
	\begin{algorithm}[H]
	\scalefont{0.9}
		\caption{\algname{DIANA} \\ \citep{mis19}}
		\begin{algorithmic}
			\STATE
			\noindent \textbf{Input:}  $x^0, h_1^0, \dots, h_n^0 \in \mathbb{R}^d$,  $h^0= \frac{1}{n}\sum_{i=1}^n h_i^0$, 
			  $\gamma>0$, \\
			{\color{blue}$\lambda\in(0,1]$}
			\FOR{$t=0, 1, \ldots$}
			\FOR{$i=1, 2, \ldots, n$ in parallel}
			\STATE $d_i^t \coloneqq \mathcal{C}_i^t\big(\nabla f_i(x^t)-h_i^t\big)$
			\STATE $h_i^{t+1} \coloneqq h_i^t + {\color{blue}\lambda} d_i^t$
			\STATE send $d_i^t$ to master
			\ENDFOR
			\STATE at master:
			\STATE $d^t \coloneqq \frac{1}{n}\sum_{i=1}^n d_i^t$
			\STATE $h^{t+1} \coloneqq h^t + {\color{blue}\lambda} d^t$ 			
			\STATE $g^{t+1} \coloneqq h^t +  d^t$ 
			\STATE $x^{t+1}\!\coloneqq\!\mathrm{prox}_{\gamma R}(x^t - \gamma g^{t+1})$%
			\STATE broadcast $x^{t+1}$ to all workers
			\ENDFOR
		\end{algorithmic}
	\end{algorithm}
	\end{minipage}
	\caption{\label{fig1}In the three algorithms, $g^{t+1}$ is an estimate of $\nabla f(x^t)$, the $h_i^t$ are control variates converging to 
	$\nabla f_i(x^\star)$, and their average $h^t= \frac{1}{n}\sum_{i=1}^n h_i^t$ is maintained and updated by the master. 
	\algname{EF21} is a particular case of \algname{EF-BV}, when $\nu=\lambda=1$ and the compressors are in $\mathbb{B}(\alpha)$; 	then $g^{t+1}$ is simply equal to $h^{t+1}$ for every $t\geq 0$. 
	\algname{DIANA} is a particular case of \algname{EF-BV}, when $\nu=1$ and  the compressors are in $\mathbb{U}(\omega)$; then $g^{t}$ is an unbiased estimate of $\nabla f(x^t)$.}
	\end{figure*}

	\subsection{\algname{EF21} as a particular case of \algname{EF-BV}}\label{sec31}
	
	There are two ways to recover \algname{EF21} as a particular case of \algname{EF-BV}:
	
\begin{enumerate}
\item If the compressors $\mathcal{C}_i^t$ are in $\mathbb{B}(\alpha)$, for some $\alpha\in (0,1]$, there is no need for scaling the compressors, and we can use \algname{EF-BV} with $\lambda=\nu=1$. Then the variable $h^t$ in \algname{EF-BV} becomes redundant with the gradient estimate $g^t$ and we can only keep the latter, which yields  \algname{EF21}, as shown in Fig.~\ref{fig1}.
	
\item  If the scaled compressors $\lambda\mathcal{C}_i^t$ are in $\mathbb{B}(\alpha)$, for some $\alpha\in (0,1]$ and $\lambda\in (0,1)$
	(see Proposition~\ref{propsmall}), one  can simply 	use these scaled compressors in \algname{EF21}. This is equivalent to using  \algname{EF-BV} with the original compressors $\mathcal{C}_i^t$, the scaling with $\lambda$ taking place inside the algorithm. But we must have $\nu=\lambda$ for this equivalence to hold.
\end{enumerate}	
	
	Therefore, we consider thereafter that \algname{EF21} corresponds to the particular case of  \algname{EF-BV} with $\nu=\lambda \in (0,1]$ and $\lambda\mathcal{C}_i^t \in \mathbb{B}(\alpha)$, for some $\alpha\in (0,1]$, and is not only the original algorithm shown in Fig.~\ref{fig1},  which has no scaling parameter (but scaling might have been applied beforehand to make the compressors in  $\mathbb{B}(\alpha)$).

	\subsection{\algname{DIANA} as a particular case of \algname{EF-BV}}\label{sec32}
	
	\algname{EF-BV} with $\nu=1$  yields exactly  \algname{DIANA}, as shown in Fig.~\ref{fig1}.  \algname{DIANA} was only studied with unbiased compressors $\mathcal{C}_i^t\in\mathbb{U}(\omega)$, for some $\omega\geq 0$. In that case, $\Exp{g^{t+1}}=\nabla f(x^t)$, so that $g^{t+1}$ is an unbiased stochastic gradient estimate; this is not the case in \algname{EF21} and \algname{EF-BV}, in general. Also, 
	$\lambda=\frac{1}{1+\omega}$ is the usual choice in  \algname{DIANA}, which is consistent with Proposition~\ref{propsmall}.

\section{Linear convergence results}\label{sec5}

We will prove linear convergence of \algname{EF-BV} under conditions weaker than strong convexity of $f+R$.

When $R=0$, we will consider the  Polyak--{\L}ojasiewicz (P{\L}) condition on $f$: $f$ is said to satisfy the  P{\L}  condition with constant $\mu>0$ if, 
 for every $x\in\mathbb{R}^d$,
$\|\nabla f(x)\|^2\geq 2\mu\big(f(x)-f^\star\big)$,
where $f^\star = f(x^\star)$, for any minimizer $x^\star$ of $f$. This holds if, for instance, $f$ is $\mu$-strongly convex; that is, $f-\frac{\mu}{2}\|\cdot\|^2$ is convex.  In the general case, we will consider the  Kurdyka--{\L}ojasiewicz (K{\L}) condition with exponent $1/2$ \citep{att09,kar16} on $f+R$: $f+R$ is said to satisfy the  K{\L}  condition with constant $\mu>0$ if,  for every $x\in\mathbb{R}^d$ and $u\in  \partial R(x)$, 
\begin{equation}
\|\nabla f(x)+u\|^2\geq 2\mu\big(f(x)+R(x)-f^\star-R^\star\big),\label{eqKL}
\end{equation} 
where $f^\star = f(x^\star)$ and $R^\star= R(x^\star)$, for any minimizer $x^\star$ of $f+R$. This holds if, for instance, $R=0$ and $f$ satisfies the P{\L}  condition with constant $\mu$, so that the K{\L}  condition generalizes the P{\L}  condition to the general case $R\neq 0$. The K{\L}  condition also holds if $f+R$ is $\mu$-strongly convex \citep{kar16}, for which it is sufficient that $f$ is $\mu$-strongly convex, or $R$ is $\mu$-strongly convex.

In the rest of this section, we assume that $\mathcal{C}_i^t \in \mathbb{C}(\eta,\omega)$, for some  $\eta \in [0,1)$ and $\omega\geq 0$, and we introduce $\oma\leq \omega$ such that \eqref{eqbo} is satisfied. 
According to the discussion in Sect.~\ref{secsca} (see also Remark~\ref{rem1} below), we define the optimal values for the scaling parameters $\lambda$ and $\nu$:
\begin{align*}
\lambda^\star\eqdef\min\left(\frac{1-\eta}{(1-\eta)^2+\omega},1\right), \qquad \nu^\star\eqdef\min\left(\frac{1-\eta}{(1-\eta)^2+\oma},1\right).
\end{align*}
Given $\lambda \in (0,1]$ and $\nu \in (0,1]$, 
we define for convenience
$r \eqdef (1-\lambda+\lambda\eta)^2+{\lambda}^2\omega$, $r_{\mathrm{av}} \eqdef (1-\nu+\nu\eta)^2+{\nu}^2\oma$, 
as well as
$s^\star \eqdef \sqrt{\frac{1+r}{2r}}-1$ and $\theta^\star \eqdef s^\star(1+s^\star)\frac{r}{r_{\mathrm{av}}}$.

Note that if $r<1$, according to Proposition~\ref{prop3} and \eqref{eqalpha}, $\lambda \mathcal{C}_i^t \in \mathbb{B}(\alpha)$, with $\alpha=1-r$.

Our linear convergence results for \algname{EF-BV} are the following: 

\begin{theorem}\label{theo1}Suppose that $R=0$ and $f$ satisfies the P{\L}  condition with some constant  $\mu>0$. 
In \algname{EF-BV}, suppose that $\nu \in (0,1]$, $\lambda \in (0,1]$ is such that $r<1$, and 
\begin{equation}
0<\gamma \leq \frac{1}{L+\tilde{L}\sqrt{\frac{r_{\mathrm{av}}}{r}}\frac{1}{s^\star}}.\label{equpb1}
\end{equation}
For every $t\geq 0$, define the Lyapunov function
$\displaystyle\Psi^t \eqdef f(x^t)-f^\star + \frac{\gamma}{2\theta^\star}  \frac{1}{n}\sum_{i=1}^n \sqnorm{\nabla f_i(x^t)-h_i^{t}}$, 
where $f^\star \eqdef f(x^\star)$, for any minimizer $x^\star$ of $f$. 
Then, for every $t\geq 0$,
\begin{align}
\Exp{\Psi^{t}} 
&\leq \left(\max\left(1-\gamma\mu, {\frac{r+1}{2}}\right) \right)^t\Psi^0.\label{eqsdgerg}
\end{align}
\end{theorem}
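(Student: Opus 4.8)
The plan is to run a Lyapunov analysis combining a descent inequality for $f(x^t)-f^\star$ with a contraction recursion for the control-variate error $G^t\eqdef\frac{1}{n}\sum_{i=1}^n\sqnorm{\nabla f_i(x^t)-h_i^t}$, matching the stated $\Psi^t=(f(x^t)-f^\star)+\frac{\gamma}{2\theta^\star}G^t$. Throughout I write $\mathbb{E}_t$ for the expectation conditioned on $x^t$ and the $h_i^t$, and set $e_i^t\eqdef\nabla f_i(x^t)-h_i^t$, so that $d_i^t=\mathcal{C}_i^t(e_i^t)$.

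First I would establish the descent step. Since $R=0$, we have $x^{t+1}-x^t=-\gamma g^{t+1}$, and $L$-smoothness of $f$ together with the polarization identity gives
\[
f(x^{t+1})\leq f(x^t)-\tfrac{\gamma}{2}\sqnorm{\nabla f(x^t)}-\tfrac{\gamma}{2}(1-L\gamma)\sqnorm{g^{t+1}}+\tfrac{\gamma}{2}\sqnorm{g^{t+1}-\nabla f(x^t)}.
\]
The crucial point is to \emph{keep} the $\sqnorm{g^{t+1}}$ term, since the control-variate recursion will produce a matching positive term. The error term is handled by a bias--variance decomposition: after taking $\mathbb{E}_t$, the variance part is bounded by applying the defining inequality \eqref{eqbo} of $\oma$ to the inputs $e_i^t$, giving $\nu^2\oma\,G^t$, while the bias part is bounded by $(1-\nu+\nu\eta)^2 G^t$ using property (i) of $\mathbb{C}(\eta,\omega)$ together with Jensen's inequality, so that $\mathbb{E}_t\sqnorm{g^{t+1}-\nabla f(x^t)}\leq r_{\mathrm{av}}G^t$.

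Next I would derive the recursion for $G^{t+1}$. Writing $\nabla f_i(x^{t+1})-h_i^{t+1}=\bigl(e_i^t-\lambda\mathcal{C}_i^t(e_i^t)\bigr)+\bigl(\nabla f_i(x^{t+1})-\nabla f_i(x^t)\bigr)$ and applying Young's inequality $\sqnorm{a+b}\leq(1+s)\sqnorm{a}+(1+\tfrac1s)\sqnorm{b}$ decouples the compression residual from the gradient drift. This is the main obstacle: the displacement $x^{t+1}-x^t$ depends on the very same compressors $\mathcal{C}_i^t$ appearing in $h_i^{t+1}$, so the two pieces cannot be integrated independently, and the Young split is exactly what sidesteps this correlation. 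The residual obeys $\mathbb{E}_t\sqnorm{e_i^t-\lambda\mathcal{C}_i^t(e_i^t)}\leq r\sqnorm{e_i^t}$ because $\lambda\mathcal{C}_i^t\in\mathbb{B}(1-r)$ by Proposition~\ref{prop3} and \eqref{eqalpha}, while the drift satisfies $\sqnorm{\nabla f_i(x^{t+1})-\nabla f_i(x^t)}\leq L_i^2\gamma^2\sqnorm{g^{t+1}}$ by $L_i$-smoothness; averaging over $i$ turns $\frac1n\sum_i L_i^2$ into $\tilde L^2$, yielding
\[
\mathbb{E}_t[G^{t+1}]\leq(1+s)r\,G^t+(1+\tfrac1s)\tilde L^2\gamma^2\,\mathbb{E}_t\sqnorm{g^{t+1}}.
\]

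Finally I would assemble $\mathbb{E}_t[\Psi^{t+1}]$ and take $s=s^\star$. Collecting the $\mathbb{E}_t\sqnorm{g^{t+1}}$ contributions, their coefficient is $\leq0$ precisely when $L\gamma+\tfrac{r_{\mathrm{av}}}{(s^\star)^2r}\tilde L^2\gamma^2\leq1$, and the stepsize bound \eqref{equpb1} guarantees this, since $A\gamma+B^2\gamma^2\leq(A+B)\gamma\leq1$ whenever $\gamma\leq\tfrac{1}{A+B}$, with $A=L$ and $B=\tfrac{\tilde L}{s^\star}\sqrt{r_{\mathrm{av}}/r}$ (using $B\gamma\leq1$). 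The definitions of $s^\star$ and $\theta^\star$ are engineered so that the remaining $G^t$ coefficient equals $\tfrac{1+r}{2}\cdot\tfrac{\gamma}{2\theta^\star}$: indeed $r_{\mathrm{av}}\theta^\star+(1+s^\star)r=(1+s^\star)^2r=\tfrac{1+r}{2}$. Applying the P{\L} condition to replace $-\tfrac{\gamma}{2}\sqnorm{\nabla f(x^t)}$ by $-\gamma\mu\bigl(f(x^t)-f^\star\bigr)$ then gives $\mathbb{E}_t[\Psi^{t+1}]\leq\max\bigl(1-\gamma\mu,\tfrac{1+r}{2}\bigr)\Psi^t$, and taking total expectations and iterating yields \eqref{eqsdgerg}.
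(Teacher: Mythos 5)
Your proposal is correct and follows essentially the same route as the paper's proof: the same descent inequality (your form with $-\frac{\gamma}{2}(1-L\gamma)\sqnorm{g^{t+1}}$ is just the $R=0$ specialization of the paper's $(\frac{L}{2}-\frac{1}{2\gamma})\sqnorm{x^{t+1}-x^t}$ term), the same bias--variance bound $\mathbb{E}_t\sqnorm{g^{t+1}-\nabla f(x^t)}\leq r_{\mathrm{av}}G^t$, the same Young-split recursion for $G^{t+1}$ with the identity $\theta^\star r_{\mathrm{av}}+(1+s^\star)r=(1+s^\star)^2r=\frac{1+r}{2}$, and the same stepsize argument (your $A\gamma+B^2\gamma^2\leq(A+B)\gamma$ observation is exactly Lemma 5 of \citet{ric21}, which the paper invokes). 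No gaps.
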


\begin{theorem}\label{theo2}
Suppose that $f+R$ satisfies the  the K{\L}  condition with some constant $\mu>0$. 
In \algname{EF-BV}, suppose that $\nu \in (0,1]$, $\lambda \in (0,1]$ is such that $r<1$, and
\begin{equation}
0<\gamma \leq \frac{1}{2L+\tilde{L}\sqrt{\frac{r_{\mathrm{av}}}{r}}\frac{1}{s^\star}}.\label{equpb2}
\end{equation}
 $\forall t\geq 0$, define the Lyapunov function
$\displaystyle\Psi^t \eqdef f(x^t)+R(x^t)-f^\star - R^\star  + \frac{\gamma}{2\theta^\star}  \frac{1}{n}\sum_{i=1}^n \sqnorm{\nabla f_i(x^t)-h_i^{t}}$, 
where $f^\star \eqdef f(x^\star)$ and $R^\star \eqdef R(x^\star)$, for any minimizer $x^\star$ of $f+R$. 
Then, for every $t\geq 0$,
\begin{align}
\Exp{\Psi^{t}}  &\leq \left(\max\left({\frac{1}{1+\frac{1}{2}\gamma\mu}},\frac{r+1}{2}\right)\right)^t\Psi^0.\label{eqsdgerg2}
\end{align}
\end{theorem}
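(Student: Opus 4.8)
The plan is to establish a one-step contraction of the Lyapunov function in conditional expectation, $\Exp{\Psi^{t+1}\mid\mathcal F_t}\le\rho\,\Psi^t$ with $\rho=\max\!\big(\tfrac{1}{1+\gamma\mu/2},\tfrac{r+1}{2}\big)$ and $\mathcal F_t$ the $\sigma$-algebra generated by the iterates through step $t$, and then unroll over $t$ using the tower rule and the independence of the compressors across iterations. I would split the Lyapunov function as $\Psi^t=\mathcal E^t+\frac{\gamma}{2\theta^\star}G^t$, where $\mathcal E^t\eqdef f(x^t)+R(x^t)-f^\star-R^\star$ and $G^t\eqdef\frac1n\sum_{i=1}^n\sqnorm{\nabla f_i(x^t)-h_i^t}$, and aim to show that the two summands contract at rates $\frac{1}{1+\gamma\mu/2}$ and $\frac{r+1}{2}$ respectively, with the cross terms cancelling under the stepsize bound \eqref{equpb2}.

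First I would control the gradient-estimate error. Writing $g^{t+1}-\nabla f(x^t)=\frac1n\sum_{i=1}^n\big(\nu\,\mathcal C_i^t(\nabla f_i(x^t)-h_i^t)-(\nabla f_i(x^t)-h_i^t)\big)$ and applying the bias--variance decomposition \eqref{eqbiva}, property (i) together with Cauchy--Schwarz bounds the squared bias by $(1-\nu+\nu\eta)^2G^t$, while the definition \eqref{eqbo} of $\oma$ bounds the variance by $\nu^2\oma\,G^t$; hence $\Exp{\sqnorm{g^{t+1}-\nabla f(x^t)}\mid\mathcal F_t}\le r_{\mathrm{av}}G^t$. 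In parallel I would derive the control-variate recursion: since $\nabla f_i(x^{t+1})-h_i^{t+1}=\big((\nabla f_i(x^t)-h_i^t)-\lambda\mathcal C_i^t(\nabla f_i(x^t)-h_i^t)\big)+\big(\nabla f_i(x^{t+1})-\nabla f_i(x^t)\big)$, the same computation as in Proposition~\ref{prop3}/\eqref{eqalpha} gives $\Exp{\sqnorm{(\nabla f_i(x^t)-h_i^t)-\lambda\mathcal C_i^t(\nabla f_i(x^t)-h_i^t)}\mid\mathcal F_t}\le r\sqnorm{\nabla f_i(x^t)-h_i^t}$, and Young's inequality with a free parameter $s$ followed by $L_i$-smoothness (averaged to $\tilde L$) yields $\Exp{G^{t+1}\mid\mathcal F_t}\le(1+s)r\,G^t+(1+s^{-1})\tilde L^2\,\Exp{\sqnorm{x^{t+1}-x^t}\mid\mathcal F_t}$.

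Next I would produce the descent inequality from $L$-smoothness of $f$ and the optimality of the proximal step: the prox gives a subgradient $u^{t+1}\eqdef\frac{x^t-x^{t+1}}{\gamma}-g^{t+1}\in\partial R(x^{t+1})$, and combining convexity of $R$, the smoothness quadratic bound, and Young's inequality on $\langle\nabla f(x^t)-g^{t+1},x^{t+1}-x^t\rangle$ gives $\Exp{\mathcal E^{t+1}\mid\mathcal F_t}\le\mathcal E^t+\frac\gamma2 r_{\mathrm{av}}G^t-(\frac{1}{2\gamma}-\frac L2)\Exp{\sqnorm{x^{t+1}-x^t}\mid\mathcal F_t}$. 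To extract a contraction on $\mathcal E^t$ I would invoke the K{\L} inequality \eqref{eqKL} at $x^{t+1}$ with $u^{t+1}$: writing $\nabla f(x^{t+1})+u^{t+1}=(\nabla f(x^{t+1})-\nabla f(x^t))-\frac{x^{t+1}-x^t}{\gamma}+(\nabla f(x^t)-g^{t+1})$ and using $\|\nabla f(x^{t+1})-\nabla f(x^t)\|\le L\|x^{t+1}-x^t\|$, the term $\frac{\gamma\mu}{2}\mathcal E^{t+1}$ is upper-bounded by a constant multiple of $\sqnorm{x^{t+1}-x^t}$ (carrying a factor $(\frac1\gamma+L)^2$) plus a constant multiple of the already-controlled $r_{\mathrm{av}}G^t$; adding this to the descent yields $(1+\frac{\gamma\mu}2)\Exp{\mathcal E^{t+1}\mid\mathcal F_t}\le\mathcal E^t+(\text{const})\,r_{\mathrm{av}}G^t-(\text{movement})$, which is the source of the $\frac{1}{1+\gamma\mu/2}$ factor. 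The extra $\nabla f(x^{t+1})-\nabla f(x^t)$ term, absent in the P{\L} case of Theorem~\ref{theo1} where the condition is applied at $x^t$, is precisely what replaces $L$ by $2L$ in \eqref{equpb2}.

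Finally I would assemble $\Psi^{t+1}=\mathcal E^{t+1}+\frac{\gamma}{2\theta^\star}G^{t+1}$ from the two recursions. The choice $s=s^\star=\sqrt{\tfrac{1+r}{2r}}-1$, which satisfies $(1+s^\star)^2r=\frac{1+r}{2}$, together with the weight $\frac{\gamma}{2\theta^\star}$ for $\theta^\star=s^\star(1+s^\star)\frac{r}{r_{\mathrm{av}}}$, is exactly what makes the descent's $\frac\gamma2 r_{\mathrm{av}}$ contribution and the control-variate recursion's $\frac{\gamma}{2\theta^\star}(1+s^\star)r$ contribution to the coefficient of $G^t$ sum to $\frac{1+r}{2}\cdot\frac{\gamma}{2\theta^\star}$; the stepsize bound \eqref{equpb2} then forces the net coefficient of $\Exp{\sqnorm{x^{t+1}-x^t}\mid\mathcal F_t}$ (collecting the descent, the K{\L} expansion, and the $(1+s^{-1})\tilde L^2$ term of the recursion) to be nonpositive, by the same quadratic-in-$\gamma$ argument that validates \eqref{equpb1}. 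Taking the larger of $\frac{1}{1+\gamma\mu/2}$ and $\frac{1+r}{2}$ gives the one-step contraction, and iterating yields \eqref{eqsdgerg2}. The main obstacle is the coupling in the third step: the single budget of $\sqnorm{x^{t+1}-x^t}$ must simultaneously absorb the $\tilde L^2$ movement created by the control-variate recursion and supply the K{\L} bound at $x^{t+1}$, and it is the careful accounting of the $(\frac1\gamma+L)^2$ factor there, together with verifying that the calibrated $s^\star$ and $\theta^\star$ collapse the $G^t$ rate to exactly $\frac{1+r}{2}$, that carries the weight of the proof.
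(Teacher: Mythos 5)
Your overall architecture matches the paper's: the same splitting $\Psi^t=\mathcal{E}^t+\frac{\gamma}{2\theta^\star}G^t$ (with $G^t\eqdef\frac1n\sum_{i=1}^n\sqnorm{\nabla f_i(x^t)-h_i^t}$), the same two conditional bounds $\Exp{\sqnorm{g^{t+1}-\nabla f(x^t)}}\le r_{\mathrm{av}}G^t$ and $\Exp{G^{t+1}}\le(1+s)r\,G^t+(1+s^{-1})\tilde L^2\Exp{\sqnorm{x^{t+1}-x^t}}$, the same calibration $(1+s^\star)^2r=\frac{1+r}{2}$, and the correct identification of the shift $\nabla f(x^t)\to\nabla f(x^{t+1})$ as the source of the $2L$. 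The gap is in how you extract the K{\L} contraction. Your descent inequality is obtained via Young's inequality on $\langle\nabla f(x^t)-g^{t+1},x^{t+1}-x^t\rangle$, which discards a crucial negative term: the exact identity $-\gamma\langle\nabla f(x^t)+u^{t+1},\,g^{t+1}+u^{t+1}\rangle=\frac{\gamma}{2}\sqnorm{\nabla f(x^t)-g^{t+1}}-\frac{\gamma}{2}\sqnorm{\nabla f(x^t)+u^{t+1}}-\frac{1}{2\gamma}\sqnorm{x^{t+1}-x^t}$ leaves $-\frac{\gamma}{2}\sqnorm{\nabla f(x^t)+u^{t+1}}$ sitting on the right-hand side of the descent, and it is this term --- shifted to $x^{t+1}$ at the mild cost $\frac{\gamma L^2}{2}\sqnorm{x^{t+1}-x^t}$ via $-\|a\|^2\le-\frac12\|a'\|^2+\|a-a'\|^2$, then fed into \eqref{eqKL} --- that yields $-\frac{\gamma\mu}{2}\mathcal{E}^{t+1}$ at no further cost and explains why only $\gamma\le 1/L$ is needed to turn $L+\gamma L^2$ into $2L$.

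Having discarded that term, you instead bound $\frac{\gamma\mu}{2}\mathcal{E}^{t+1}\le\frac{\gamma}{4}\sqnorm{\nabla f(x^{t+1})+u^{t+1}}$ and expand the right-hand side by the triangle inequality, reintroducing $\frac{x^{t+1}-x^t}{\gamma}$ and $\nabla f(x^t)-g^{t+1}$ as \emph{positive} contributions. Quantitatively this fails to reach the stated constants: the piece $\frac{\gamma}{4}\cdot\frac{1}{\gamma^2}\sqnorm{x^{t+1}-x^t}=\frac{1}{4\gamma}\sqnorm{x^{t+1}-x^t}$ already consumes half of the $-\frac{1}{2\gamma}\sqnorm{x^{t+1}-x^t}$ budget supplied by the descent, and more once the Peter--Paul factor $(1+\epsilon)$ needed to separate it from $\nabla f(x^t)-g^{t+1}$ is included (at $\epsilon=1$ the budget is wiped out entirely); meanwhile the $\nabla f(x^t)-g^{t+1}$ piece adds $\frac{\gamma(1+\epsilon^{-1})}{4}r_{\mathrm{av}}G^t$ on top of the descent's $\frac{\gamma}{2}r_{\mathrm{av}}G^t$. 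The first effect at best halves the admissible stepsize relative to \eqref{equpb2}; the second destroys the exact cancellation $\theta^\star r_{\mathrm{av}}+(1+s^\star)r=(1+s^\star)^2r=\frac{1+r}{2}$ that produces the stated rate with the stated weight $\frac{\gamma}{2\theta^\star}$. Your route would still give linear convergence, but with strictly worse constants than claimed; to prove the theorem as written you must keep the negative cross term from the exact expansion rather than lose it through Young's inequality.
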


\begin{remark}[choice of $\lambda$, $\nu$, $\gamma$ in \algname{EF-BV}]\label{rem1}
\normalfont In Theorems \ref{theo1} and \ref{theo2}, the rate is better if $r$ is small and $\gamma$ is large. So, we should take $\gamma$ equal to the upper bound in \eqref{equpb1} and \eqref{equpb2}, since there is no reason to choose it smaller. Also, this upper bound is large 
if $r$ and $r_{\mathrm{av}}$ are small. As discussed in Sect.~\ref{secsca}, $r$ and $r_{\mathrm{av}}$ are minimized with $\lambda=\lambda^\star$ and $\nu=\nu^\star$ (which implies that $ r_{\mathrm{av}}\leq r<1$), so this is the recommended choice. Also, with this choice of $\lambda$, $\nu$, $\gamma$, there is no parameter left to tune in the algorithm, which is a nice feature.
\end{remark}

\begin{remark}[low noise regime]\label{rem2}
When the compression error tends to zero, i.e.\ $\eta$ and $\omega$ tend to zero, and we use accordingly $\lambda \rightarrow 1$, $\nu \rightarrow 1$, such that $r_{\mathrm{av}}/r$ remains bounded, 
 then $\mathcal{C}_i^t\rightarrow\mathrm{Id}$, $r\rightarrow 0$, and $\frac{1}{s^\star}\rightarrow 0$. Hence, \algname{EF-BV} reverts to proximal gradient descent $x^{t+1} = \mathrm{prox}_{\gamma R} \big(x^t -\nabla f(x^t)\big)$. 
\end{remark}

\begin{remark}[high noise regime]\label{rem3}
When the compression error becomes large,  i.e.\ $\eta\rightarrow 1$ or $\omega\rightarrow +\infty$, then $r\rightarrow 1$ and $\frac{1}{s^\star}\sim \frac{4}{1-r}$. Hence, the asymptotic complexity of  \algname{EF-BV} to achieve $\epsilon$-accuracy, when $\gamma=\Theta\Big(\frac{1}{L+\tilde{L}\sqrt{\frac{r_{\mathrm{av}}}{r}}\frac{1}{s^\star}}\Big)$, is
\begin{equation}
\mathcal{O}\left(\left(\frac{L}{\mu}+\left(\frac{\tilde{L}}{\mu}\sqrt{\frac{r_{\mathrm{av}}}{r}}+1\right)\frac{1}{1-r}\right)
\log \frac{1}{\epsilon}\right).\label{eqasy1}
\end{equation}
\end{remark}

\subsection{Implications for \algname{EF21}}

Let us assume that $\nu=\lambda$, so that  \algname{EF-BV} reverts to  \algname{EF21}, as explained in Sect.~\ref{sec31}. 
Then, if we don't assume the prior knowledge of $\oma$, or equivalently if $\oma=\omega$, Theorem~\ref{theo1} with $r=r_{\mathrm{av}}$ recovers  the linear convergence result of  \algname{EF21} due to \citet{ric21}, up to slightly different constants.

However,  in these same conditions, Theorem~\ref{theo2} is new:  linear convergence of \algname{EF21} with  $R\neq 0$ was only shown in Theorem 13 of \citet{fat21}, under the assumption that there exists $\mu>0$, such that
for every $x\in\mathbb{R}^d$, 
$\frac{1}{\gamma^2} \sqnorm{x-\mathrm{prox}_{\gamma R}\big(x-\gamma \nabla f(x)
\big)} \geq 2\mu\big(f(x)+R(x)-f^\star-R^\star\big)$. 
This condition 
generalizes the P{\L}  condition, since it reverts to it when $R=0$, but it is different from the K{\L}  condition, and it is not clear when it is satisfied, in particular whether it is implied by strong convexity of $f+R$.

The asymptotic complexity  to achieve $\epsilon$-accuracy of \algname{EF21} with 
$\gamma=\Theta\big(\frac{1}{L+\tilde{L}/s^\star}\big)$ is
$\mathcal{O}\big(\frac{\tilde{L}}{\mu}\frac{1}{1-r}
\log \frac{1}{\epsilon}\big)$ 
(where we recall that $1-r=\alpha$, with the scaled compressors in $\mathbb{B}(\alpha)$).
Thus,  for a given problem and compressors, the improvement of  \algname{EF-BV} over  \algname{EF21} is the factor
$\sqrt{\frac{r_{\mathrm{av}}}{r}}$ in 
\eqref{eqasy1}, which can be small if $n$ is large.

Theorems \ref{theo1} and \ref{theo2} provide a new  insight about  \algname{EF21}: if we exploit the knowledge that $\mathcal{C}_i^t \in \mathbb{C}(\eta,\omega)$ and the corresponding constant $\oma$, and if $\oma<\omega$, then $r_{\mathrm{av}}<r$, so that, based on \eqref{equpb1} and \eqref{equpb2}, $\gamma$ can be chosen larger than with the default assumption  that $r_{\mathrm{av}}=r$. As a consequence, convergence will be faster. This illustrates the interest of our new finer parameterization of compressors with $\eta$, $\omega$, $\oma$. However, it is only half the battle to make use of the factor $\frac{r_{\mathrm{av}}}{r}$ in \algname{EF21}: the property $\oma<\omega$ is only really exploited if $\nu=\nu^\star$ in \algname{EF-BV} (since $r_{\mathrm{av}}$ is minimized this way). In other words, there is no reason to set $\nu=\lambda$ in \algname{EF-BV}, when a larger value of $\nu$ is allowed in Theorems \ref{theo1} and \ref{theo2} and yields faster convergence.

\subsection{Implications for  \algname{DIANA}}

Let us assume that $\nu=1$, so that  \algname{EF-BV} reverts to   \algname{DIANA}, as explained in Sect.~\ref{sec32}. This choice is allowed in Theorems \ref{theo1} and \ref{theo2}, so that they provide new convergence results for  \algname{DIANA}. Assuming that the compressors are unbiased, i.e.\ $\mathcal{C}_i^t\in \mathbb{U}(\omega)$ for some $\omega\geq 0$, we have the following result on   \algname{DIANA} \citep[Theorem 5 with $b=\sqrt{2}$]{con22m}:

\begin{proposition}\label{propdiana}
Suppose that $f$ is $\mu$-strongly convex, for some $\mu>0$, and that in  \algname{DIANA}, $\lambda=\frac{1}{1+\omega}$, 
$0<\gamma \leq \frac{1}{L_{\max}+L_{\max}(1+\sqrt{2})^2\oma}$. For every $t\geq 0$, define the Lyapunov function
$\Phi^t \eqdef \sqnorm{x^t-x^\star} 
+  (2+\sqrt{2})\gamma^2\oma (1+\omega)\frac{1}{n}\sum_{i=1}^n \sqnorm{\nabla f_i(x^\star)-h_i^t}$, 
where $x^\star$ is the minimizer of $f+R$, which exists and is unique. 
Then, for every $t\geq 0$,
$\Exp{\Phi^{t}} 
\leq \left(\max\left(1-\gamma\mu, \frac{\frac{1}{2}+\omega}{1+\omega}\right) \right)^t\Phi^0$.
\end{proposition}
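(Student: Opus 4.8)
Because \algname{DIANA} is \algname{EF-BV} with $\nu=1$ and unbiased compressors $\mathcal{C}_i^t\in\mathbb{U}(\omega)=\mathbb{C}(0,\omega)$, the estimate $g^{t+1}=h^t+d^t$ is conditionally unbiased, $\Exp{g^{t+1}}=\nabla f(x^t)$; this is the property the whole argument hinges on, and the statement is exactly Theorem~5 of \citet{con22m} specialized to $b=\sqrt{2}$, so the plan is to assemble the standard \algname{DIANA} Lyapunov argument and check the constants. I would establish a one-step inequality for each of the two pieces of $\Phi^t$ and then combine them with the stated weight.

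For the iterate term, I would use non-expansiveness of $\mathrm{prox}_{\gamma R}$ together with the fixed-point identity $x^\star=\mathrm{prox}_{\gamma R}(x^\star-\gamma\nabla f(x^\star))$ to get $\sqnorm{x^{t+1}-x^\star}\leq\sqnorm{(x^t-x^\star)-\gamma(g^{t+1}-\nabla f(x^\star))}$, take conditional expectation, and invoke unbiasedness to turn the inner product into $-2\gamma\langle\nabla f(x^t)-\nabla f(x^\star),x^t-x^\star\rangle$. The second moment decomposes (bias--variance) into $\sqnorm{\nabla f(x^t)-\nabla f(x^\star)}$ plus the compression variance, which by \eqref{eqbo} with parameter $\oma$ is at most $\frac{\oma}{n}\sum_i\sqnorm{\nabla f_i(x^t)-h_i^t}$. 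I would split the inner product into two halves: strong convexity on one half yields $-\gamma\mu\sqnorm{x^t-x^\star}$, giving the $1-\gamma\mu$ factor, while $L_{\max}$-co-coercivity of the $f_i$ on the other half yields $-\frac{\gamma}{L_{\max}}\frac{1}{n}\sum_i\sqnorm{\nabla f_i(x^t)-\nabla f_i(x^\star)}$, a negative budget reserved for absorbing gradient residuals.

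For the control-variate term I would write $h_i^{t+1}-\nabla f_i(x^\star)=(1-\lambda)(h_i^t-\nabla f_i(x^\star))+\lambda(\nabla f_i(x^t)-\nabla f_i(x^\star))$ in conditional mean, bound its squared norm by convexity, and add the variance $\lambda^2\omega\sqnorm{\nabla f_i(x^t)-h_i^t}$. With $\lambda=\frac{1}{1+\omega}$ the mean part contracts at $1-\lambda=\frac{\omega}{1+\omega}$; a Young's inequality with the free parameter $b$ (here $b=\sqrt{2}$) splits every occurrence of $\sqnorm{\nabla f_i(x^t)-h_i^t}$, both here and in the variance of the iterate inequality, into a multiple of $\sqnorm{\nabla f_i(x^t)-\nabla f_i(x^\star)}$ and a multiple of $\sqnorm{h_i^t-\nabla f_i(x^\star)}$.

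The combination is the crux. After forming $\Exp{\Phi^{t+1}}$ with the weight $(2+\sqrt{2})\gamma^2\oma(1+\omega)$, the coefficient of $\frac{1}{n}\sum_i\sqnorm{\nabla f_i(x^t)-\nabla f_i(x^\star)}$ must be made nonpositive by the co-coercivity budget $-\frac{\gamma}{L_{\max}}$, which is precisely what the stepsize cap $\gamma\leq\frac{1}{L_{\max}(1+(1+\sqrt{2})^2\oma)}$ guarantees, while the coefficient of the control-variate term must reduce to $\frac{1/2+\omega}{1+\omega}=1-\frac{\lambda}{2}$. The main obstacle is this simultaneous balancing: choosing $b=\sqrt{2}$, the Lyapunov weight, and the stepsize so that both the gradient-residual coefficient vanishes and the two Lyapunov pieces contract at rates whose maximum is the claimed $\max(1-\gamma\mu,\frac{1/2+\omega}{1+\omega})$. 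I would verify these few scalar inequalities rather than re-derive the framework.
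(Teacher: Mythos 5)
First, note that the paper does not prove Proposition~\ref{propdiana} at all: it is imported verbatim from \citet{con22m} (``Theorem 5 with $b=\sqrt{2}$''), so there is no in-paper proof to compare against. Your overall architecture --- nonexpansiveness of the prox plus the fixed point $x^\star=\mathrm{prox}_{\gamma R}(x^\star-\gamma\nabla f(x^\star))$, conditional unbiasedness of $g^{t+1}$, the bound \eqref{eqbo} with $\oma$ for the aggregated variance, strong convexity and co-coercivity splitting the cross term, and a weighted combination with the control-variate recursion --- is the standard \algname{DIANA} analysis and is the right skeleton.

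There is, however, one step in your plan that fails as stated. You propose to keep the variance term $\lambda^2\omega\sqnorm{\nabla f_i(x^t)-h_i^t}$ in the control-variate recursion and absorb it, via Young's inequality, partly into $\sqnorm{h_i^t-\nabla f_i(x^\star)}$. Write $w=(2+\sqrt{2})\gamma^2\oma(1+\omega)$ for the Lyapunov weight and note $w\lambda^2\omega=w\lambda(1-\lambda)$ when $\lambda=\tfrac{1}{1+\omega}$. The coefficient of $\frac1n\sum_i\sqnorm{h_i^t-\nabla f_i(x^\star)}$ in $\Exp{\Phi^{t+1}}$ then contains $w(1-\lambda)+w\lambda(1-\lambda)(1+u^{-1})$ for whatever Young parameter $u>0$ you choose, and the required contraction $w(1-\lambda/2)$ forces $w\lambda(1-\lambda)(1+u^{-1})\leq w\lambda/2$, which is impossible for every $u$ as soon as $\omega\geq 1$ (then $1-\lambda\geq\frac12$). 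The correct move --- and the reason $\lambda=\frac{1}{1+\omega}$ is the canonical choice --- is to use the exact identity $\|(1-\lambda)a+\lambda b\|^2=(1-\lambda)\|a\|^2+\lambda\|b\|^2-\lambda(1-\lambda)\|a-b\|^2$ with $a=h_i^t-\nabla f_i(x^\star)$, $b=\nabla f_i(x^t)-\nabla f_i(x^\star)$, so that the negative term $-\lambda(1-\lambda)\sqnorm{\nabla f_i(x^t)-h_i^t}$ cancels the compressor variance $\lambda^2\omega\sqnorm{\nabla f_i(x^t)-h_i^t}$ exactly when $\lambda\leq\frac{1}{1+\omega}$, leaving the clean recursion $\Exp{\sqnorm{h_i^{t+1}-\nabla f_i(x^\star)}}\leq(1-\lambda)\sqnorm{h_i^t-\nabla f_i(x^\star)}+\lambda\sqnorm{\nabla f_i(x^t)-\nabla f_i(x^\star)}$. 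Young's inequality with $b=\sqrt{2}$ is then applied only to the single remaining term $\gamma^2\oma\,\frac1n\sum_i\sqnorm{\nabla f_i(x^t)-h_i^t}$ coming from the gradient estimator; with that accounting the coefficient of $\frac1n\sum_i\sqnorm{\nabla f_i(x^t)-\nabla f_i(x^\star)}$ becomes $\gamma^2-\frac{\gamma}{L_{\max}}+(2+\sqrt{2})\gamma^2\oma+(1+\sqrt{2})\gamma^2\oma=\gamma^2-\frac{\gamma}{L_{\max}}+(1+\sqrt{2})^2\gamma^2\oma$, whose nonpositivity is exactly the stated stepsize bound, and the $h$-coefficient reduces to $w(1-\lambda/2)=w\,\frac{\frac12+\omega}{1+\omega}$ as claimed. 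So the constants in the proposition are recoverable, but only after replacing your Young-based absorption in the $h$-recursion by this cancellation.
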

Thus, noting that $r=\frac{\omega}{1+\omega}$, so that $\frac{r+1}{2}=\frac{\frac{1}{2}+\omega}{1+\omega}$, the rate is exactly the same as in Theorem~\ref{theo1}, but with a different Lyapunov function.  
Theorems \ref{theo1} and \ref{theo2} have the advantage over Proposition \ref{propdiana}, that linear convergence is guaranteed under the P{\L}  or K{\L}  assumptions, which are weaker than strong convexity of $f$. Also, the constants $L$ and $\tilde{L}$ appear instead of $L_{\max}$. This shows a better dependence with respect to the problem. However, noting that $r=\frac{\omega}{1+\omega}$, $r_{\mathrm{av}}=\oma$, $\frac{1}{s^\star}\sim 4\omega$, the factor $\sqrt{\frac{r_{\mathrm{av}}}{r}}\frac{1}{s^\star}$ scales like $\sqrt{\oma}\omega$, which is worse that $\oma$. This means that $\gamma$ can certainly be chosen larger in Proposition \ref{propdiana} than in Theorems \ref{theo1} and \ref{theo2}, leading to faster convergence.

However, Theorems \ref{theo1} and \ref{theo2} bring a major highlight: for the first time, they establish convergence of  \algname{DIANA}, which is  \algname{EF-BV} with $\nu=1$, with biased compressors. We state the results in Appendix~\ref{secappb}, by lack of space. 
In any case, 
with biased compressors, it is better to use \algname{EF-BV} than  \algname{DIANA}: there is no interest in choosing $\nu=1$ instead of $\nu=\nu^\star$, which minimizes $r_{\mathrm{av}}$ and allows for a larger $\gamma$, for faster convergence.

Finally, we can remark that for unbiased compressors with $\oma \ll 1$, for instance if $\oma \approx \frac{\omega}{n}$ with $n$ larger than $\omega$, then $\nu^\star=\frac{1}{1+\oma}\approx 1$. Thus, in this particular case,  \algname{EF-BV} with $\nu=\nu^\star$ and  \algname{DIANA} are essentially the same algorithm.  This is another sign that  \algname{EF-BV} with $\lambda=\lambda^\star$ and $\nu=\nu^\star$ is a generic and robust choice, since it recovers   \algname{EF21} and   \algname{DIANA} in settings where these algorithms shine.

\section{Sublinear convergence in the nonconvex case}

In this section, we consider the general nonconvex setting. In \eqref{eqpro1}, every function $f_i$ is supposed $L_i$-smooth, for some $L_i>0$. For simplicity, we suppose that $R=0$.
As previously, we set $\tilde{L}\eqdef\sqrt{\frac{1}{n}\sum_{i=1}^n L_i^2}$. The average function 
	$f\eqdef \frac{1}{n}\sum_{i=1}^n f_i$
	is $L$-smooth, for some $L\leq \tilde{L}$. We also suppose that $f$ is bounded from below; that is, $f^{\inf}\eqdef \inf_{x\in \mathbb{R}^d} f(x) > -\infty$.

 Given $\lambda \in (0,1]$ and $\nu \in (0,1]$, 
we define for convenience
$r \eqdef (1-\lambda+\lambda\eta)^2+{\lambda}^2\omega$, $r_{\mathrm{av}} \eqdef (1-\nu+\nu\eta)^2+{\nu}^2\oma$, 
as well as
$s \eqdef \frac{1}{\sqrt{r}}-1$ 
and $\theta \eqdef s(1+s)\frac{r}{r_{\mathrm{av}}}$. Our convergence result is the following:\smallskip

\begin{theorem}\label{thm:noncvx}
In \algname{EF-BV}, suppose that $\nu \in (0,1]$, $\lambda \in (0,1]$ is such that $r<1$, and 
       \begin{equation}0< \gamma \leq \frac{1}{L + \tilde{L}\sqrt{\frac{r_{\mathrm{av}}}{r}}\frac{1}{s}}.\end{equation}
     For every $t\geq 1$, let $\hat{x}^t$ be chosen from the iterates $x^0, x^1, \cdots, x^{t-1}$ uniformly at random. Then 
      \begin{equation}
         \mathbb{E}\left[\left\|\nabla f(\hat{x}^{t})\right\|^{2}\right] \leq \frac{2\big(f(x^{0})-f^{\inf}\big)}{\gamma t}+\frac{G^0}{\theta t},
         \end{equation}
      where $G^0\eqdef\frac{1}{n}\sum_{i=1}^n \sqnorm{\nabla f_i(x^0) - h_i^0}$.
   \end{theorem}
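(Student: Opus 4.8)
The plan is a Lyapunov-function argument exploiting the fact that two \emph{different} contraction constants govern the two roles of the compressors. Set $G^t \eqdef \frac{1}{n}\sum_{i=1}^n \sqnorm{\nabla f_i(x^t)-h_i^t}$ and define the nonnegative Lyapunov function $\Psi^t \eqdef f(x^t)-f^{\inf} + \frac{\gamma}{2\theta}G^t$. I would prove the one-step descent
\[
\mathbb{E}\big[\Psi^{t+1}\,\big|\,\mathcal{F}^t\big] \leq \Psi^t - \tfrac{\gamma}{2}\sqnorm{\nabla f(x^t)},
\]
where $\mathbb{E}[\cdot\mid\mathcal{F}^t]$ is the expectation over the compression randomness at step $t$. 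Telescoping over $s=0,\dots,t-1$, taking total expectation, and dropping the nonnegative $\mathbb{E}[\Psi^t]$ give $\frac{\gamma}{2}\sum_{s=0}^{t-1}\mathbb{E}[\sqnorm{\nabla f(x^s)}]\leq\Psi^0=f(x^0)-f^{\inf}+\frac{\gamma}{2\theta}G^0$. Dividing by $\frac{\gamma t}{2}$ and using that $\hat{x}^t$ is uniform over $x^0,\dots,x^{t-1}$, so that $\mathbb{E}[\sqnorm{\nabla f(\hat{x}^t)}]=\frac1t\sum_{s=0}^{t-1}\mathbb{E}[\sqnorm{\nabla f(x^s)}]$, yields the claimed bound. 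Everything thus reduces to the one-step inequality, which I would obtain by combining a descent estimate for $f$ with a contraction estimate for $G^t$.

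\textbf{Descent on $f$.} Using $L$-smoothness and $x^{t+1}-x^t=-\gamma g^{t+1}$ (since $R=0$), together with the polarization identity $-\langle\nabla f(x^t),g^{t+1}\rangle=-\frac12\sqnorm{\nabla f(x^t)}-\frac12\sqnorm{g^{t+1}}+\frac12\sqnorm{g^{t+1}-\nabla f(x^t)}$, I would reach
\[
f(x^{t+1}) \leq f(x^t)-\tfrac{\gamma}{2}\sqnorm{\nabla f(x^t)}+\tfrac{\gamma}{2}\sqnorm{g^{t+1}-\nabla f(x^t)}-\tfrac{\gamma}{2}(1-L\gamma)\sqnorm{g^{t+1}}.
\]
The crucial quantity is the mean-square error of the gradient estimate. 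Writing $g^{t+1}-\nabla f(x^t)=\nu d^t-\frac1n\sum_i(\nabla f_i(x^t)-h_i^t)$ and splitting into squared bias plus variance, I would bound the squared bias of each scaled compressor $\nu\mathcal{C}_i^t$ by $(1-\nu+\nu\eta)^2\sqnorm{\nabla f_i(x^t)-h_i^t}$ via Proposition~\ref{prop3} and Jensen, and the aggregate variance of $g^{t+1}$ by $\nu^2\oma G^t$ via the average-variance property~\eqref{eqbo}. Together this gives $\mathbb{E}[\sqnorm{g^{t+1}-\nabla f(x^t)}\mid\mathcal{F}^t]\leq r_{\mathrm{av}}G^t$.

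\textbf{Contraction of $G^t$.} The key is that $h_i^{t+1}$ is updated with the \emph{individually} scaled compressor $\lambda\mathcal{C}_i^t$. I would decompose $\nabla f_i(x^{t+1})-h_i^{t+1}=\big(\nabla f_i(x^{t+1})-\nabla f_i(x^t)\big)+\big((\nabla f_i(x^t)-h_i^t)-\lambda\mathcal{C}_i^t(\nabla f_i(x^t)-h_i^t)\big)$, apply Young's inequality with parameter $s$, and then bound the second term using $\lambda\mathcal{C}_i^t\in\mathbb{B}(1-r)$ (Proposition~\ref{prop3} and \eqref{eqalpha}, valid since $r<1$) and the first using $L_i$-smoothness with $\sqnorm{x^{t+1}-x^t}=\gamma^2\sqnorm{g^{t+1}}$. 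Averaging over $i$ and substituting $s=\frac1{\sqrt r}-1$, which makes $(1+s)r=\sqrt r$ and $1+\frac1s=\frac1{1-\sqrt r}$, yields
\[
\mathbb{E}[G^{t+1}\mid\mathcal{F}^t]\leq\sqrt r\,G^t+\frac{\gamma^2\tilde L^2}{1-\sqrt r}\,\mathbb{E}[\sqnorm{g^{t+1}}\mid\mathcal{F}^t].
\]

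\textbf{Combining and main obstacle.} Plugging both estimates into $\mathbb{E}[\Psi^{t+1}\mid\mathcal{F}^t]$, the target descent reduces to two scalar conditions. The coefficient of $G^t$ demands $r_{\mathrm{av}}+\frac{\sqrt r}{\theta}\leq\frac1\theta$, which holds with equality precisely because $\theta=s(1+s)\frac{r}{r_{\mathrm{av}}}=\frac{1-\sqrt r}{r_{\mathrm{av}}}$. The coefficient of $\mathbb{E}[\sqnorm{g^{t+1}}]$ demands $\frac{\gamma^2\tilde L^2 r_{\mathrm{av}}}{(1-\sqrt r)^2}+L\gamma\leq1$, i.e.\ $\big(\gamma\tilde L\sqrt{r_{\mathrm{av}}/r}\,\tfrac1s\big)^2+L\gamma\leq1$, which is guaranteed by the stepsize bound $\gamma\leq 1/\big(L+\tilde L\sqrt{r_{\mathrm{av}}/r}\,\tfrac1s\big)$ since the left-hand side is increasing in $\gamma$ and equals $\frac{L^2+A^2+LA}{(L+A)^2}\leq1$ at $\gamma=\frac1{L+A}$ with $A\eqdef\tilde L\sqrt{r_{\mathrm{av}}/r}\,\tfrac1s$. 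I expect the main difficulty to be conceptual rather than computational: one must cleanly separate the two roles of compression so that the \emph{averaged} constant $r_{\mathrm{av}}$ (controlling the gradient-estimate error after aggregation, hence the benefit of large $n$) enters the descent while the \emph{individual} constant $r$ (controlling each control-variate error) enters the contraction, and then check that the single choice of $\theta$ and of $s$ makes the two $G^t$-coefficients cancel exactly and the $\sqnorm{g^{t+1}}$ terms absorb into the negative $-\frac{\gamma}{2}(1-L\gamma)\sqnorm{g^{t+1}}$ slack.
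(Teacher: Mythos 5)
Your proposal is correct and follows essentially the same route as the paper's proof: the same Lyapunov function $\Psi^t=f(x^t)-f^{\inf}+\frac{\gamma}{2\theta}G^t$, the same split into a gradient-estimate error bounded by $r_{\mathrm{av}}G^t$ and a control-variate contraction governed by $r$, the same choice $s=\frac{1}{\sqrt{r}}-1$ and $\theta=s(1+s)\frac{r}{r_{\mathrm{av}}}$ making $(1+s)^2r=1$, and the same step-size verification (the paper cites Lemma~5 of \citet{ric21} for the inequality you check by hand). The only cosmetic difference is that you substitute $\sqnorm{x^{t+1}-x^t}=\gamma^2\sqnorm{g^{t+1}}$ from the outset, which is equivalent since $R=0$ here.
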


\section{Experiments}
We conducted comprehensive experiments to illustrate the efficiency of  \algname{EF-BV} compared to \algname{EF21} (we use biased compressors, so we don't include  \algname{DIANA} in the comparison). The settings and results are detailed in Appendix~\ref{appexp} and some results are shown in Fig.~\ref{fig:0007}; we can see the speedup obtained with \algname{EF-BV}, which exploits the randomness of the compressors.
 
\begin{figure}[!htbp]
	\centering
	\begin{subfigure}[b]{0.24\textwidth}
		\centering
		\includegraphics[width=\textwidth]{
		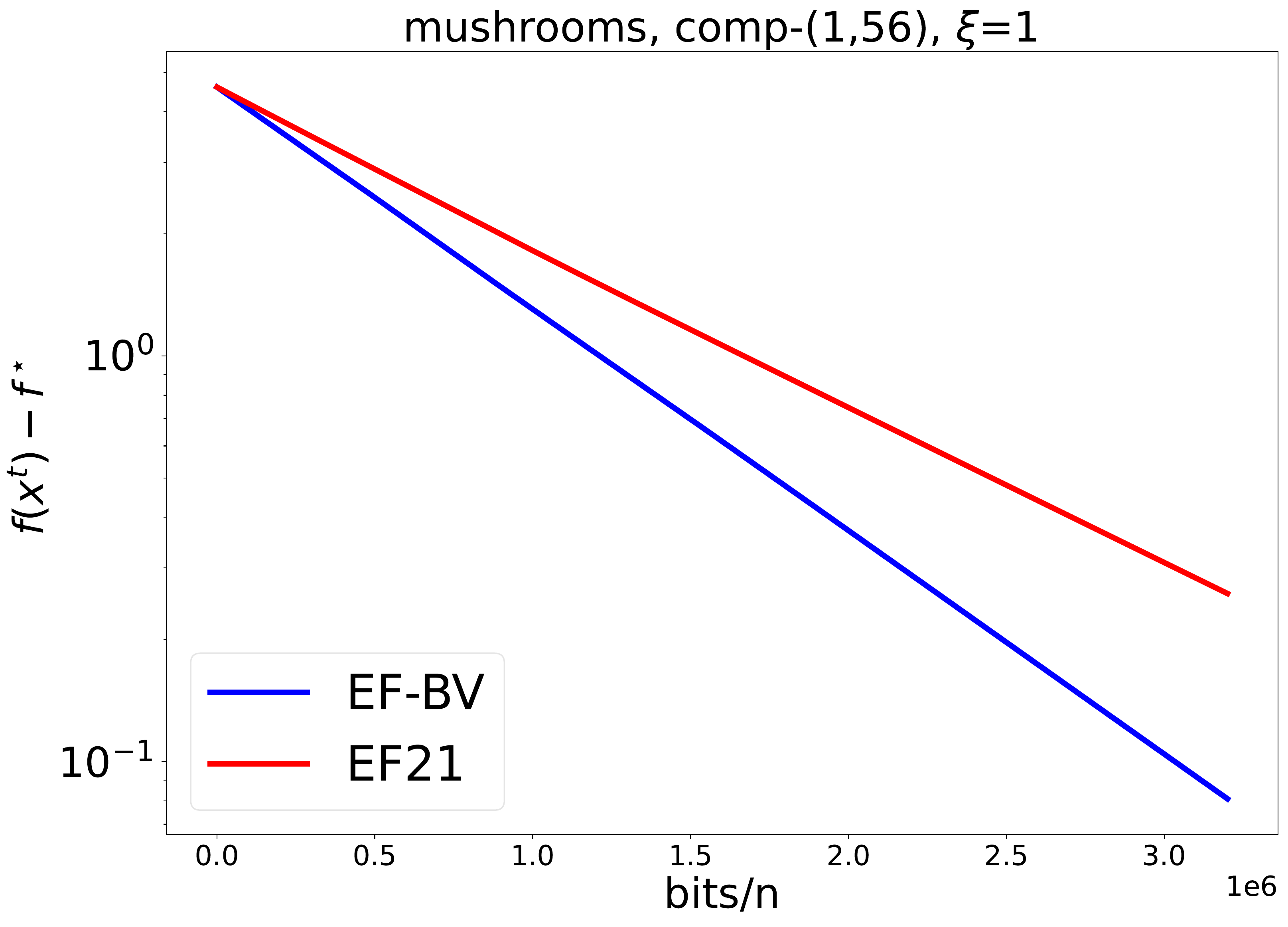}
	\end{subfigure}
	\hfill 
	\begin{subfigure}[b]{0.24\textwidth}
		\centering
		\includegraphics[width=\textwidth]{
		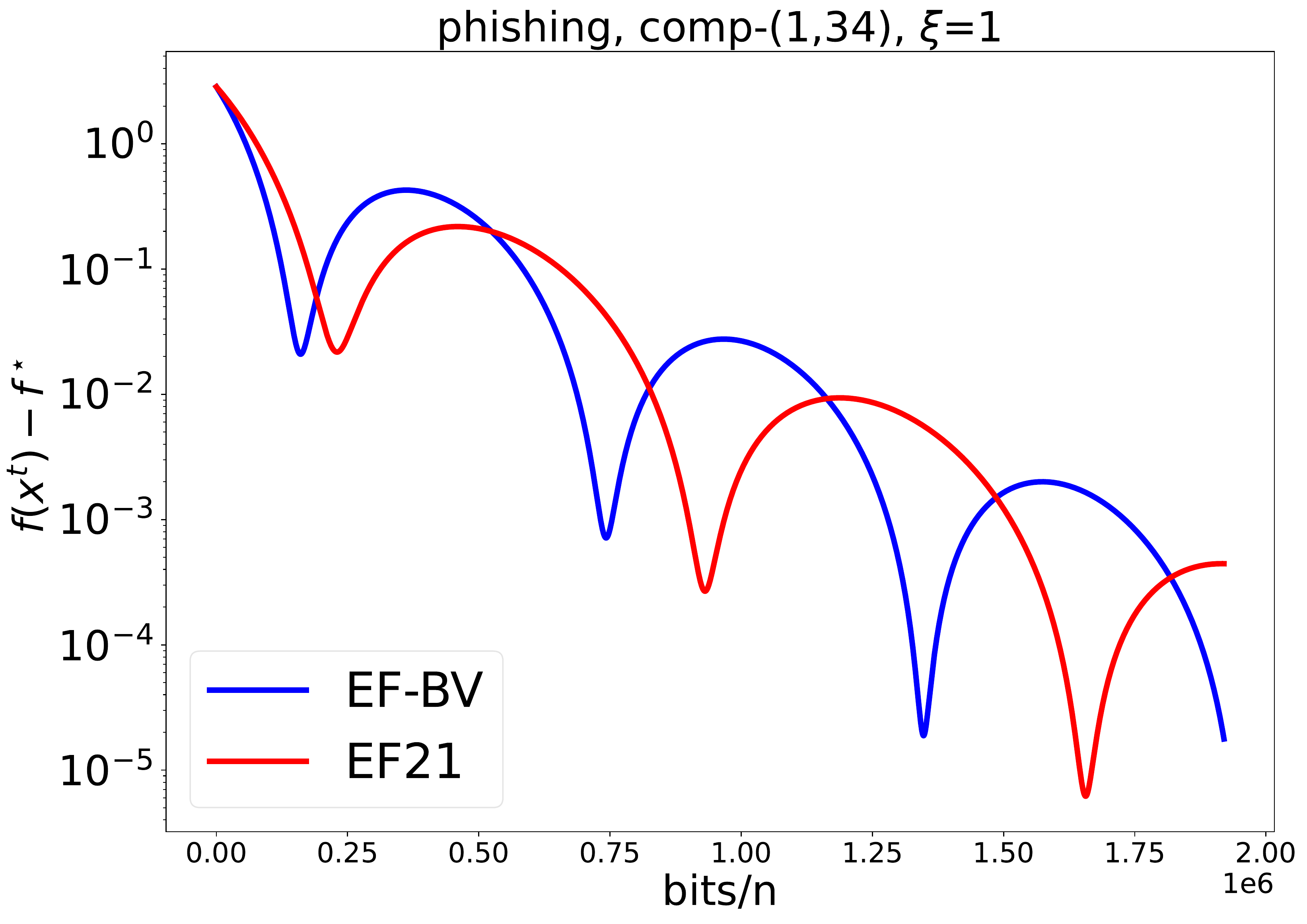}
	\end{subfigure}
	\hfill
	\begin{subfigure}[b]{0.24\textwidth}
		\centering
		\includegraphics[width=\textwidth]{
		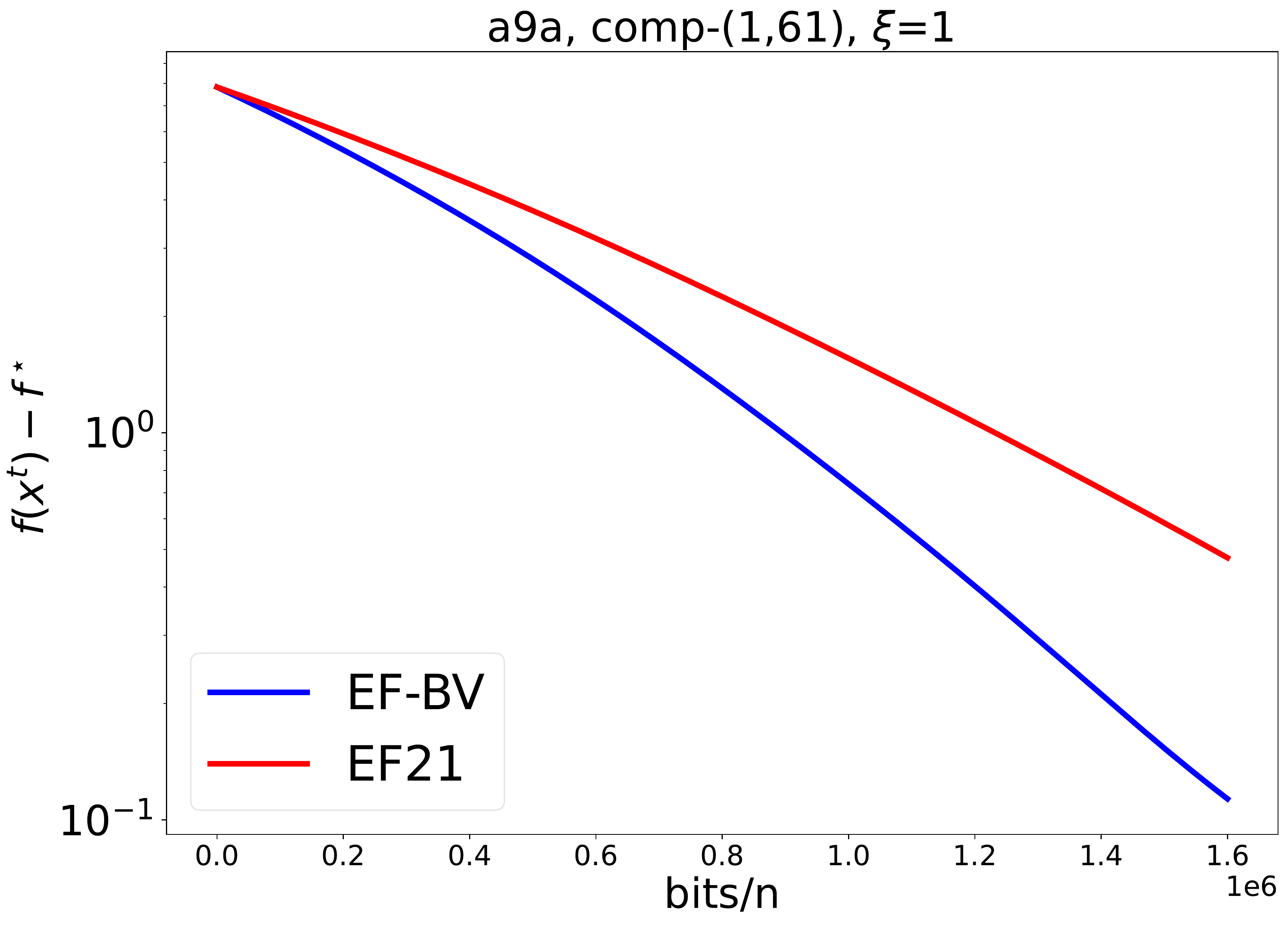}
	\end{subfigure}
	\begin{subfigure}[b]{0.24\textwidth}
		\centering
		\includegraphics[width=\textwidth]{
		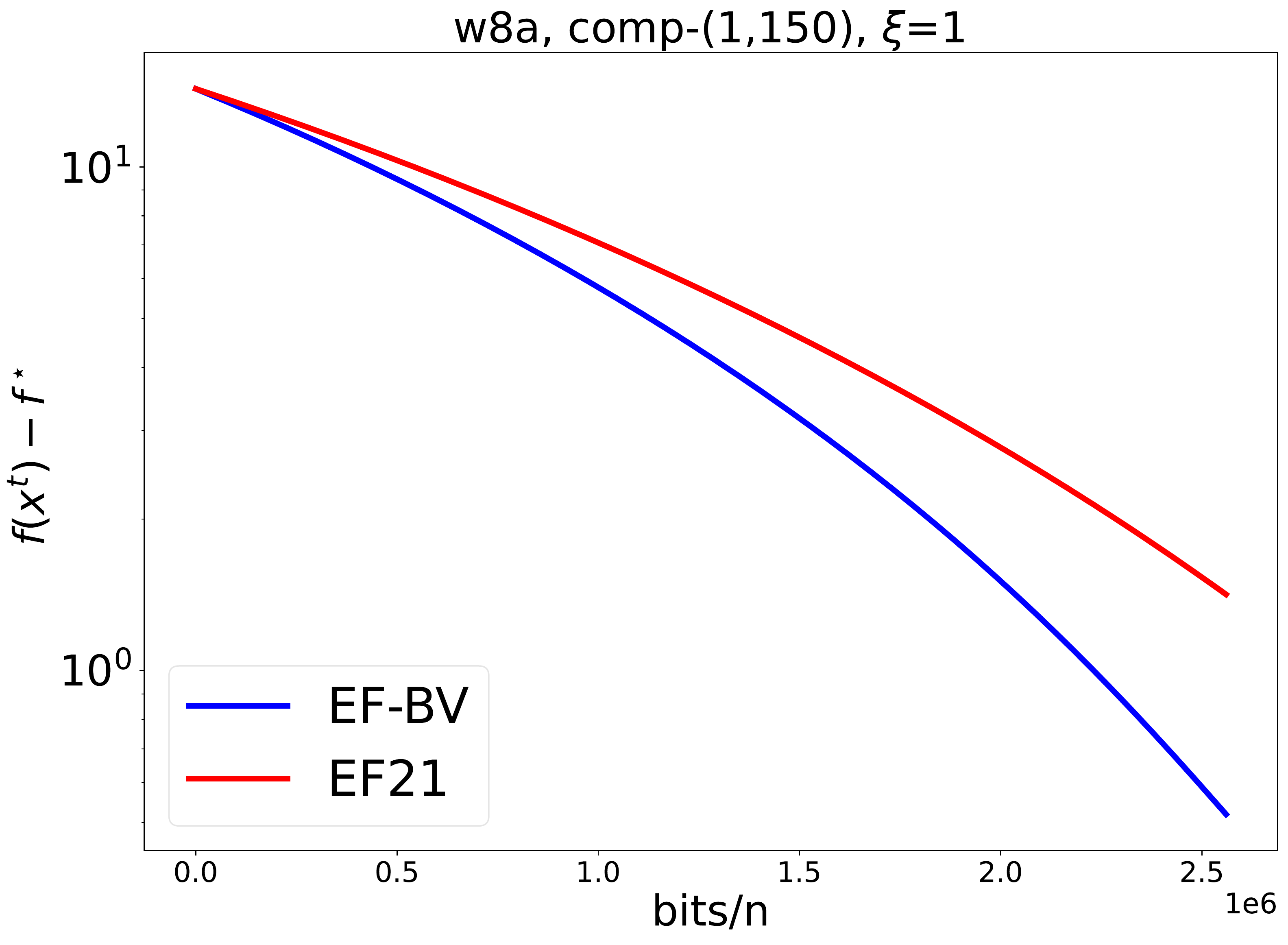}
	\end{subfigure}
	\hfill 
	\begin{subfigure}[b]{0.24\textwidth}
		\centering
		\includegraphics[width=\textwidth]{
		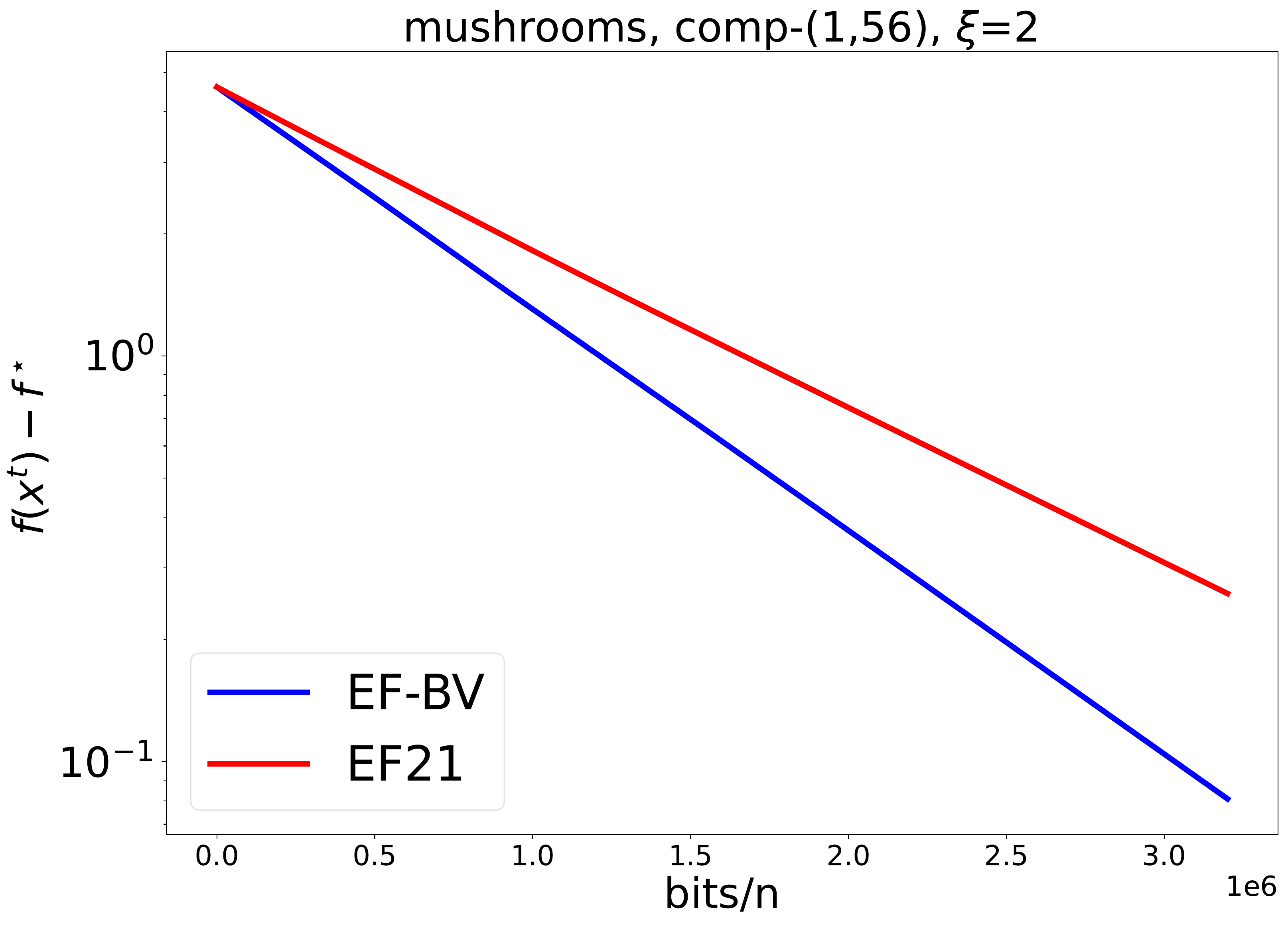}
	\end{subfigure}
	\hfill
	\begin{subfigure}[b]{0.24\textwidth}
		\centering
		\includegraphics[width=\textwidth]{
		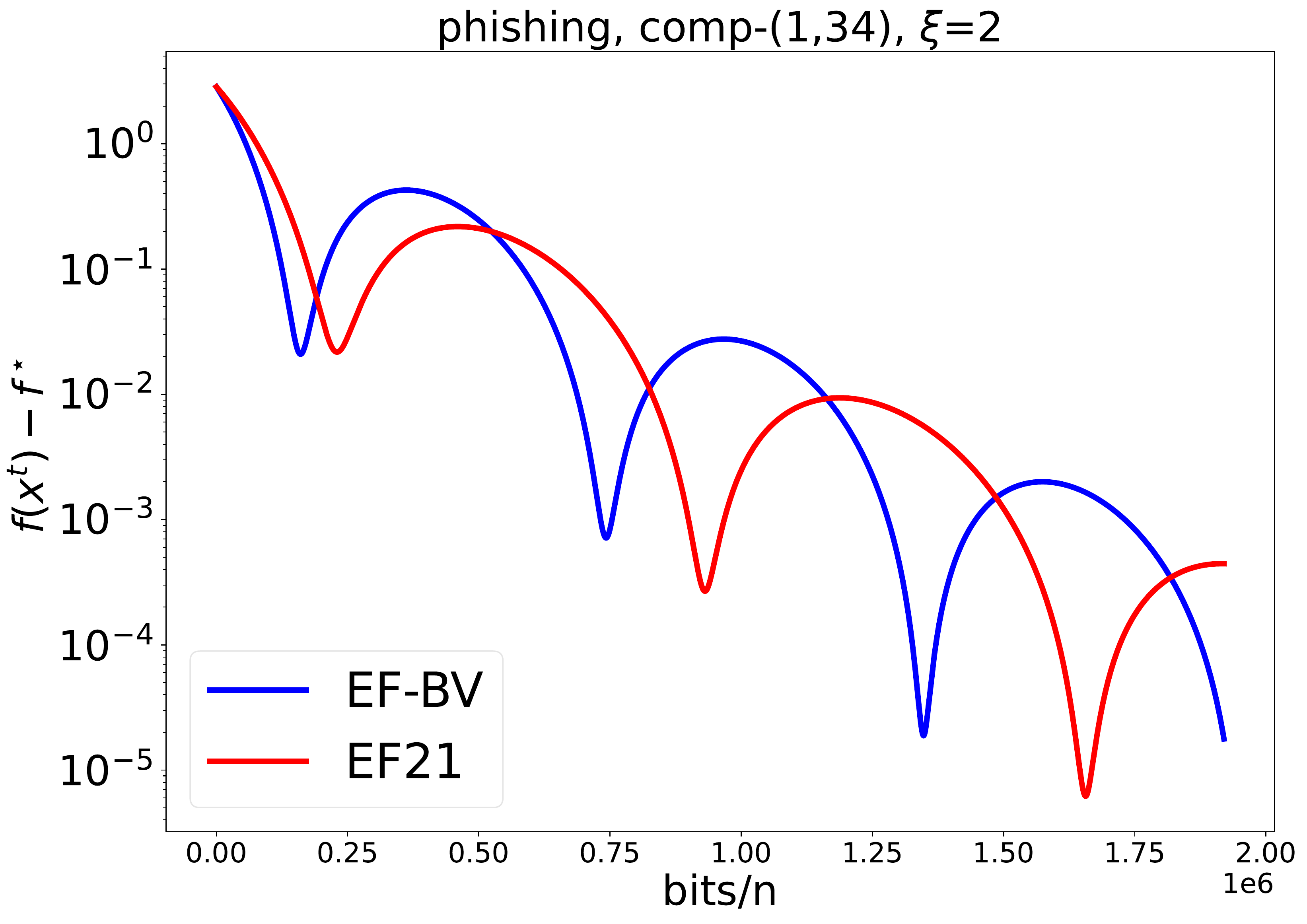}
	\end{subfigure}
	\hfill
	\begin{subfigure}[b]{0.24\textwidth}
		\centering
		\includegraphics[width=\textwidth]{
		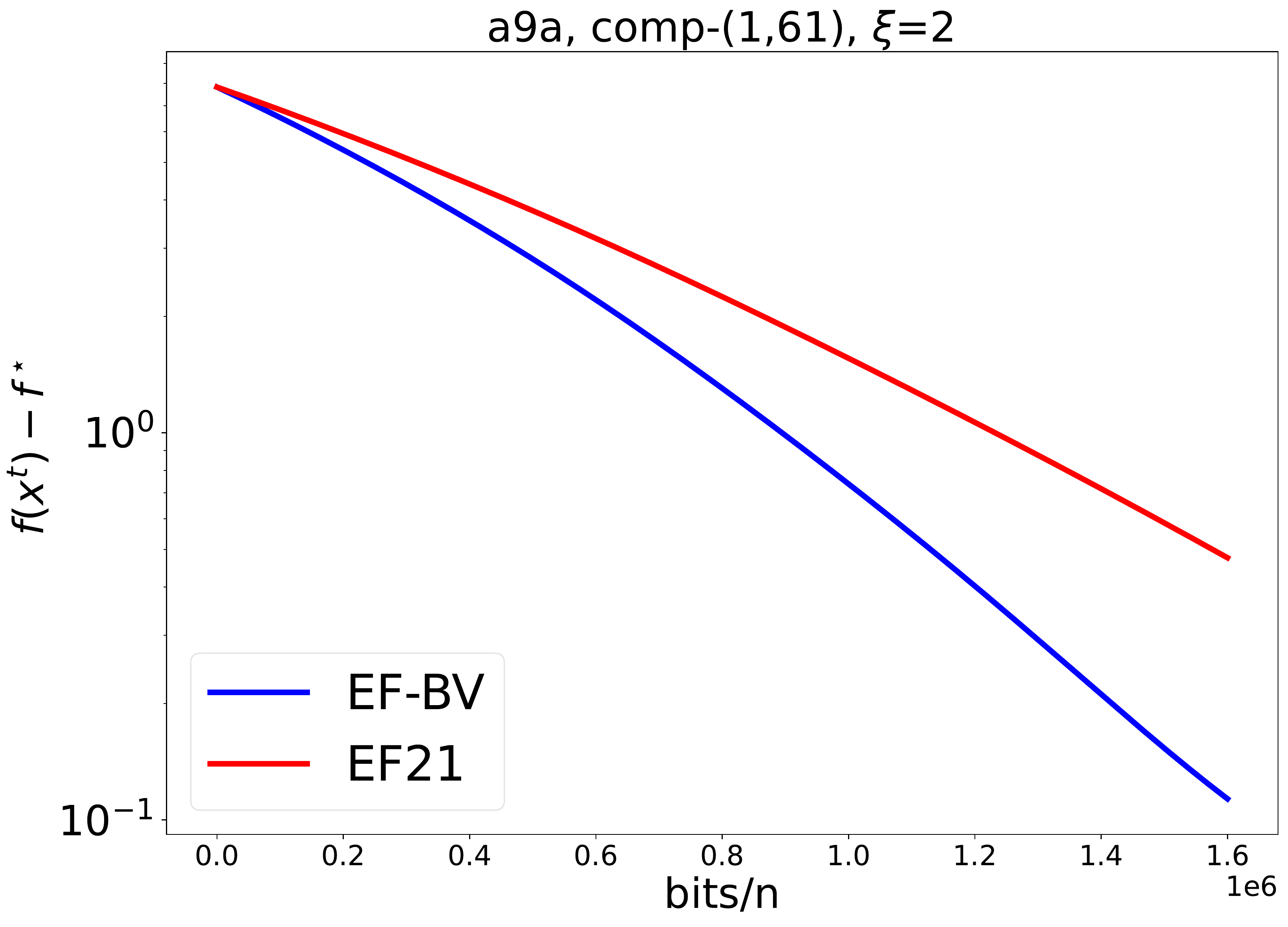}
	\end{subfigure}
	\begin{subfigure}[b]{0.24\textwidth}
		\centering
		\includegraphics[width=\textwidth]{
		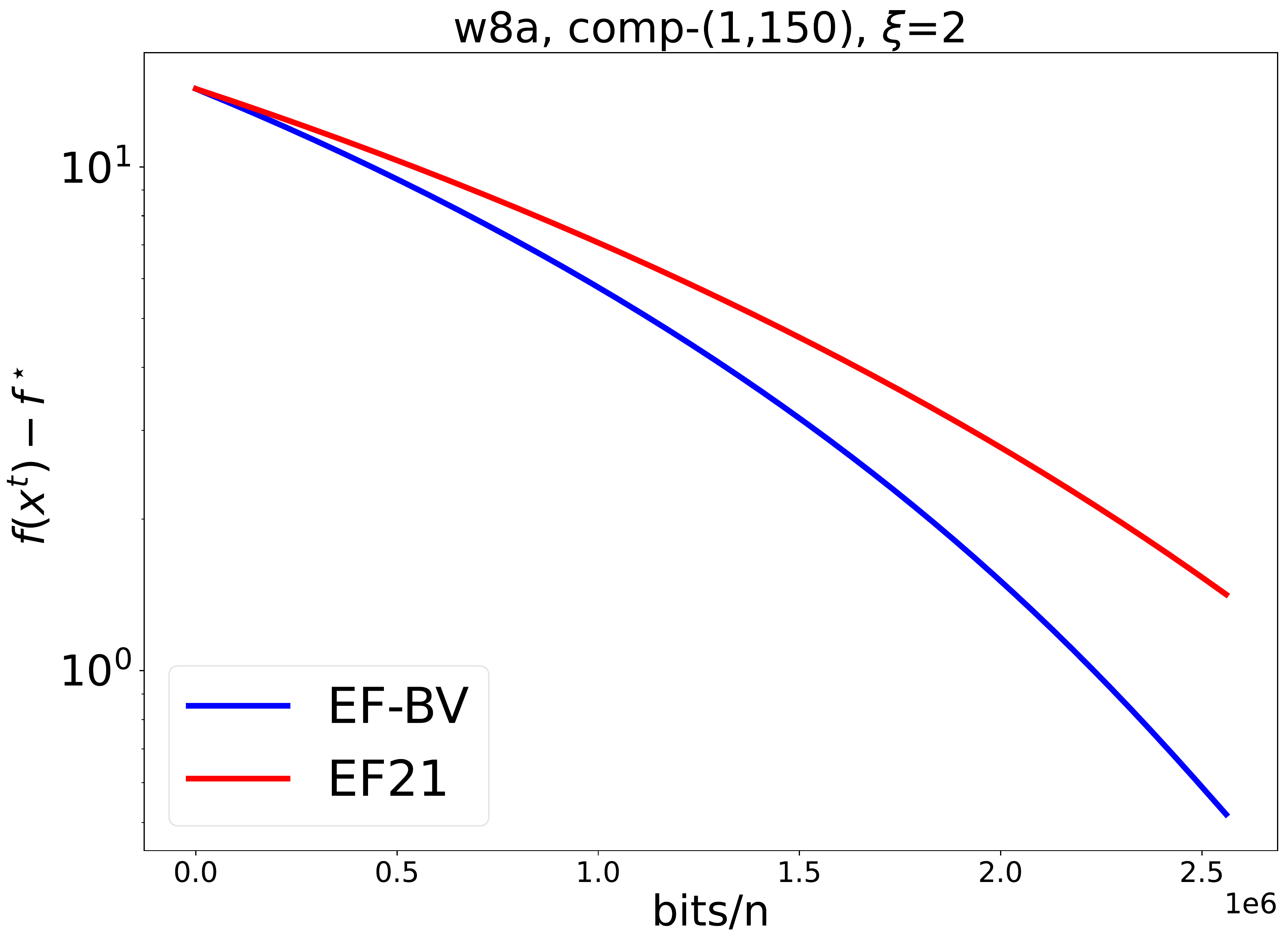}
	\end{subfigure}
		\hfill 
	\begin{subfigure}[b]{0.24\textwidth}
		\centering
		\includegraphics[width=\textwidth]{
		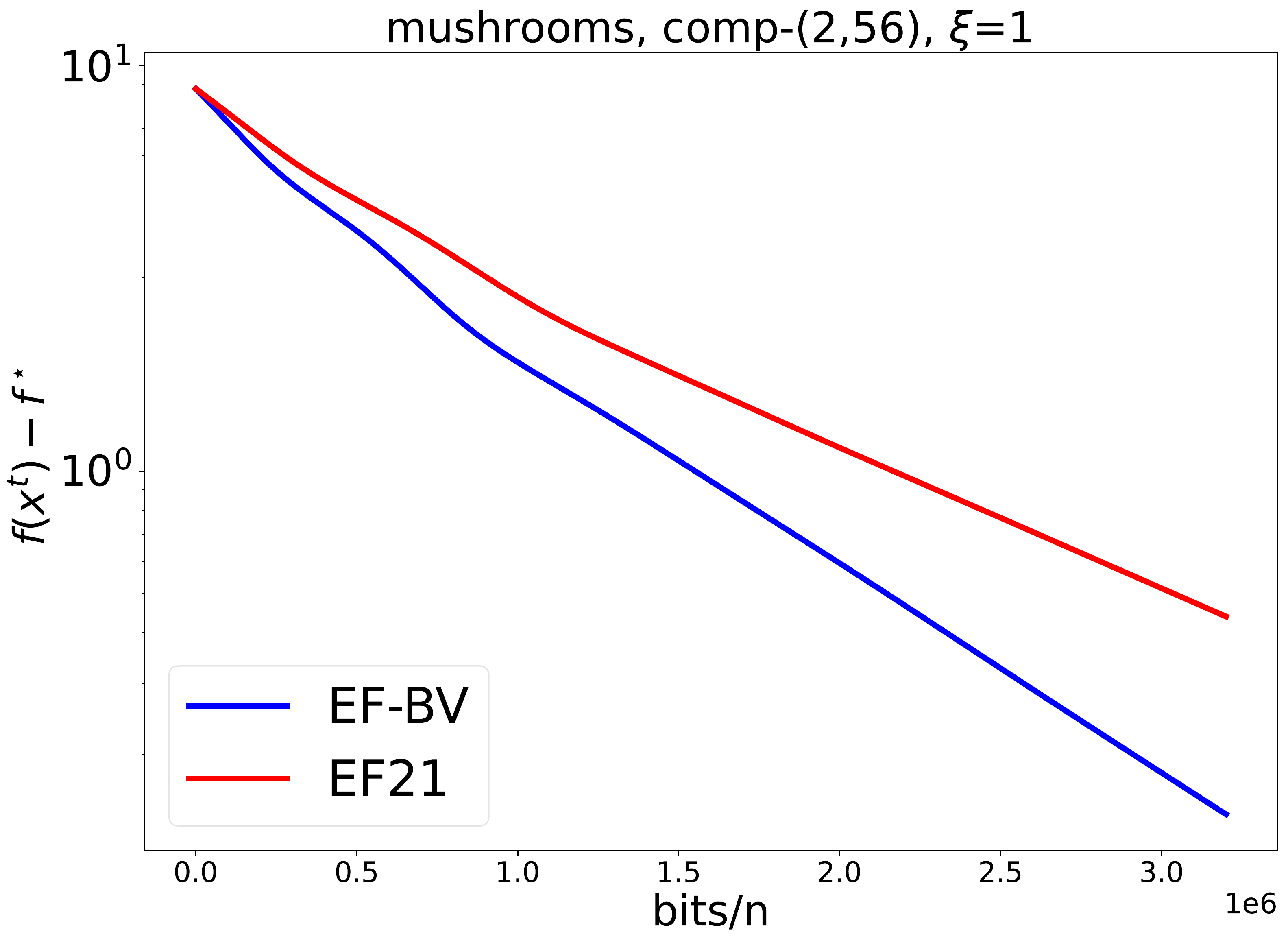}
	\end{subfigure}
	\hfill
	\begin{subfigure}[b]{0.24\textwidth}
		\centering
		\includegraphics[width=\textwidth]{
		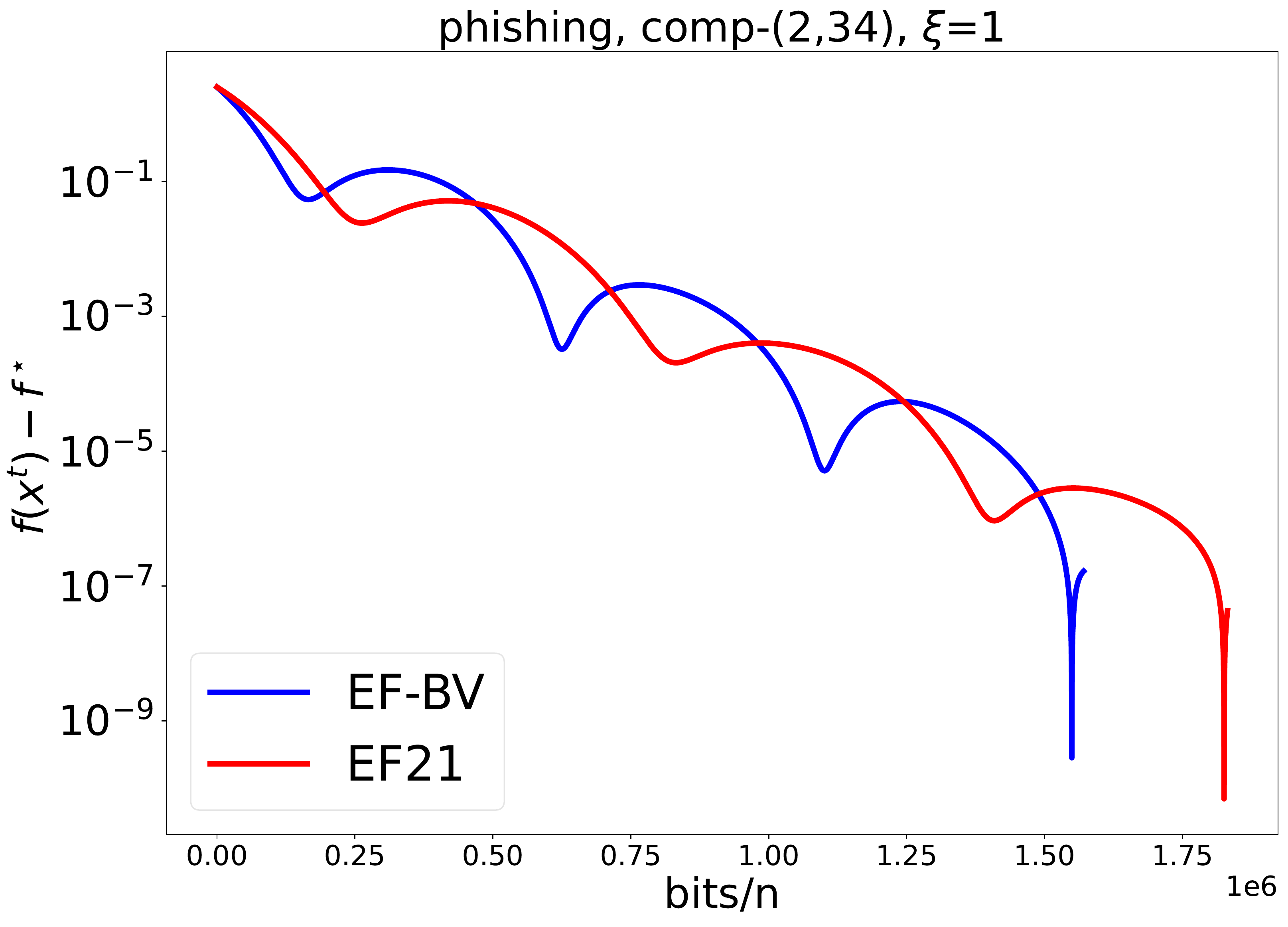}
	\end{subfigure}
	\hfill
	\begin{subfigure}[b]{0.24\textwidth}
		\centering
		\includegraphics[width=\textwidth]{
		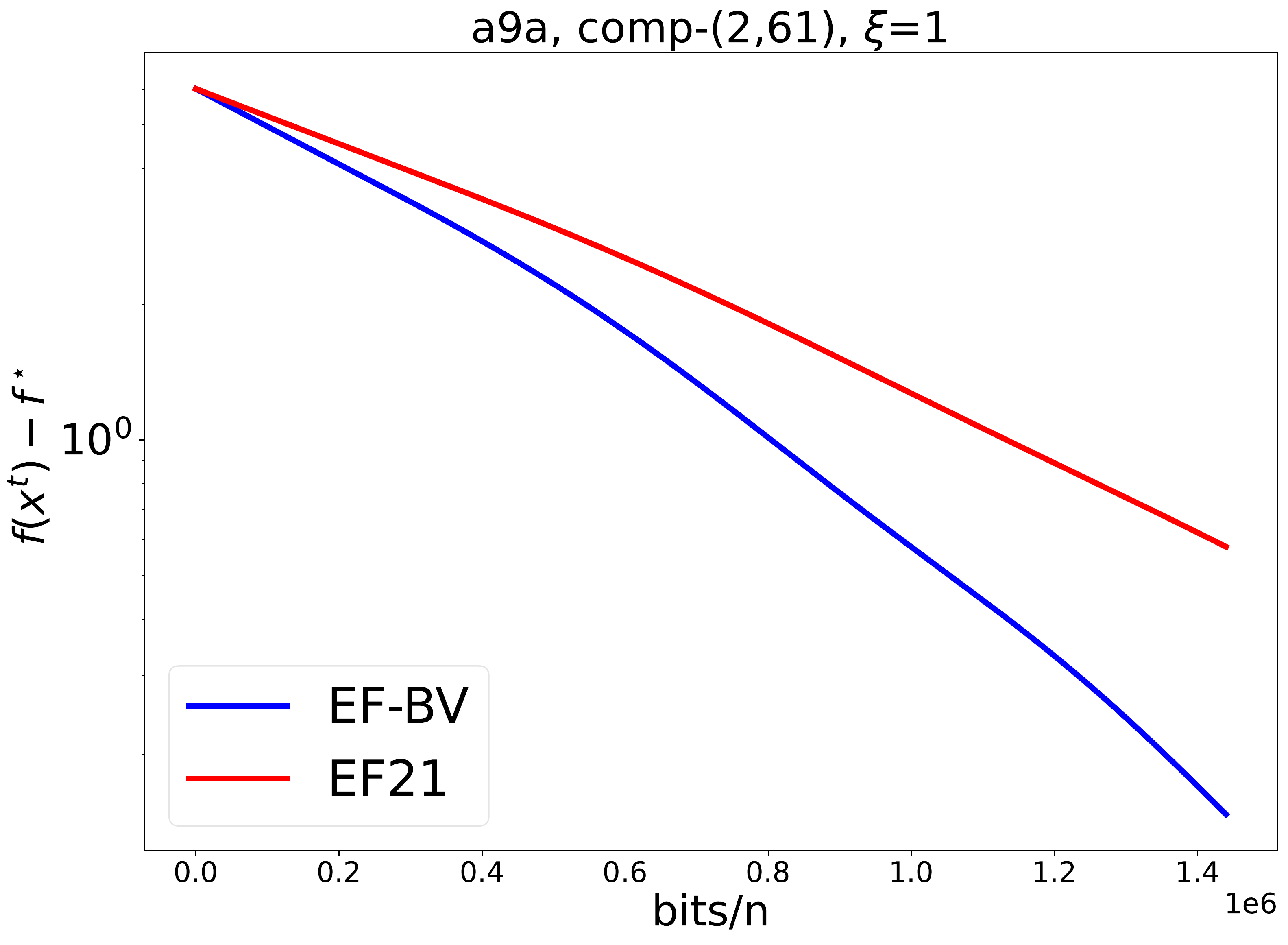}
	\end{subfigure}
	\begin{subfigure}[b]{0.24\textwidth}
		\centering
		\includegraphics[width=\textwidth]{
		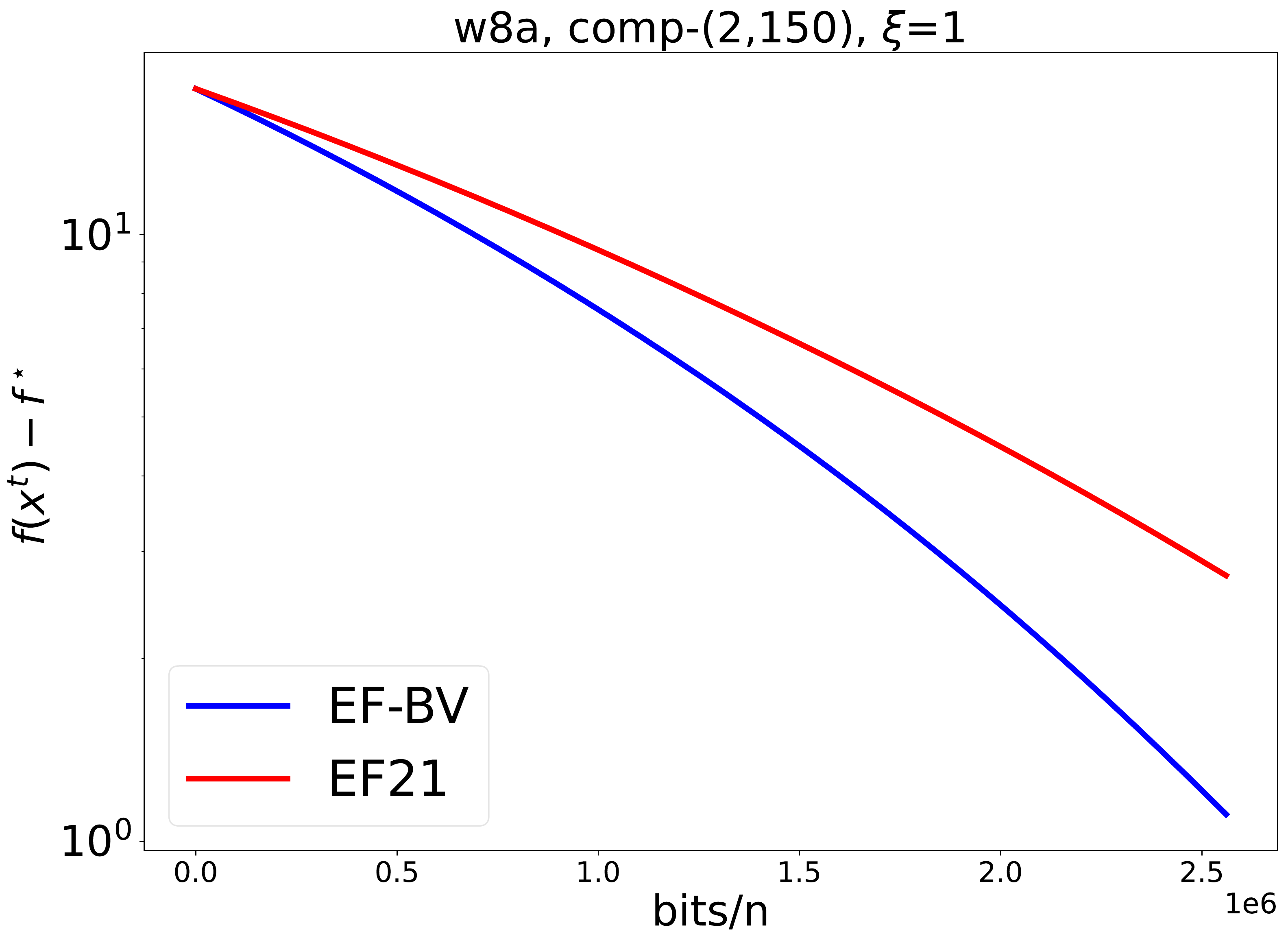}
	\end{subfigure}
	\caption{Experimental results. We plot $f(x^t)-f^\star$ with respect to the number of bits sent by each node during the learning process, which is proportional to $tk$.
	Top row: \texttt{comp-}$(1,d/2)$, overlapping $\xi=1$. Middle row: \texttt{comp-}$(1,d/2)$, overlapping $\xi=2$. Bottom row: \texttt{comp-}$(2,d/2)$, overlapping $\xi=1$.
}\label{fig:0007}
\end{figure}

\clearpage

\bibliographystyle{icml2022}
	\bibliography{IEEEabrv,biblio}

\clearpage
\appendix
\part*{Appendix}

\tableofcontents

\clearpage

\section{New compressors}\label{secappa}

We propose new compressors in our class $\mathbb{C}(\eta,\omega)$.

\subsection{\texttt{mix-}(k,k'): Mixture of \texttt{top-}k and \texttt{rand-}k}

Let $k\in \mathcal{I}_d$ and $k'\in \mathcal{I}_d$, with $k+k'\leq d$. We propose the compressor \texttt{mix-}$(k,k')$. It maps $x\in\mathbb{R}^d$ to $x'\in\mathbb{R}^d$, defined as follows. 
Let $i_1,\ldots,i_k$ be distinct indexes in $\mathcal{I}_d$ such that $|x_{i_1}|,\ldots,|x_{i_k}|$ are the $k$ largest elements of $|x|$ (if this selection is not unique, we can choose any one). These coordinates are kept: $x'_{i_j}=x_{i_j}$, $j=1,\ldots,k$. In addition, $k'$ other coordinates chosen at random in the remaining ones are kept: $x'_{i_j}=x_{i_j}$, $j=k+1,\ldots,k+k'$, where $\{i_j : j=k+1,\ldots,k+k'\}$ is a subset of size $k'$ of $\mathcal{I}_d \backslash \{i_1,\ldots,i_k\}$ chosen uniformly at random. The other coordinates of $x'$ are set to zero.

\begin{proposition}
\label{prop1}\emph{\texttt{mix-}}$(k,k')\in \mathbb{C}(\eta,\omega)$ with $\eta =\frac{d-k-k'}{\sqrt{(d-k)d}}$ and $\omega=\frac{k'(d-k-k')}{(d-k)d}$.
\end{proposition}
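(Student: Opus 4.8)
The plan is to work coordinate by coordinate, since \texttt{mix-}$(k,k')$ acts separately on each entry. Fix $x\in\mathbb{R}^d$ and let $T\eqdef\{i_1,\dots,i_k\}$ be the (deterministic) set of indices of the $k$ largest entries of $|x|$, with ties broken arbitrarily, and let $S\eqdef\mathcal{I}_d\setminus T$, so $|S|=d-k$. The only randomness is the uniformly random size-$k'$ subset $\Omega\subseteq S$ of kept coordinates. For $i\in T$ we have $\mathcal{C}(x)_i=x_i$ deterministically; for $i\in S$ we have $\mathcal{C}(x)_i=\mathbf{1}[i\in\Omega]\,x_i$, and by symmetry $\Pr[i\in\Omega]=\frac{k'}{d-k}$. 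Hence $\mathbb{E}[\mathcal{C}(x)]_i=x_i$ on $T$ and $\mathbb{E}[\mathcal{C}(x)]_i=\frac{k'}{d-k}x_i$ on $S$.

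For the bias (property (i)), the difference $\mathbb{E}[\mathcal{C}(x)]-x$ vanishes on $T$ and equals $-\frac{d-k-k'}{d-k}x_i$ on $S$, so $\big\|\mathbb{E}[\mathcal{C}(x)]-x\big\|^2=\big(\frac{d-k-k'}{d-k}\big)^2\sum_{i\in S}x_i^2$. For the variance (property (ii)), I would use that $\|\cdot\|^2$ splits as a sum over coordinates, so its expectation is the sum of the per-coordinate variances and no cross-covariance terms survive; the terms on $T$ vanish, and for $i\in S$ the variance of the Bernoulli-scaled entry is $x_i^2\cdot\frac{k'}{d-k}\big(1-\frac{k'}{d-k}\big)=x_i^2\,\frac{k'(d-k-k')}{(d-k)^2}$. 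Summing gives $\mathbb{E}\big[\|\mathcal{C}(x)-\mathbb{E}[\mathcal{C}(x)]\|^2\big]=\frac{k'(d-k-k')}{(d-k)^2}\sum_{i\in S}x_i^2$.

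Both bounds thus reduce to controlling $\sum_{i\in S}x_i^2$, the mass on the discarded/random block, in terms of $\|x\|^2$. The key elementary fact is that since $S$ indexes the $d-k$ smallest values among $x_1^2,\dots,x_d^2$, their average cannot exceed the overall average, i.e.\ $\frac{1}{d-k}\sum_{i\in S}x_i^2\leq\frac{1}{d}\|x\|^2$, whence $\sum_{i\in S}x_i^2\leq\frac{d-k}{d}\|x\|^2$. Substituting into the bias expression yields $\big\|\mathbb{E}[\mathcal{C}(x)]-x\big\|^2\leq\frac{(d-k-k')^2}{(d-k)d}\|x\|^2$, i.e.\ $\eta=\frac{d-k-k'}{\sqrt{(d-k)d}}$, and into the variance expression yields $\mathbb{E}\big[\|\mathcal{C}(x)-\mathbb{E}[\mathcal{C}(x)]\|^2\big]\leq\frac{k'(d-k-k')}{(d-k)d}\|x\|^2$, i.e.\ $\omega=\frac{k'(d-k-k')}{(d-k)d}$, as claimed.

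The computations are routine; the only two points requiring care are the observation that the squared-norm expectation decomposes coordinatewise, so the dependence among the indicators $\mathbf{1}[i\in\Omega]$ (which would matter for covariances) is irrelevant here, and the top-$k$ averaging inequality $\sum_{i\in S}x_i^2\leq\frac{d-k}{d}\|x\|^2$, which is where the top-$k$ structure of the compressor is actually used and which I expect to be the one genuinely substantive step.
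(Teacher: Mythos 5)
Your proof is correct and follows essentially the same route as the paper's: both reduce the bias and variance to exact multiples of $\sum_{i\in S}|x_i|^2$ (your per-coordinate Bernoulli-variance computation is just a cleaner packaging of the paper's explicit averaging over the random subset, which likewise yields the factor $\frac{k'(d-k-k')}{(d-k)^2}$), and both then invoke the same key inequality $\sum_{i\in S}|x_i|^2\leq\frac{d-k}{d}\|x\|^2$ coming from the top-$k$ selection. No gaps; the observation that coordinate independence is irrelevant for the squared-norm expectation is exactly right.
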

As a consequence, \texttt{mix-}$(k,k')\in \mathbb{B}(\alpha)$ with $\alpha=1-\eta^2-\omega = 1-\frac{(d-k-k')^2}{(d-k)d}-\frac{k'(d-k-k')}{(d-k)d}=
\frac{k+k'}{d}$. This is the same $\alpha$ as for \texttt{top-}$(k+k')$ and scaled \texttt{rand-}$(k+k')$.

The proof is given in Appendix~\ref{secproofp4}.

\subsection{\texttt{comp-}(k,k'): Composition of \texttt{top-}k and \texttt{rand-}k}

Let $k\in \mathcal{I}_d$ and $k'\in \mathcal{I}_d$, with $k\leq k'$. We consider the compressor \texttt{comp-}$(k,k')$, proposed in \citet{bar20}, 
which is the composition of \texttt{top-}$k'$ and \texttt{rand-}$k$:
\texttt{top-}$k'$  is applied first, then \texttt{rand-}$k$ is applied to the $k'$ selected (largest) elements. 
That is,  \texttt{comp-}$(k,k')$ maps $x\in\mathbb{R}^d$ to $x'\in\mathbb{R}^d$, defined as follows. Let $i_1,\ldots,i_{k'}$ be distinct indexes in $\mathcal{I}_d$ such that $|x_{i_1}|,\ldots,|x_{i_{k'}}|$ are the $k'$ largest elements of $|x|$ (if this selection is not unique, we can choose any one). Then 
$x'_{i_j}=\frac{k'}{k} x_{i_j}$, $j=1,\ldots,k$, where $\{i_j : j=1,\ldots,k\}$ is a subset of size $k$ of $\{i_1,\ldots,i_{k'}\}$ chosen uniformly at random. The other coordinates of $x'$ are set to zero.

 \texttt{comp-}$(k,k')$ sends $k$ coordinates of its input vector, like \texttt{top-}$k$ and \texttt{rand-}$k$, whatever $k'$. We can note that  \texttt{comp-}$(k,d)={}$\texttt{rand-}$k$ and \texttt{comp-}$(k,k)={}$\texttt{top-}$k$. We have:

\begin{proposition}
\label{prop2}
 \emph{\texttt{comp-}}$(k,k')\in \mathbb{C}(\eta,\omega)$ with $\eta =\sqrt{\frac{d-k'}{d}}$ and $\omega=\frac{k'-k}{k}$.
\end{proposition}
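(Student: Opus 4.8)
The plan is to disentangle the two stages of the compressor and reduce everything to the behaviour of a single scaled Bernoulli variable per coordinate. First I would fix $x\in\mathbb{R}^d$ and let $S\subseteq\mathcal{I}_d$ be the (deterministic, up to tie-breaking) set of the $k'$ indices of largest magnitude selected by the \texttt{top-}$k'$ stage, and let $\Omega\subseteq S$ with $|\Omega|=k$ be the uniformly random subset retained by the subsequent \texttt{rand-}$k$ stage. For any fixed $i\in S$, uniform sampling gives $\mathbb{P}(i\in\Omega)=k/k'$, so that $\Exp{\mathcal{C}(x)_i}=\frac{k'}{k}\cdot\frac{k}{k'}\,x_i=x_i$, while $\mathcal{C}(x)_i=0$ for $i\notin S$. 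Hence $\Exp{\mathcal{C}(x)}=\texttt{top-}k'(x)$, the crucial observation that drives both parts of the proof.

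For property (i), the bias is then exactly the \texttt{top-}$k'$ truncation error: $\sqnorm{\Exp{\mathcal{C}(x)}-x}=\sum_{i\notin S}x_i^2$. Since $S$ collects the $k'$ coordinates of largest magnitude, I would invoke the standard bound recalled in Section~\ref{secbia}, namely $\texttt{top-}k'\in\mathbb{B}(k'/d)$, which gives $\sum_{i\notin S}x_i^2\leq\frac{d-k'}{d}\sqnorm{x}$. Taking square roots yields $\|\Exp{\mathcal{C}(x)}-x\|\leq\sqrt{(d-k')/d}\,\|x\|=\eta\|x\|$, as claimed.

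For property (ii), I would expand $\Exp{\sqnorm{\mathcal{C}(x)-\Exp{\mathcal{C}(x)}}}$ coordinate-wise. Only indices $i\in S$ contribute, and there $\mathcal{C}(x)_i=\frac{k'}{k}x_i\mathbf{1}[i\in\Omega]$ is a scaled Bernoulli$(k/k')$ variable, whose variance is $x_i^2\,(\frac{k'}{k})^2\,\frac{k}{k'}(1-\frac{k}{k'})=x_i^2\,\frac{k'-k}{k}$. Summing over $i\in S$ and bounding $\sum_{i\in S}x_i^2\leq\sqnorm{x}$ gives $\Exp{\sqnorm{\mathcal{C}(x)-\Exp{\mathcal{C}(x)}}}\leq\frac{k'-k}{k}\sqnorm{x}=\omega\sqnorm{x}$. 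The one point deserving care is that the indicators $\mathbf{1}[i\in\Omega]$ are dependent (fixed-size sampling makes them negatively correlated), so the sum-of-variances step is not a priori obvious; but since $\Exp{\sqnorm{\cdot}}$ equals the trace of the covariance matrix, only the diagonal per-coordinate variances enter and the off-diagonal correlations are irrelevant. This is the only subtlety, and the rest is a direct calculation.
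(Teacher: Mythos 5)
Your proof is correct and follows essentially the same route as the paper's: both identify $\mathbb{E}[\mathcal{C}(x)]$ with $\texttt{top-}k'(x)$, bound the bias by the \texttt{top-}$k'$ contraction property, and compute the exact variance $\frac{k'-k}{k}\sum_{i\in S}x_i^2$ — your per-coordinate scaled-Bernoulli bookkeeping (with the trace-of-covariance remark handling the dependent indicators) is just a repackaging of the paper's two-case expectation $\frac{k}{k'}\bigl(\frac{k'-k}{k}\bigr)^2+\frac{k'-k}{k'}=\frac{k'-k}{k}$. No gaps.
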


The proof is given in Appendix~\ref{secproofp5}.

\section{New results on \algname{DIANA}}\label{secappb}

We suppose that the compressors $\mathcal{C}_i^t$ are in $\mathbb{C}(\eta,\omega)$, for some $\eta\in[0,1)$ and $\omega\geq 0$. Viewing  \algname{DIANA} as \algname{EF-BV} with $\nu=1$, we define $r$, $s^\star$, $\theta^\star$ as before, as well as 
$r_{\mathrm{av}} \eqdef \eta^2+\oma$. 
We obtain, as corollaries of Theorems \ref{theo1} and \ref{theo2}:

\begin{theorem}\label{coro1}Suppose that $R=0$ and $f$ satisfies the P{\L}  condition with some constant  $\mu>0$. 
In  \algname{DIANA}, suppose that $\lambda \in (0,1]$ is such that $r<1$, and
\begin{equation*}
0<\gamma \leq \frac{1}{L+\tilde{L}\sqrt{\frac{r_{\mathrm{av}}}{r}}\frac{1}{s^\star}}.
\end{equation*}
For every $t\geq 0$, define the Lyapunov function
$\Psi^t \eqdef f(x^t)-f^\star + \frac{\gamma}{2\theta^\star}  \frac{1}{n}\sum_{i=1}^n \sqnorm{\nabla f_i(x^t)-h_i^{t}}$, 
where $f^\star \eqdef f(x^\star)$, for any minimizer $x^\star$ of $f$. 
Then, for every $t\geq 0$,
\begin{align*}
\Exp{\Psi^{t}} 
&\leq \left(\max\left(1-\gamma\mu, {\frac{r+1}{2}}\right) \right)^t\Psi^0.
\end{align*}
\end{theorem}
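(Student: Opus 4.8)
The plan is to obtain Theorem~\ref{coro1} as a direct specialization of Theorem~\ref{theo1} to the case $\nu=1$, exploiting the fact established in Section~\ref{sec32} that \algname{DIANA} is exactly \algname{EF-BV} run with $\nu=1$. No independent argument is needed: the entire analytical burden was discharged in the proof of Theorem~\ref{theo1}, and the corollary amounts to checking that every quantity appearing in its statement reduces, under $\nu=1$, to the quantity named here in Appendix~\ref{secappb}.

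First I would verify that the hypotheses of Theorem~\ref{theo1} are met. Since $\nu=1\in(0,1]$, the requirement $\nu\in(0,1]$ holds automatically, and the remaining hypotheses — $\lambda\in(0,1]$ with $r<1$, and the stepsize bound $0<\gamma\leq\big(L+\tilde{L}\sqrt{r_{\mathrm{av}}/r}\,\frac{1}{s^\star}\big)^{-1}$ — are imposed verbatim in the corollary. Next I would check that the definitions agree. The quantity $r=(1-\lambda+\lambda\eta)^2+\lambda^2\omega$ depends only on $\lambda,\eta,\omega$ and is therefore unaffected by setting $\nu=1$; likewise $s^\star=\sqrt{\frac{1+r}{2r}}-1$ and $\theta^\star=s^\star(1+s^\star)\frac{r}{r_{\mathrm{av}}}$ are defined through $r$ and $r_{\mathrm{av}}$ and carry over unchanged. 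The only $\nu$-dependent term is $r_{\mathrm{av}}=(1-\nu+\nu\eta)^2+\nu^2\oma$; substituting $\nu=1$ collapses the first bracket to $\eta^2$ and the second to $\oma$, giving $r_{\mathrm{av}}=\eta^2+\oma$, which is precisely the definition adopted at the start of Appendix~\ref{secappb}. Thus the stepsize bound, the Lyapunov function $\Psi^t$, and the contraction factor $\max\!\big(1-\gamma\mu,\frac{r+1}{2}\big)$ in Theorem~\ref{coro1} are identical, symbol for symbol, to those produced by Theorem~\ref{theo1} evaluated at $\nu=1$.

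Having matched all quantities, I would simply invoke Theorem~\ref{theo1} with $\nu=1$ under the P{\L} assumption ($R=0$), which immediately yields $\Exp{\Psi^{t}}\leq\big(\max(1-\gamma\mu,\frac{r+1}{2})\big)^t\Psi^0$. I do not expect a genuine obstacle: the one point deserving care is the coincidence of the two definitions of $r_{\mathrm{av}}$ — the general one of Theorem~\ref{theo1} evaluated at $\nu=1$ versus the one introduced in Appendix~\ref{secappb} — and since these agree exactly, the specialization is lossless. The substantive new content relative to prior \algname{DIANA} analyses lies not in this deduction but in Theorem~\ref{theo1} itself, which delivers linear convergence under the P{\L} condition (weaker than strong convexity) and with the constants $L,\tilde{L}$ in place of $L_{\max}$.
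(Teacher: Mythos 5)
Your proposal is correct and matches the paper exactly: the paper states Theorem~\ref{coro1} as an immediate corollary of Theorem~\ref{theo1}, obtained by viewing \algname{DIANA} as \algname{EF-BV} with $\nu=1$ and noting that $r_{\mathrm{av}}=(1-\nu+\nu\eta)^2+\nu^2\oma$ collapses to $\eta^2+\oma$, with $r$, $s^\star$, $\theta^\star$ unchanged. Your careful verification that all hypotheses and quantities specialize losslessly is precisely the (implicit) argument the paper relies on.
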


\begin{theorem}\label{coro2}
Suppose that $f+R$ satisfies the  the K{\L}  condition with some constant $\mu>0$. 
In  \algname{DIANA}, suppose that $\lambda \in (0,1]$ is such that $r<1$, and
\begin{equation*}
0<\gamma \leq \frac{1}{2L+\tilde{L}\sqrt{\frac{r_{\mathrm{av}}}{r}}\frac{1}{s^\star}}.
\end{equation*}
$\forall t\geq 0$, define the Lyapunov function
$\Psi^t \eqdef f(x^t)+R(x^t)-f^\star - R^\star  + \frac{\gamma}{2\theta^\star}  \frac{1}{n}\sum_{i=1}^n \sqnorm{\nabla f_i(x^t)-h_i^{t}}$,
where $f^\star \eqdef f(x^\star)$ and $R^\star \eqdef R(x^\star)$, for any minimizer $x^\star$ of $f+R$. 
Then, for every $t\geq 0$,
\begin{align*}
\Exp{\Psi^{t}}  &\leq \left(\max\left({\frac{1}{1+\frac{1}{2}\gamma\mu}},\frac{r+1}{2}\right)\right)^t\Psi^0.
\end{align*}
\end{theorem}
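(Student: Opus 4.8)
The plan is to derive Theorem~\ref{coro2} as an immediate specialization of Theorem~\ref{theo2}. Recall from Section~\ref{sec32} that \algname{DIANA} is exactly \algname{EF-BV} run with $\nu=1$: with this choice the update $g^{t+1}=h^t+\nu d^t$ in \algname{EF-BV} becomes $g^{t+1}=h^t+d^t$, which is precisely the gradient-estimate update in \algname{DIANA}, while all the remaining lines coincide. Hence the sequences $(x^t)_{t\geq 0}$ and $(h_i^t)_{t\geq 0}$ produced by \algname{DIANA} are identical to those produced by \algname{EF-BV} with $\nu=1$, and any convergence guarantee for \algname{EF-BV} applies verbatim to \algname{DIANA} once $\nu$ is fixed to $1$.

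First I would check that the auxiliary quantities appearing in Theorem~\ref{theo2} reduce to those defined in this appendix when $\nu=1$. Substituting $\nu=1$ into $r_{\mathrm{av}}=(1-\nu+\nu\eta)^2+\nu^2\oma$ gives $r_{\mathrm{av}}=\eta^2+\oma$, which is exactly the definition of $r_{\mathrm{av}}$ used here. The remaining quantities $r$, $s^\star$, and $\theta^\star$ depend only on $\lambda$, $\eta$, $\omega$ and on $r_{\mathrm{av}}$ through $\theta^\star=s^\star(1+s^\star)\frac{r}{r_{\mathrm{av}}}$; they are therefore unchanged, with $\theta^\star$ now evaluated at $r_{\mathrm{av}}=\eta^2+\oma$. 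Since $\nu=1$ lies in the admissible range $(0,1]$, the hypotheses of Theorem~\ref{coro2}, namely $\lambda\in(0,1]$ with $r<1$, the stepsize bound $0<\gamma\leq\big(2L+\tilde{L}\sqrt{r_{\mathrm{av}}/r}\,/s^\star\big)^{-1}$, and the K{\L} condition on $f+R$, are exactly the hypotheses of Theorem~\ref{theo2} instantiated at $\nu=1$.

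Finally, the Lyapunov function $\Psi^t$ and the claimed contraction factor $\max\big(\frac{1}{1+\frac12\gamma\mu},\frac{r+1}{2}\big)$ in Theorem~\ref{coro2} match those of Theorem~\ref{theo2}: the contraction factor does not involve $\nu$, and $\Psi^t$ involves $\nu$ only through $\theta^\star$, which we have already matched. The conclusion $\Exp{\Psi^t}\leq\big(\max(\frac{1}{1+\frac12\gamma\mu},\frac{r+1}{2})\big)^t\Psi^0$ therefore follows directly by invoking Theorem~\ref{theo2} with $\nu=1$. There is no genuine obstacle beyond bookkeeping; the only point requiring care is confirming that the appendix's standalone definition $r_{\mathrm{av}}=\eta^2+\oma$ coincides with the $\nu=1$ evaluation of the general $r_{\mathrm{av}}$, so that both the admissible stepsize range and the factor $\theta^\star$ entering $\Psi^t$ are genuinely identical in the two statements.
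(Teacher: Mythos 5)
Your proposal is correct and matches the paper exactly: the paper presents Theorem~\ref{coro2} as an immediate corollary of Theorem~\ref{theo2} obtained by setting $\nu=1$ (so that \algname{EF-BV} reduces to \algname{DIANA} and $r_{\mathrm{av}}=(1-\nu+\nu\eta)^2+\nu^2\oma$ becomes $\eta^2+\oma$), which is precisely your argument. The bookkeeping you verify --- that $r$, $s^\star$, $\theta^\star$, the stepsize bound, and the Lyapunov function all coincide under this substitution --- is exactly the content of the paper's (implicit) proof.
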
\smallskip

Interestingly,  \algname{DIANA}, used beyond its initial setting with compressors in $\mathbb{B}(\alpha)$ with $\lambda=1$, just reverts to (the original) 
 \algname{EF21}, as shown in Fig.~\ref{fig1}. This shows how our unified framework reveals connections between these two algorithms and  unleashes their potential.

\section{Experiments}\label{appexp}

\subsection{Datasets and experimental setup}

We consider the heterogeneous data distributed regime, which means that all parallel nodes store different data points, but use the same type of learning function. We adopt the datasets from LibSVM~\citep{chang2011libsvm} and  we split them, after random shuffling, into 
$n\leq N$ 
blocks, where $N$ is the total number of data points (the left-out data points from the integer division of $N$ by $n$ are stored at the last node). The corresponding values are shown in Tab.~\ref{tab:logistic_datasets}. 
To make our  setting more realistic, we consider that different nodes partially share some data:
we set the overlapping factor to be $\xi\in\{1, 2\}$, where $\xi=1$ means no overlap and $\xi=2$ means that the data is partially shared among the nodes, with a redundancy factor of 2; this is achieved by  sequentially assigning 2 blocks of data to every node.
The experiments were conducted using 24 NVIDIA-A100-80G GPUs, each with  
80GB memory. 

\begin{table}[b]
 \caption{Values of $d$ and $N$ for the considered datasets.}
    \label{tab:logistic_datasets}
    \centering
    \begin{tabular}{c|c|c}
        \toprule
        \multirow{2}{*}{Dataset} & \multirow{2}{*}{$N$ (total \# of datapoints)} & \multirow{2}{*}{$d$ (\# of features)} \\
        ~ & ~ &~ 
        \\\hline
        \multirow{1}{*}{\texttt{mushrooms}} & \multirow{1}{*}{8,124} & \multirow{1}{*}{112} 
        \\ \hline
        \multirow{1}{*}{\texttt{phishing}} & \multirow{1}{*}{11,055} & \multirow{1}{*}{68} 
        \\ \hline
        \multirow{1}{*}{\texttt{a9a}} & \multirow{1}{*}{32,561} & \multirow{1}{*}{123} 
        \\ \hline
        \multirow{1}{*}{\texttt{w8a}} & \multirow{1}{*}{49,749} & \multirow{1}{*}{300} 
        \\  
        \bottomrule
    \end{tabular}
   \end{table}

We consider logistic regression, which consists in minimizing the $\mu$-strongly convex function 
\begin{equation*}
f=\frac{1}{n} \sum_{i=1}^{n}f_i,
\end{equation*}
with, for every $i\in\mathcal{I}_n$,
\begin{equation*}
        f_i(x) =\frac{1}{N_i} \sum_{j=1}^{N_i} \log\!\Big(1+\exp\!\left(-b_{i,j} x^{\top} a_{i,j}\right)\Big)+ \frac{\mu}{2} \|x\|^2, 
    \end{equation*}
where $\mu$, set to $0.1$, 
is the strong convexity constant; $N_i$ is the number of data points at node $i$; the $a_{i,j}$ are the training vectors and the $b_{i,j} \in\{-1,1\}$ the corresponding labels. 
Note that there is no regularizer in this problem; that is, $R=0$.

We set $L=\tilde{L}=\sqrt{\sum_{i=1}^n L_i^2}$, with $L_i =\mu +  \frac{1}{4N_i} \sum_{j=1}^{N_i} \|a_{i,j}\|^2$. We use independent compressors of type \texttt{comp-}$(k,k')$ at every node, for some small $k$ and large $k'<d$. These compressors are biased ($\eta>0$) and have a variance $\omega>1$, so they are not contractive: they don't belong to $\mathbb{B}(\alpha)$ for any $\alpha$. We have $\oma=\frac{\omega}{n}$.
Thus, we place ourselves in the conditions of Theorem~\ref{theo1}, and we compare \algname{EF-BV} with 
\begin{equation*}
\lambda=\lambda^\star,\quad \nu=\nu^\star,\quad \gamma=\frac{1}{L+\tilde{L}\sqrt{\frac{r_{\mathrm{av}}}{r}}\frac{1}{s^\star}}
\end{equation*}
to \algname{EF21}, which corresponds to the particular case of \algname{EF-BV} with
\begin{equation*}
\nu = \lambda=\lambda^\star,\quad \gamma=\frac{1}{L+\tilde{L}\frac{1}{s^\star}}.
\end{equation*}

\begin{table}[t]
	\caption{Parameter values of \algname{EF-BV} and \algname{EF21} in the different settings.  $k'$ in \texttt{comp-}$(k, k')$ is set to $d/2$ and $n=1000$. In pairs of values like (1,2), the first value is $k$ and the second value is $\xi$.}\label{tab10}
	\centering
	\resizebox{1.0\textwidth}{!}{
	\begin{tabular}{c|c|ccc|ccc|ccc|ccc}
		\toprule
		\multirow{2}{*}{Method} & \multirow{2}{*}{Params} & \multicolumn{3}{c}{mushrooms} & \multicolumn{3}{c}{phishing} & \multicolumn{3}{c}{a9a} & \multicolumn{3}{c}{w8a}\\ \cmidrule{3-5} \cmidrule{6-8} \cmidrule{9-11} \cmidrule{12-14}
		~ & ~ & (1,1) & (1,2) & (2,1) & (1,1) & (1,2) & (2,1) & (1,1) & (1,2) & (2,1) & (1,1) & (1,2) & (2,1)\\ \hline
		 & \multirow{2}{*}{$\eta$} & \multirow{2}{*}{0.707} & \multirow{2}{*}{0.707} & \multirow{2}{*}{0.707} & \multirow{2}{*}{0.707} & \multirow{2}{*}{0.707} & \multirow{2}{*}{0.707} & \multirow{2}{*}{0.710} & \multirow{2}{*}{0.710} & \multirow{2}{*}{0.710}  & \multirow{2}{*}{0.707} & \multirow{2}{*}{0.707} & \multirow{2}{*}{0.707} \\
		 & ~ & ~ & ~\\ \hline

		& \multirow{2}{*}{$\omega$} & \multirow{2}{*}{55} & \multirow{2}{*}{55} & \multirow{2}{*}{27} & \multirow{2}{*}{33} & \multirow{2}{*}{33} & \multirow{2}{*}{16} & \multirow{2}{*}{60} & \multirow{2}{*}{60} & \multirow{2}{*}{29.5} & \multirow{2}{*}{149} & \multirow{2}{*}{149} & \multirow{2}{*}{74}\\
		 & ~ & ~ & ~\\ \hline  

		 & \multirow{2}{*}{$\omega_{\mathrm{av}}$} & \multirow{2}{*}{0.055} & \multirow{2}{*}{0.055} & \multirow{2}{*}{0.027} & \multirow{2}{*}{0.033} & \multirow{2}{*}{0.033} & \multirow{2}{*}{0.016}  & \multirow{2}{*}{0.06} & \multirow{2}{*}{0.06} & \multirow{2}{*}{0.295} & \multirow{2}{*}{0.149} & \multirow{2}{*}{0.149} & \multirow{2}{*}{0.074}\\
		 & ~ & ~ & ~\\ \hline 

		EF-BV & \multirow{2}{*}{$\lambda$} & 5.32e-3 & 5.32e-3 & 1.08e-2 & 8.85e-3 & 8.85e-3 & 1.82e-2 & 4.83e-3 & 4.83e-3 & 9.8e-3 & 1.96e-3 & 1.96e-3 & 3.95e-3\\
		EF21 & ~ & 5.32e-3 & 5.32e-4 & 1.08e-2 & 8.85e-3 & 8.85e-3 & 1.82e-2 & 4.83e-3 & 4.83e-3 & 9.8e-3 & 1.96e-3 & 1.96e-3 & 3.95e-3\\ \hline

		EF-BV & \multirow{2}{*}{$\nu$} & 1 & 1 & 1 & 1 & 1 & 1 & 1 & 1 & 1 & 1 & 1 & 1 \\
		EF21 & ~ & 5.32e-3 & 5.32e-4 & 1.08e-2 & 8.85e-3 & 8.85e-3 & 1.82e-2 & 4.83e-3 & 4.83e-3 & 9.8e-3 & 1.96e-3 & 1.96e-3 & 3.95e-3\\ \hline

		EF-BV & \multirow{2}{*}{$r$} & 0.998 & 0.998 & 0.997& 0.997 & 0.997 & 0.994 & 0.999 & 0.999 & 0.997 & 0.999 & 0.999 & 0.999\\
		EF21 & ~ & 0.998 & 0.998 & 0.997 & 0.997 & 0.997 & 0.994 &0.999 & 0.999 & 0.997 & 0.999 & 0.999 & 0.999\\ \hline

		EF-BV & \multirow{2}{*}{$r_{\mathrm{av}}$} & 0.555 & 0.555 & 0.527 & 0.533 & 0.533 & 0.516 & 0.564 & 0.564 & 0.534 & 0.649 & 0.649 & 0.574\\
		EF21 & ~ & 0.998 & 0.998 & 0.997 & 0.997 & 0.997 & 0.994 & 0.999 & 0.999 & 0.997 & 0.999 & 0.999 & 0.999\\ \hline

		EF-BV & \multirow{2}{*}{$\sqrt{\frac{r_{\mathrm{av}}}{r}}$} & 0.746 & 0.746 & 0.727 & 0.731 & 0.731 & 0.720 & 0.752 & 0.752 & 0.731 & 0.806 & 0.806 & 0.758\\
		EF21 & ~ & 1 & 1 & 1 & 1 & 1 & 1 & 1 & 1 & 1 & 1 & 1 & 1\\ \hline  

		EF-BV & \multirow{2}{*}{$s^\star$} & 3.90e-4 & 3.90e-4 & 7.94e-4 & 6.50e-4 & 6.50e-4 & 1.34e-3 & 3.5e-4 & 3.5e-4 & 7.13e-4 & 1.44e-4 & 1.44e-4 & 2.90e-4\\
		EF21 & ~ & 3.90e-4 & 3.90e-4 & 7.94e-4 & 6.50e-4 & 6.50e-4 & 1.34e-3 & 3.5e-4 & 3.5e-4 & 7.13e-4 & 1.44e-4 & 1.44e-4 & 2.90e-4\\ \hline

		EF-BV & \multirow{2}{*}{$\gamma$} & 1.38e-4 & 1.43e-4 & 2.87e-4 & 2.33e-3 & 2.36e-3 & 4.80e-3 & 2.53e-4 & 2.58e-4 & 5.28e-4 & 1.01e-4 & 1.15e-4 & 2.15e-4\\
		EF21 & ~ & 1.03e-4 & 1.06e-4 & 2.10e-4 & 1.71e-3 & 1.73e-3 & 3.49e-3 & 1.91e-4 & 1.84e-4 & 3.87e-4 & 8.12e-5 & 9.31e-5 & 1.63e-4\\ \hline
	\end{tabular}}
\end{table}

\subsection{Experimental results and analysis}  

We show in Fig.~\ref{fig:0007} the results with $k=1$ or $k=2$ in the compressors  \texttt{comp-}$(k,k')$, and overlapping factor $\xi=1$ or $\xi=2$.
We chose $k'=\frac{d}{2}$ and $n=1000$. The corresponding values of $\eta$, $\omega$, $\oma$, and the parameter values used in the algorithms are shown in Tab.~\ref{tab10}. We can see that there is essentially no difference between the two choices $\xi=1$ and $\xi=2$, and the qualitative behavior for $k=1$ and $k=2$ is similar. Thus, we observe that  \algname{EF-BV} converges always faster than \algname{EF21}; this is consistent with our analysis. 

We tried other values of $n$, including the largest value $n=N$, for which there is only one data point at every node. The behavior of \algname{EF21} and \algname{EF-BV} was the same as for $n=1000$, so we don't show the results.

We tried other values of $k'$. The behavior of \algname{EF21} and \algname{EF-BV} was the same as for $k'= \frac{d}{2}$ overall, so we don't show the results. We noticed that the difference between the two algorithms was smaller when $k'$ was smaller; this is expected, since for $k'=k$, the compressors revert to \texttt{top-}$k$, for which \algname{EF21} and \algname{EF-BV} are the same algorithm.

To sum up, the experiments confirm our analysis: when $\omega$ and $n$ are large,  so that the key factor $\sqrt{\frac{r_{\mathrm{av}}}{r}}$ is small, randomness is exploited in \algname{EF-BV}, with larger values of $\nu$ and $\gamma$ allowed than in \algname{EF21}, and this yields faster convergence. 

In future work, we will design and compare other compressors in our new class $\mathbb{C}(\eta,\omega)$, performing well in both homogeneous and heterogeneous regimes.

\subsection{Additional experiments in the nonconvex setting} 

We consider the logistic regression  problem with a nonconvex regularizer:
\begin{equation}
   f(x)=\frac{1}{n} \sum_{i=1}^{n} \log \left(1+\exp \left(-y_{i} a_{i}^{\top} x\right)\right)+\lambda \sum_{j=1}^{d} \frac{x_{j}^{2}}{1+x_{j}^{2}}, 
\end{equation}
where $a_{i} \in \mathbb{R}^{d}, y_{i} \in\{-1,1\}$ are the training data, and $\lambda>0$ is the regularizer parameter. We used $\lambda=0.1$ in all experiments. We present the results in Fig.~\ref{fig13}.
  
\begin{figure}[!htbp]
   \centering
   \begin{subfigure}[b]{0.32\textwidth}
      \centering
      \includegraphics[width=\textwidth]{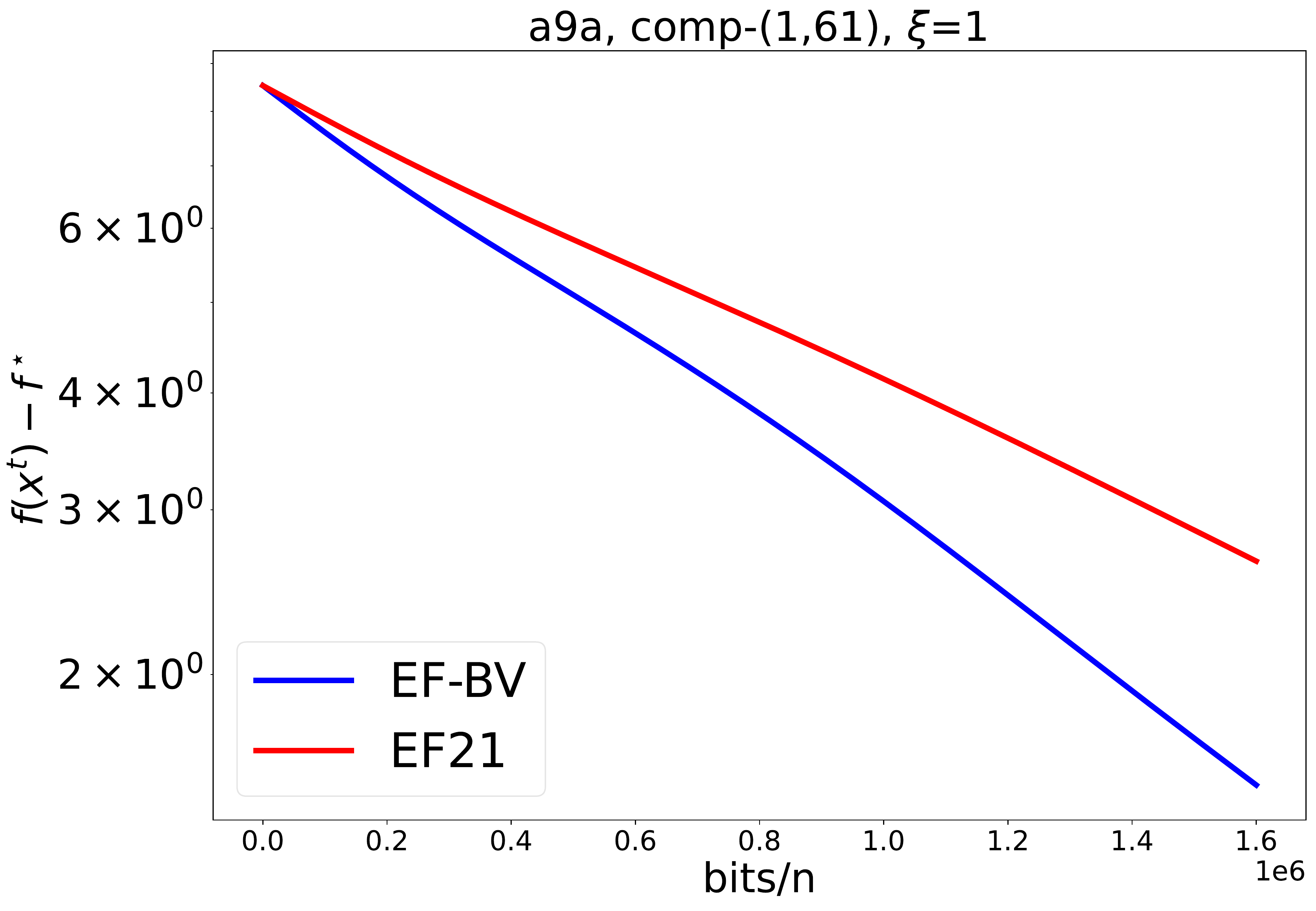}
   \end{subfigure}
   \hfill
   \begin{subfigure}[b]{0.32\textwidth}
      \centering
      \includegraphics[width=\textwidth]{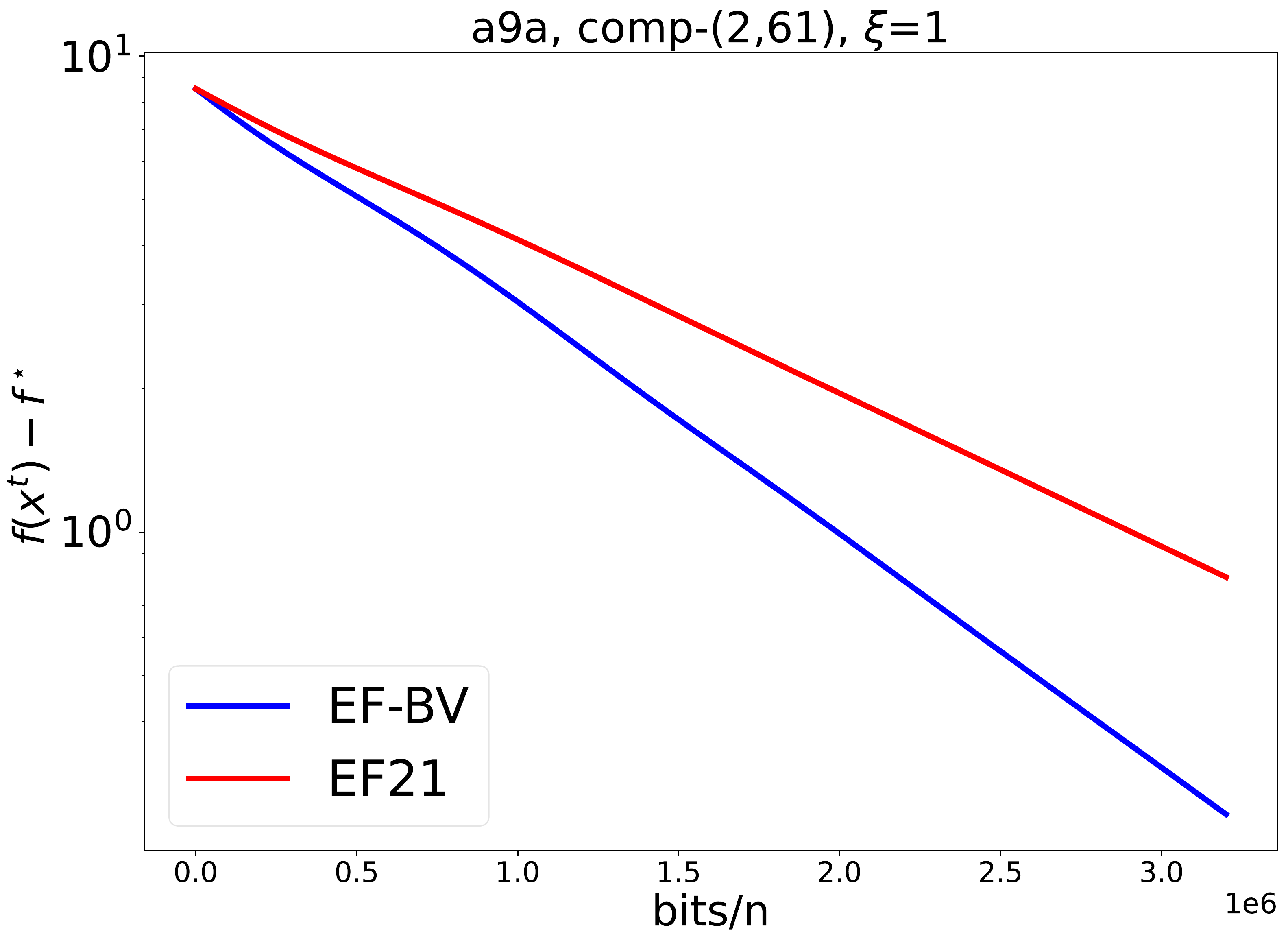}
   \end{subfigure}
   \hfill
   \begin{subfigure}[b]{0.32\textwidth}
      \centering
      \includegraphics[width=\textwidth]{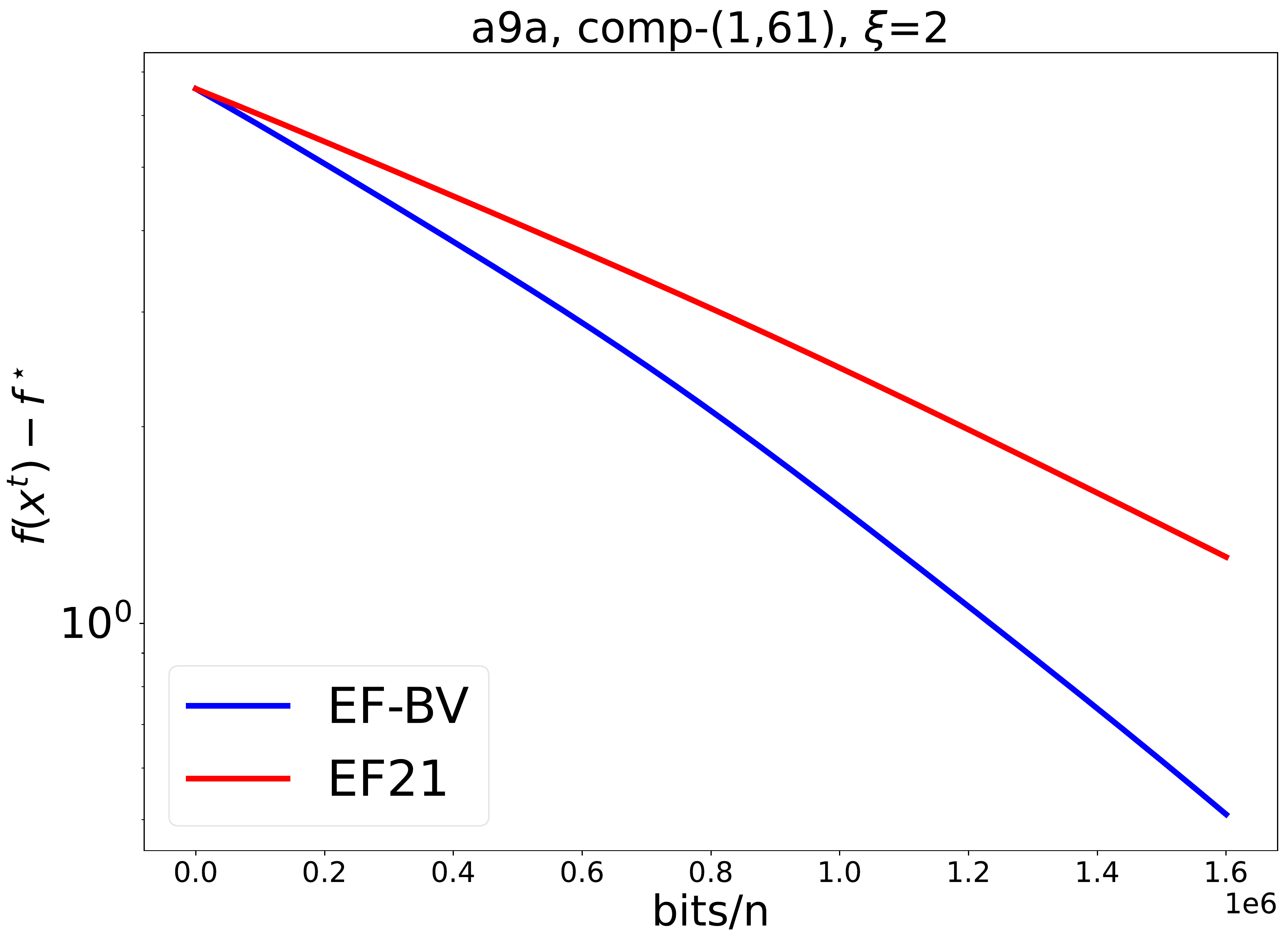}
   \end{subfigure}
   \hfill
   \begin{subfigure}[b]{0.32\textwidth}
      \centering
      \includegraphics[width=\textwidth]{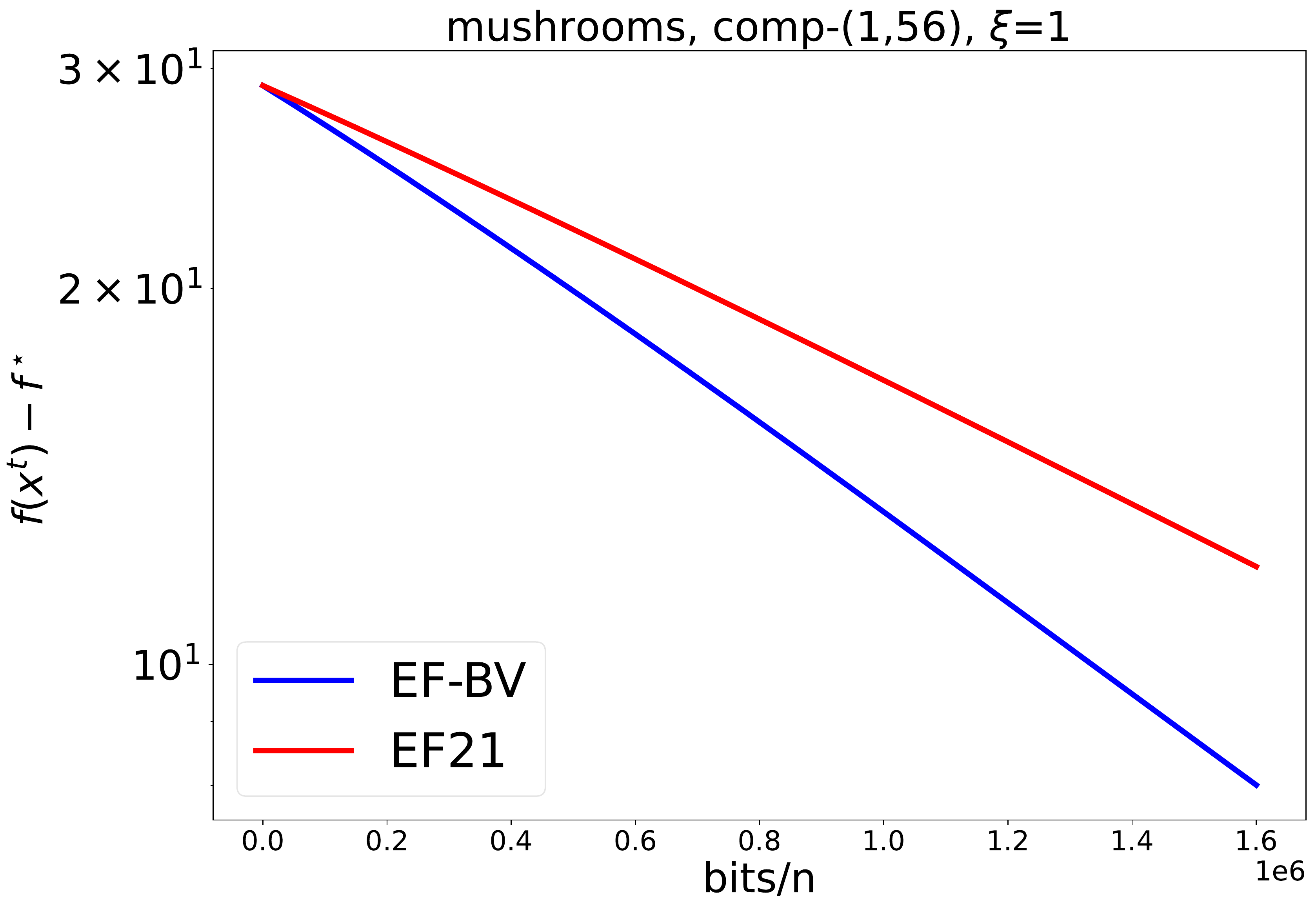}
   \end{subfigure}
   \hfill
   \begin{subfigure}[b]{0.32\textwidth}
      \centering
      \includegraphics[width=\textwidth]{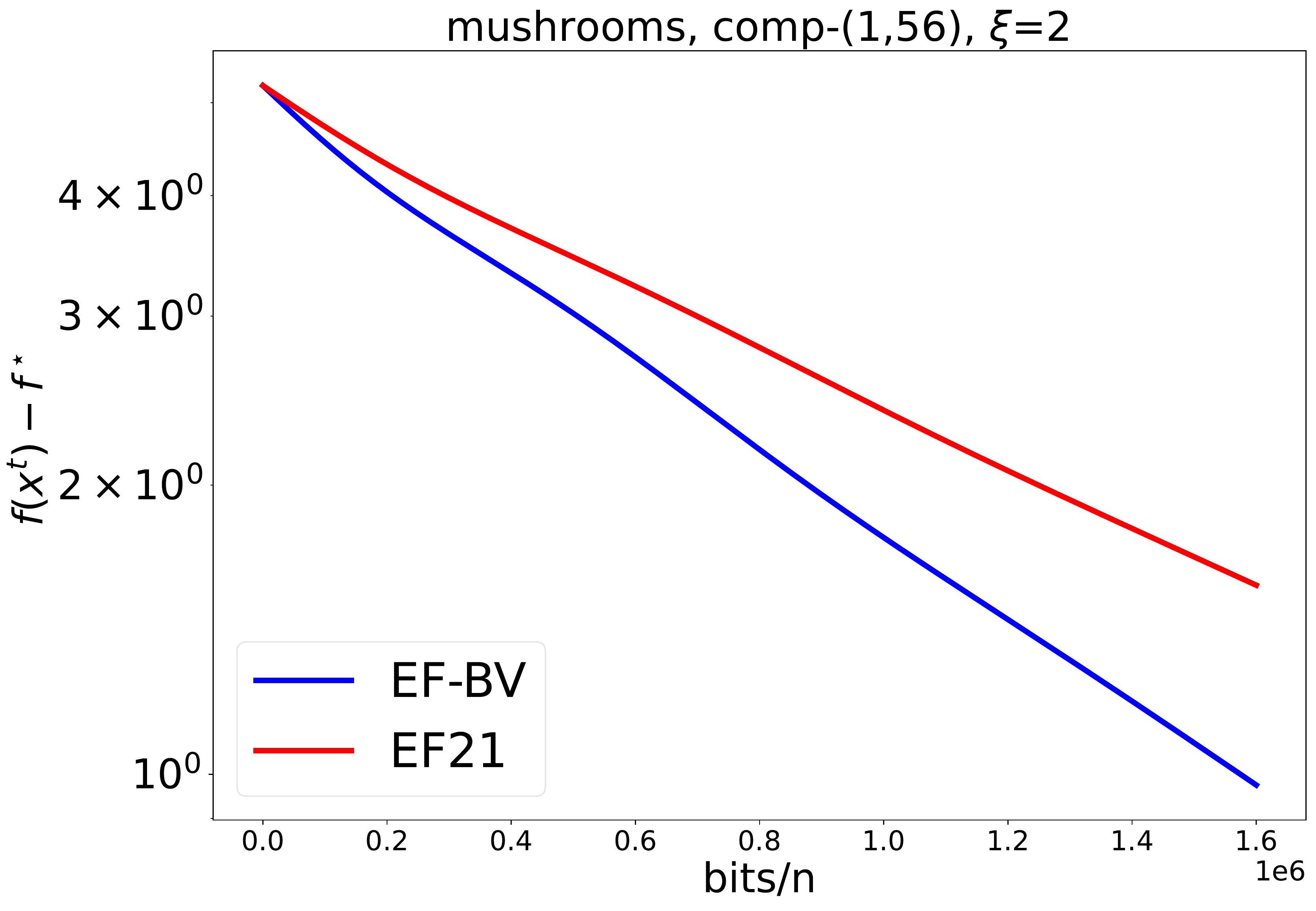}
   \end{subfigure}
   \hfill
   \begin{subfigure}[b]{0.32\textwidth}
      \centering
      \includegraphics[width=\textwidth]{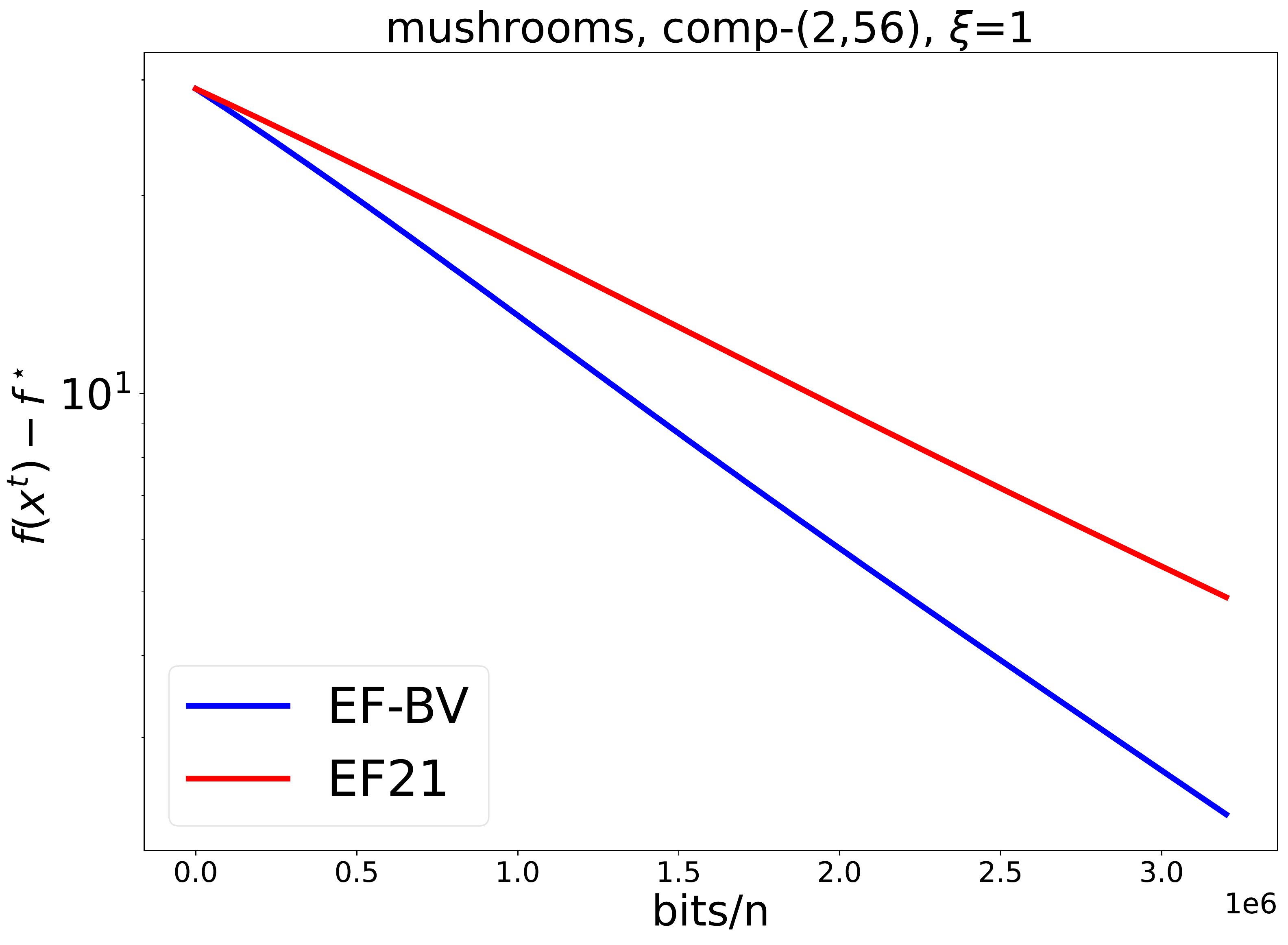}
   \end{subfigure}
   \hfill
   \begin{subfigure}[b]{0.32\textwidth}
      \centering
      \includegraphics[width=\textwidth]{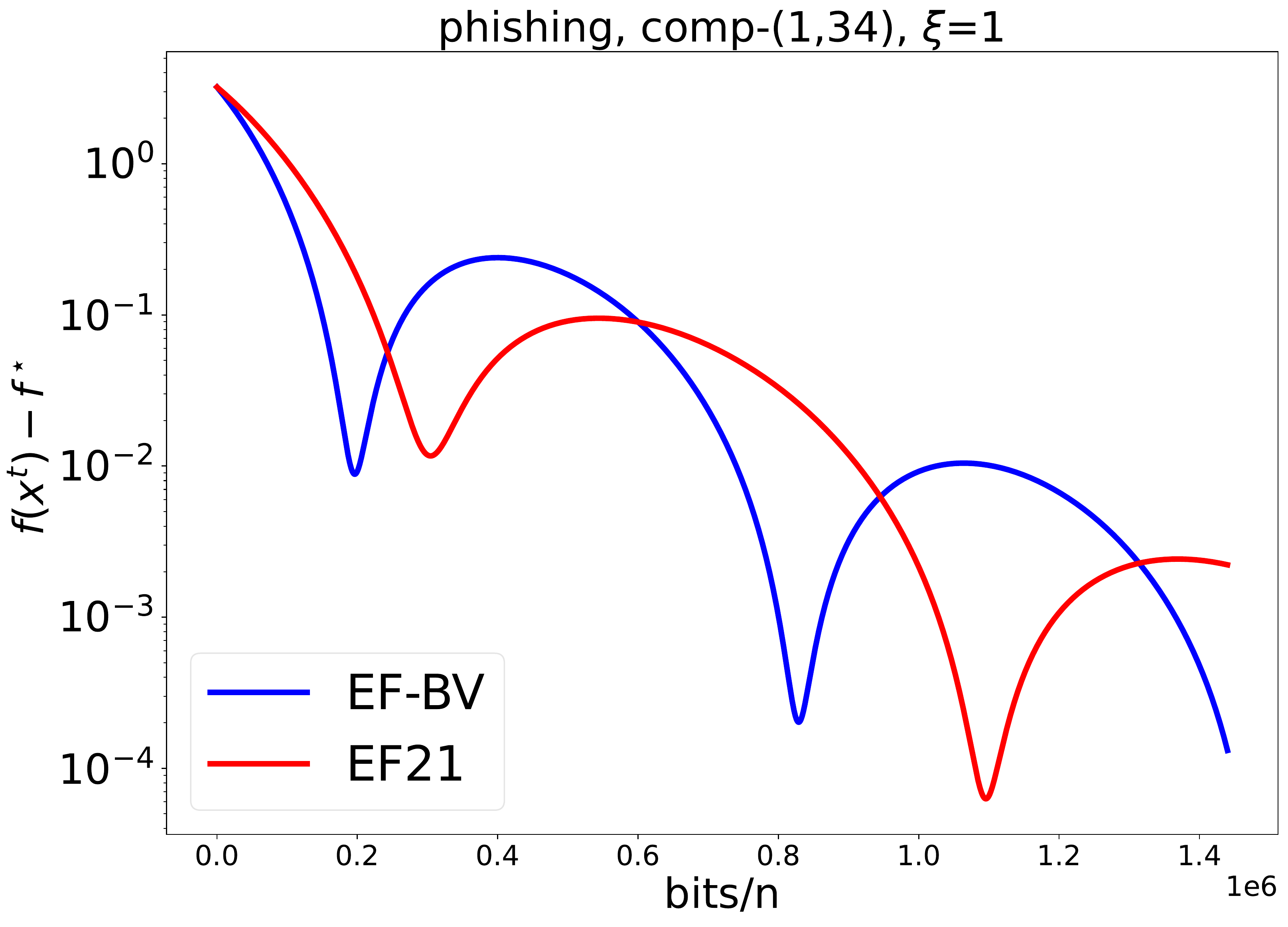}
   \end{subfigure}
   \hfill
   \begin{subfigure}[b]{0.32\textwidth}
      \centering
      \includegraphics[width=\textwidth]{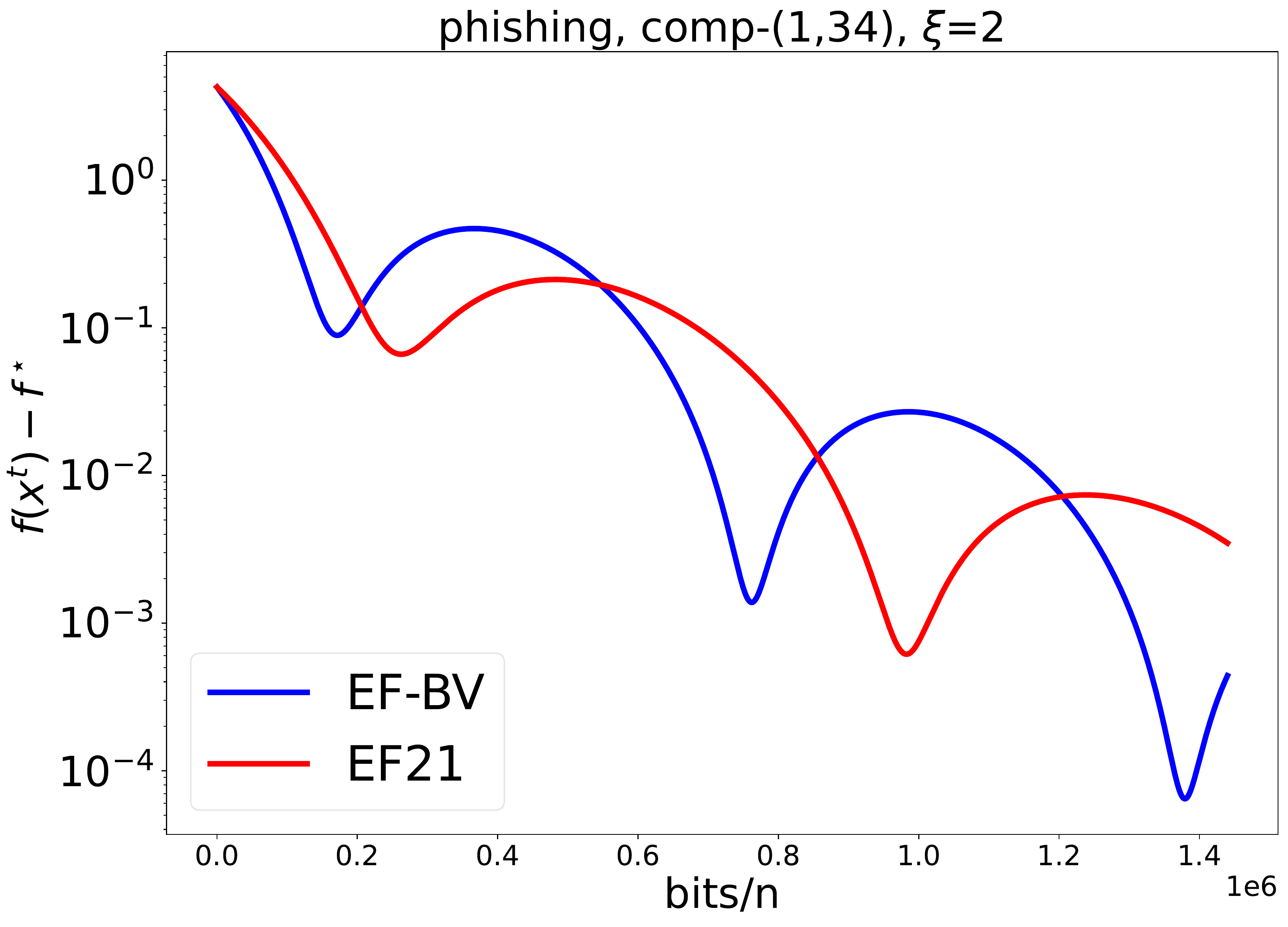}
   \end{subfigure}
   \hfill
   \begin{subfigure}[b]{0.32\textwidth}
      \centering
      \includegraphics[width=\textwidth]{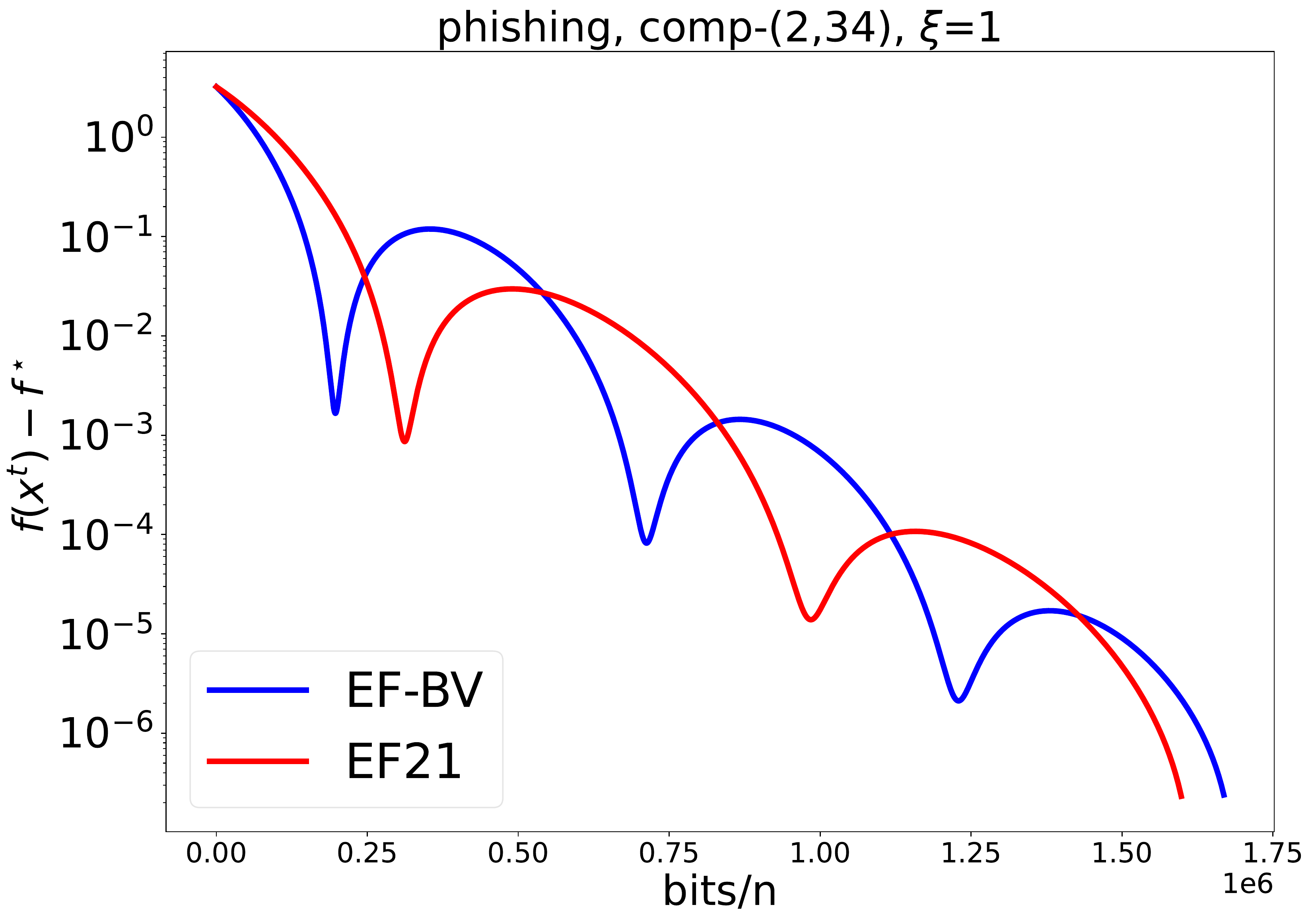}
   \end{subfigure}
      \caption{Comparison between \algname{EF21} and \algname{EF-BV} in the nonconvex setting. We see that \algname{EF-BV} outperforms \algname{EF21} for all datasets.}
      \label{fig13}
\end{figure}

\section{Proof of Proposition~\ref{prop1}}\label{secproofp4}

We first calculate $\omega$. Let $x\in\mathbb{R}^d$.
\begin{align*}
\big\|\mathcal{C}(x)- \mathbb{E}[\mathcal{C}(x)]\big\|^2&=\sum_{i\in \mathcal{I}_d\backslash \{i_1,\ldots, i_{k+k'}\}} \left(\frac{k'}{d-k}\right)^2|x_i|^2
+ \sum_{j=k+1}^{k+k'} \left(\frac{d-k-k'}{d-k}\right)^2|x_{i_j}|^2.
\end{align*}
Therefore, by taking the expectation over the random indexes $i_{k+1},\ldots,i_{2k}$,
\begin{align*}
\Exp{\big\|\mathcal{C}(x)- \mathbb{E}[\mathcal{C}(x)]\big\|^2} &=\sum_{i\in \mathcal{I}_d\backslash \{i_1,\ldots, i_{k}\}} 
\left( \frac{d-k-k'}{d-k}\left(\frac{k'}{d-k}\right)^2 +\frac{k'}{d-k} \left(\frac{d-k-k'}{d-k}\right)^2 
\right)|x_i|^2\\
&=\frac{k'(d-k-k')}{(d-k)^2}\sum_{i\in \mathcal{I}_d\backslash \{i_1,\ldots, i_{k}\}}  |x_i|^2.
\end{align*}
Moreover, since the $|x_{i_j}|$ are the largest elements of $|x|$, for every $j=1,\ldots,k$, 
\begin{equation*}
|x_{i_j}|^2\geq \frac{1}{d-k}\sum_{i\in \mathcal{I}_d\backslash \{i_1,\ldots, i_{k}\}}  |x_i|^2,
\end{equation*}
so that
\begin{equation*}
\|x\|^2 = \sum_{i\in \mathcal{I}_d}  |x_i|^2 \geq \left(1+\frac{k}{d-k}\right) \sum_{i\in \mathcal{I}_d\backslash \{i_1,\ldots, i_{k}\}}  |x_i|^2.
\end{equation*}
Hence, 
\begin{align*}
\Exp{\big\|\mathcal{C}(x)- \mathbb{E}[\mathcal{C}(x)]\big\|^2} &\leq
\frac{k'(d-k-k')}{(d-k)^2}\frac{d-k}{d}\|x\|^2 = \frac{k'(d-k-k')}{(d-k)d}\|x\|^2.
\end{align*}
Then, let us calculate $\eta$.
\begin{align*}
\big\| \mathbb{E}[\mathcal{C}(x)]-x\big\|^2 &=\sum_{i\in \mathcal{I}_d\backslash \{i_1,\ldots, i_{k}\}} \left(\frac{d-k-k'}{d-k}\right)^2|x_i|^2\\
&\leq  \frac{(d-k-k')^2}{(d-k)d}\|x\|^2.
\end{align*}
Thus, $\eta =\frac{d-k-k'}{\sqrt{(d-k)d}}$.

\section{Proof of Proposition~\ref{prop2}}\label{secproofp5}

We first calculate $\omega$. Let $x\in\mathbb{R}^d$.
\begin{align*}
\big\|\mathcal{C}(x)- \mathbb{E}[\mathcal{C}(x)]\big\|^2&=\sum_{j\in\{j_1,\ldots,j_{k}\}}  \left(\frac{k'-k}{k}\right)^2|x_{i_j}|^2+
\sum_{i\in  \{i_1,\ldots, i_{k'}\}\backslash \{i_{j_1},\ldots, i_{j_{k}}\}} |x_i|^2
\end{align*}
Therefore, by taking the expectation over the random indexes $i_{j_1},\ldots,i_{j_{k}}$,
\begin{align*}
\Exp{\big\|\mathcal{C}(x)- \mathbb{E}[\mathcal{C}(x)]\big\|^2} &=\sum_{j=1}^{k'}
\left( \frac{k}{k'}\left(\frac{k'-k}{k}\right)^2 +\frac{k'-k}{k'}\right)|x_{i_j}|^2\\
&=\frac{k'-k}{k}\sum_{j=1}^{k'}
|x_{i_j}|^2\\
&\leq\frac{k'-k}{k}\|x\|^2
\end{align*}
Then, let us calculate $\eta$:
\begin{align*}
\big\| \mathbb{E}[\mathcal{C}(x)]-x\big\|^2 =\sum_{i\in \mathcal{I}_d\backslash \{i_1,\ldots, i_{k'}\}} |x_i|^2 \leq  \frac{d-k'}{d}\|x\|^2.
\end{align*}

\section{Proof of Theorem~\ref{theo1}}

We have the descent property \citep[Lemma 4]{ric21}, for every $t\geq 0$,
\begin{align}
f(x^{t+1}) -f^\star &\leq f(x^t)  -f^\star -\frac{\gamma}{2} \sqnorm{\nabla f(x^t)} +\frac{ \gamma }{2}\sqnorm{g^{t+1}-\nabla f(x^t)}\notag\\
&\quad+ \left(\frac{ L}{2}-\frac{1}{2\gamma}\right)\sqnorm{x^{t+1}-x^t}\label{eqgergg}\\
&\leq (1-\gamma\mu) \big(f(x^t)  -f^\star\big)  +\frac{ \gamma }{2}\sqnorm{g^{t+1}-\nabla f(x^t)}+ \left(\frac{ L}{2}-\frac{1}{2\gamma}\right)\sqnorm{x^{t+1}-x^t}.\notag
\end{align}
Then, for every $t\geq 0$, conditionally on $x^t$, $h^t$ and $(h_i^t)_{i=1}^n$,
\begin{align*}
\Exp{\sqnorm{g^{t+1}-\nabla f(x^t)}} &=\Exp{\sqnorm{\frac{1}{n}\sum_{i=1}^n \Big(h_i^{t}-\nabla f_i(x^t) +\nu \mathcal{C}_i^t\big(\nabla f_i(x^t)-h_i^t\big) \Big) }}\\
&=\sqnorm{\frac{1}{n}\sum_{i=1}^n \Big(h_i^{t}-\nabla f_i(x^t) +\nu \Exp{\mathcal{C}_i^t\big(\nabla f_i(x^t)-h_i^t\big)} \Big)}\\
&\quad+\nu^2\Exp{\sqnorm{\frac{1}{n}\sum_{i=1}^n \Big( \mathcal{C}_i^t\big(\nabla f_i(x^t)-h_i^t\big)-\Exp{ \mathcal{C}_i^t\big(\nabla f_i(x^t)-h_i^t\big) } \Big) }}\\
&\leq \sqnorm{\frac{1}{n}\sum_{i=1}^n \Big(h_i^{t}-\nabla f_i(x^t) +\nu \Exp{\mathcal{C}_i^t\big(\nabla f_i(x^t)-h_i^t\big)} \Big)}\\
&\quad+\nu^2 \frac{\oma}{n}\sum_{i=1}^n \sqnorm{\nabla f_i(x^t)-h_i^t },
\end{align*}
where the last inequality follows from \eqref{eqbo}. In addition,
\begin{align*}
&\left\|\frac{1}{n}\sum_{i=1}^n \Big(h_i^{t}-\nabla f_i(x^t) +\nu \Exp{\mathcal{C}_i^t\big(\nabla f_i(x^t)-h_i^t\big)} \Big)\right\|\\
&\quad \leq \left\|\frac{1}{n}\sum_{i=1}^n \Big(\nu\big(h_i^{t}-\nabla f_i(x^t)\big) +\nu \Exp{\mathcal{C}_i^t\big(\nabla f_i(x^t)-h_i^t\big)} \Big)\right\|\\
&\quad\quad + (1-\nu)\left\|\frac{1}{n}\sum_{i=1}^n \big(h_i^{t}-\nabla f_i(x^t)\big)\right\|\\
&\quad \leq \frac{\nu}{n}\sum_{i=1}^n\left\|h_i^{t}-\nabla f_i(x^t)+ \Exp{\mathcal{C}_i^t\big(\nabla f_i(x^t)-h_i^t\big)}\right\|\\
&\quad\quad + \frac{1-\nu}{n}\sum_{i=1}^n \left\|h_i^{t}-\nabla f_i(x^t)\right\|\\
&\quad \leq  \frac{\nu\eta}{n}\sum_{i=1}^n \left\|\nabla f_i(x^t)-h_i^{t}\right\|+ \frac{1-\nu}{n}\sum_{i=1}^n \left\|\nabla f_i(x^t)-h_i^{t}\right\|\\
& \quad = \frac{1-\nu+\nu\eta}{n}\sum_{i=1}^n \left\|\nabla f_i(x^t)-h_i^{t}\right\|.
\end{align*}
Therefore, 
\begin{align*}
\sqnorm{\frac{1}{n}\sum_{i=1}^n \Big(h_i^{t}-\nabla f_i(x^t) +\nu \Exp{\mathcal{C}_i^t\big(\nabla f_i(x^t)-h_i^t\big)} \Big)} \leq \frac{(1-\nu+\nu\eta)^2}{n}\sum_{i=1}^n \sqnorm{\nabla f_i(x^t)-h_i^{t}},
\end{align*}
and, 
conditionally on $x^t$, $h^t$ and $(h_i^t)_{i=1}^n$,
\begin{align*}
\Exp{\sqnorm{g^{t+1}-\nabla f(x^t)}} &\leq 
\left((1-\nu+\nu\eta)^2+\nu^2\oma\right)\frac{1}{n}\sum_{i=1}^n \sqnorm{\nabla f_i(x^t)-h_i^{t}}.
\end{align*}

Thus, for every $t\geq 0$, conditionally on $x^t$, $h^t$ and $(h_i^t)_{i=1}^n$,
 \begin{align*}
\Exp{f(x^{t+1}) -f^\star} 
&\leq (1-\gamma\mu) \big(f(x^t)  -f^\star \big)  +\frac{ \gamma }{2}\big((1-\nu+\nu\eta)^2+\nu^2\oma\big)\frac{1}{n}\sum_{i=1}^n \sqnorm{\nabla f_i(x^t)-h_i^{t}}\\
&\quad+ \left(\frac{ L}{2}-\frac{1}{2\gamma}\right)\Exp{\sqnorm{x^{t+1}-x^t}}.
\end{align*}

Now, let us study the control variates $h_i^t$. Let $s>0$. Using the Peter--Paul inequality $\|a+b\|^2 \leq (1+s) \|a\|^2 + (1+s^{-1}) \|b\|^2$, for any vectors $a$ and $b$, 
we have, for every $t\geq 0$ and $i\in\mathcal{I}_n$,
\begin{align*}
\sqnorm{\nabla f_i(x^{t+1})-h_i^{t+1}}&=\sqnorm{h_i^{t}-\nabla f_i(x^{t+1}) +\lambda \mathcal{C}_i^t\big(\nabla f_i(x^t)-h_i^t\big)  }\\
&\leq (1+s)\sqnorm{h_i^{t}-\nabla f_i(x^{t}) +\lambda \mathcal{C}_i^t\big(\nabla f_i(x^t)-h_i^t\big)  }\\ 
&\quad+(1+s^{-1})\sqnorm{\nabla f_i(x^{t+1})-\nabla f_i(x^{t})}\\
&\leq (1+s)\sqnorm{h_i^{t}-\nabla f_i(x^{t}) +\lambda \mathcal{C}_i^t\big(\nabla f_i(x^t)-h_i^t\big)  } \\
&\quad+(1+s^{-1})L_i^2\sqnorm{x^{t+1}-x^{t}}.
\end{align*}
Moreover,  conditionally on $x^t$, $h^t$ and $(h_i^t)_{i=1}^n$, 
\begin{align*}
\Exp{\sqnorm{h_i^{t}-\nabla f_i(x^{t}) +\lambda \mathcal{C}_i^t\big(\nabla f_i(x^t)-h_i^t\big)  }}&=\sqnorm{h_i^{t}-\nabla f_i(x^{t}) +\lambda \Exp{\mathcal{C}_i^t\big(\nabla f_i(x^t)-h_i^t\big)} }\\
&\quad+\lambda^2\Exp{\sqnorm{ \mathcal{C}_i^t\big(\nabla f_i(x^t)-h_i^t\big)-\Exp{ \mathcal{C}_i^t\big(\nabla f_i(x^t)-h_i^t\big) }  }}\\
&\leq\sqnorm{h_i^{t}-\nabla f_i(x^{t}) +\lambda \Exp{\mathcal{C}_i^t\big(\nabla f_i(x^t)-h_i^t\big)} }\\
&\quad+\lambda^2\omega \sqnorm{\nabla f_i(x^t)-h_i^t}.
\end{align*}
In addition,
 \begin{align*}
\left\|h_i^{t}-\nabla f_i(x^{t}) +\lambda \Exp{\mathcal{C}_i^t\big(\nabla f_i(x^t)-h_i^t\big)}\right\|& \leq \left\|\lambda\big(h_i^{t}-\nabla f_i(x^{t})\big) +\lambda \Exp{\mathcal{C}_i^t\big(\nabla f_i(x^t)-h_i^t\big)}\right\|\notag\\
&\quad + (1-\lambda)\left\|h_i^{t}-\nabla f_i(x^t)\right\|\\
&\leq  \lambda\eta\left\|\nabla f_i(x^t)-h_i^{t}\right\|+ (1-\lambda)\left\|\nabla f_i(x^t)-h_i^{t}\right\|\notag\\
&=(1-\lambda+\lambda\eta)\left\|\nabla f_i(x^t)-h_i^t\right\|.
\end{align*}
Therefore, conditionally on $x^t$, $h^t$ and $(h_i^t)_{i=1}^n$, 
\begin{align*}
\Exp{\sqnorm{h_i^{t}-\nabla f_i(x^{t}) +\lambda \mathcal{C}_i^t\big(\nabla f_i(x^t)-h_i^t\big)  }}\leq\big((1-\lambda+\lambda\eta)^2+\lambda^2\omega\big) \sqnorm{\nabla f_i(x^t)-h_i^t}
\end{align*}
and
\begin{align*}
\Exp{\sqnorm{\nabla f_i(x^{t+1})-h_i^{t+1}}}&\leq (1+s)\big((1-\lambda+\lambda\eta)^2+\lambda^2\omega\big) \sqnorm{\nabla f_i(x^{t})-h_i^{t}}\\
&\quad+(1+s^{-1})L_i^2\Exp
{\sqnorm{x^{t+1}-x^{t}}},
\end{align*}
so that
\begin{align*}
\Exp{\frac{1}{n}\sum_{i=1}^n \sqnorm{\nabla f_i(x^{t+1})-h_i^{t+1}}}&
\leq (1+s)\big((1-\lambda+\lambda\eta)^2+\lambda^2\omega\big) \frac{1}{n}\sum_{i=1}^n \sqnorm{\nabla f_i(x^t)-h_i^{t}}\\
&\quad+(1+s^{-1})\tilde{L}^2\Exp
{\sqnorm{x^{t+1}-x^{t}}}.
\end{align*}

Let $\theta>0$; its value will be set to $\theta^\star$ later on. We introduce the Lyapunov function, for every $t\geq 0$,
\begin{equation*}
\Psi^t \eqdef f(x^t)-f^\star + \frac{\gamma}{2\theta}  \frac{1}{n}\sum_{i=1}^n \sqnorm{\nabla f_i(x^t)-h_i^{t}}.
\end{equation*}
Hence, for every $t\geq 0$, conditionally on $x^t$, $h^t$ and $(h_i^t)_{i=1}^n$, we have
\begin{align}
\Exp{\Psi^{t+1}} &\leq (1-\gamma\mu) \big(f(x^t)  -f^\star \big)\notag \\
&\quad +\frac{ \gamma }{2\theta}\Big( \theta\big((1-\nu+\nu\eta)^2+\nu^2\oma\big)\notag\\
&\quad+(1+s)\big((1-\lambda+\lambda\eta)^2+\lambda^2\omega\big)
\Big) \frac{1}{n}\sum_{i=1}^n \sqnorm{\nabla f_i(x^t)-h_i^{t}}\label{eqq1}\\
&\quad+ \left(\frac{ L}{2}-\frac{1}{2\gamma}+\frac{\gamma}{2\theta}(1+s^{-1})\tilde{L}^2\right)\!\Exp{\sqnorm{x^{t+1}-x^t}}.\notag
\end{align}
Making use of $r$ and $r_{\mathrm{av}}$ and setting 
$\theta = s(1+s)\frac{r}{r_{\mathrm{av}}}$, 
we can rewrite \eqref{eqq1} as:
\begin{align*}
\Exp{\Psi^{t+1}} &\leq (1-\gamma\mu) \big(f(x^t)  -f^\star \big) +\frac{ \gamma }{2\theta}\Big( \theta r_{\mathrm{av}}
+(1+s)r
\Big) \frac{1}{n}\sum_{i=1}^n \sqnorm{\nabla f_i(x^t)-h_i^{t}}\\
&\quad+ \left(\frac{ L}{2}-\frac{1}{2\gamma}+\frac{\gamma}{2\theta}(1+s^{-1})\tilde{L}^2\right)\!\Exp{\sqnorm{x^{t+1}-x^t}}\\
&=(1-\gamma\mu) \big(f(x^t)  -f^\star \big) +\frac{ \gamma }{2\theta} (1+s)^2 
 \frac{r}{n}\sum_{i=1}^n \sqnorm{\nabla f_i(x^t)-h_i^{t}}\\
&\quad+ \left(\frac{ L}{2}-\frac{1}{2\gamma}+\frac{\gamma}{2s^2}\frac{r_{\mathrm{av}}}{r}\tilde{L}^2\right)\!\Exp{\sqnorm{x^{t+1}-x^t}}.
\end{align*}
We now choose $\gamma$ small enough so that 
 \begin{equation}
L-\frac{1}{\gamma}+\frac{\gamma}{s^2}\frac{r_{\mathrm{av}}}{r}\tilde{L}^2 \leq 0.\label{eqgreg}
\end{equation}
A sufficient condition for \eqref{eqgreg} to hold is \citep[Lemma 5]{ric21}:
\begin{equation}
0<\gamma \leq \frac{1}{L+\tilde{L}\sqrt{\frac{r_{\mathrm{av}}}{r}}\frac{1}{s}}.\label{eqgamfek}
\end{equation}
Then, assuming that \eqref{eqgamfek} holds, we have, for every $t\geq 0$, conditionally on $x^t$, $h^t$ and $(h_i^t)_{i=1}^n$,
\begin{align*}
\Exp{\Psi^{t+1}} &\leq (1-\gamma\mu) \big(f(x^t)  -f^\star \big) +\frac{ \gamma }{2\theta} (1+s)^2 
 \frac{r}{n}\sum_{i=1}^n \sqnorm{\nabla f_i(x^t)-h_i^{t}}\\
&\leq \max\big(1-\gamma\mu,(1+s)^2 r\big)  \Psi^t.
\end{align*}

We see that $s$ must be small enough so that $(1+s)^2 r <1$; this is the case with 
$s=s^\star$, so that $(1+s^\star)^2 r = \frac{r+1}{2}<1$. 
Therefore, we set $s=s^\star$, and, accordingly, $\theta=\theta^\star$. Then, for every $t\geq 0$,
 conditionally on $x^t$, $h^t$ and $(h_i^t)_{i=1}^n$,
\begin{align*}
\Exp{\Psi^{t+1}} 
&\leq \max\big(1-\gamma\mu, {\frac{r+1}{2}}\big) \Psi^t.
\end{align*}
Unrolling the recursion using the tower rule yields \eqref{eqsdgerg}.

\section{Proof of Theorem~\ref{theo2}}

Using $L$-smoothness of $f$, we have, for every $t\geq 0$,
\begin{equation*}
f(x^{t+1})\leq f(x^t) + \langle \nabla f(x^t),x^{t+1}-x^t\rangle + \frac{L}{2}\|x^{t+1}-x^t\|^2.
\end{equation*}
Moreover, using convexity of $R$, we have, for every subgradient $u^{t+1}\in \partial R(x^{t+1})$,
\begin{equation}
R(x^t)\geq R(x^{t+1}) + \langle u^{t+1}, x^{t}-x^{t+1}\rangle.\label{khfjehgg}
\end{equation}
From the property that $\mathrm{prox}_{\gamma R}=(\mathrm{Id}+\gamma \partial R)^{-1}$~\citep{bau17}, it follows from 
$x^{t+1} = \mathrm{prox}_{\gamma R}(x^t - \gamma  g^{t+1})$ that
\begin{equation*}
0\in  \partial R(x^{t+1})+ \frac{1}{\gamma}(x^{t+1} - x^t+\gamma g^{t+1}).
\end{equation*}
So, we set $u^{t+1}\eqdef \frac{1}{\gamma}(x^{t} - x^{t+1})-g^{t+1}$. Using this subgradient in \eqref{khfjehgg} and replacing
$x^{t}-x^{t+1}$ by $\gamma(u^{t+1}+g^{t+1})$, we get, for every $t\geq 0$,
\begin{align*}
f(x^{t+1})+R(x^{t+1}) &\leq f(x^t) +R(x^t)  + \langle \nabla f(x^t)+u^{t+1},x^{t+1}-x^t\rangle + \frac{L}{2}\|x^{t+1}-x^t\|^2\\
&=f(x^t) +R(x^t)  - \gamma \langle \nabla f(x^t)+u^{t+1},g^{t+1}+u^{t+1}\rangle + \frac{L}{2}\gamma^2\|g^{t+1}+u^{t+1}\|^2\\
&=f(x^t) +R(x^t)  +\frac{ \gamma }{2}\|\nabla f(x^t)-g^{t+1}\|^2+ \left(\frac{\gamma^2 L}{2}-\frac{\gamma}{2}\right)\|g^{t+1}+u^{t+1}\|^2\\
&\quad -\frac{\gamma}{2} \|\nabla f(x^t)+u^{t+1}\|^2\\
&=f(x^t) +R(x^t)  +\frac{\gamma }{2}\|\nabla f(x^t)-g^{t+1}\|^2+ \left(\frac{ L}{2}-\frac{1}{2\gamma}\right)\|x^{t+1}-x^t\|^2\\
&\quad -\frac{\gamma}{2} \|\nabla f(x^t)+u^{t+1}\|^2
\end{align*}
Note that we recover \eqref{eqgergg} if $R=0$ and $u^t \equiv 0$.

Using the fact that for any vectors $a$ and $b$, $-\|a+b\|^2 \leq -\frac{1}{2} \|a\|^2 + \|b\|^2$, we have, for every $t\geq 0$,
\begin{align*}
-\frac{\gamma}{2} \|\nabla f(x^t)+u^{t+1}\|^2 &\leq -\frac{\gamma}{4} \|\nabla f(x^{t+1})+u^{t+1}\|^2 + \frac{\gamma}{2}  \|\nabla f(x^{t+1})-\nabla f(x^{t})\|^2\\
&\leq -\frac{\gamma}{4} \|\nabla f(x^{t+1})+u^{t+1}\|^2 + \frac{\gamma L^2}{2}  \|x^{t+1}-x^t\|^2.
\end{align*}
Hence, for every $t\geq 0$,
\begin{align*}
f(x^{t+1})+R(x^{t+1}) 
&\leq  f(x^t) +R(x^t)  +\frac{\gamma}{2}\|\nabla f(x^t)-g^{t+1}\|^2+ \left(\frac{ L}{2}-\frac{1}{2\gamma}+\frac{\gamma L^2}{2}\right)\|x^{t+1}-x^t\|^2\\
&\quad -\frac{\gamma}{4} \|\nabla f(x^{t+1})+u^{t+1}\|^2.
\end{align*}
It follows from the K{\L}  assumption \eqref{eqKL} that
\begin{align*}
f(x^{t+1})+R(x^{t+1}) -f^\star - R^\star
&\leq f(x^{t})+R(x^{t}) -f^\star - R^\star +\frac{\gamma}{2}\|\nabla f(x^t)-g^{t+1}\|^2\\
&\quad + \left(\frac{ L}{2}-\frac{1}{2\gamma}+\frac{\gamma L^2}{2}\right)\|x^{t+1}-x^t\|^2\\
&\quad-2\mu\frac{\gamma}{4}  \left(f(x^{t+1})+R(x^{t+1}) -f^\star - R^\star\right),
\end{align*}
so that
\begin{align*}
 \Big(1+\frac{\gamma\mu}{2}\Big)\left(f(x^{t+1})+R(x^{t+1}) -f^\star - R^\star\right)
&\leq f(x^{t})+R(x^{t}) -f^\star - R^\star +\frac{\gamma}{2}\|\nabla f(x^t)-g^{t+1}\|^2\\
&\quad+ \left(\frac{ L}{2}-\frac{1}{2\gamma}+\frac{\gamma L^2}{2}\right)\|x^{t+1}-x^t\|^2,
\end{align*}
and
\begin{align*}
f(x^{t+1})+R(x^{t+1}) -f^\star - R^\star
&\leq \Big(1+\frac{\gamma\mu}{2}\Big)^{-1}\big(f(x^{t})+R(x^{t}) -f^\star - R^\star\big) +\frac{\gamma}{2}\|\nabla f(x^t)-g^{t+1}\|^2\\
&\quad+ \left(\frac{ L}{2}-\frac{1}{2\gamma}+\frac{\gamma L^2}{2}\right)\|x^{t+1}-x^t\|^2.
\end{align*}
Let $s>0$. Like in the proof of Theorem~\ref{theo1}, we have
\begin{align*}
\Exp{\frac{1}{n}\sum_{i=1}^n \sqnorm{\nabla f_i(x^{t+1})-h_i^{t+1}}}&
\leq (1+s)\big((1-\lambda+\lambda\eta)^2+\lambda^2\omega\big) \frac{1}{n}\sum_{i=1}^n \sqnorm{\nabla f_i(x^t)-h_i^{t}}\\
&\quad+(1+s^{-1})\tilde{L}^2\Exp
{\sqnorm{x^{t+1}-x^{t}}}
\end{align*}
and
\begin{align*}
\Exp{\sqnorm{g^{t+1}-\nabla f(x^t)}} &\leq 
\left((1-\nu+\nu\eta)^2+\nu^2\oma\right)\frac{1}{n}\sum_{i=1}^n \sqnorm{\nabla f_i(x^t)-h_i^{t}}.
\end{align*}
We introduce the Lyapunov function, for every $t\geq 0$,
\begin{align*}
\Psi^t &\eqdef f(x^t)+R(x^t)-f^\star - R^\star + \frac{\gamma}{2\theta}  \frac{1}{n}\sum_{i=1}^n \sqnorm{\nabla f_i(x^t)-h_i^{t}},
\end{align*}
where 
$\theta = s(1+s)\frac{r}{r_{\mathrm{av}}}$.

Following the  same derivations as 
 in the proof of Theorem~\ref{theo1}, we obtain that, for every $t\geq 0$, conditionally on $x^t$, $h^t$ and $(h_i^t)_{i=1}^n$, 
\begin{align*}
\Exp{\Psi^{t+1}} &\leq \Big(1+\frac{\gamma\mu}{2}\Big)^{-1}\big(f(x^{t})+R(x^{t}) -f^\star - R^\star\big) \\
&\quad +\frac{ \gamma }{2\theta}\Big( \theta\big((1-\nu+\nu\eta)^2+\nu^2\oma\big)\\
&\quad+(1+s)\big((1-\lambda+\lambda\eta)^2+\lambda^2\omega\big)
\Big)\frac{1}{n}\sum_{i=1}^n \sqnorm{\nabla f_i(x^{t})-h_i^{t}}\\
&\quad+ \left(\frac{ L}{2}-\frac{1}{2\gamma}+\frac{\gamma L^2}{2}+\frac{\gamma}{2\theta}(1+s^{-1})\tilde{L}^2\right)\!\Exp{\sqnorm{x^{t+1}-x^t}}\\
&= \Big(1+\frac{\gamma\mu}{2}\Big)^{-1}\big(f(x^{t})+R(x^{t}) -f^\star - R^\star\big) \\
&\quad+\frac{ \gamma }{2\theta}\Big( \theta r_{\mathrm{av}}
+(1+s)r
\Big)\frac{1}{n}\sum_{i=1}^n \sqnorm{\nabla f_i(x^{t})-h_i^{t}}\\
&\quad+ \left(\frac{ L}{2}-\frac{1}{2\gamma}+\frac{\gamma L^2}{2}+\frac{\gamma}{2\theta}(1+s^{-1})\tilde{L}^2\right)\!\Exp{\sqnorm{x^{t+1}-x^t}}\\
&=\Big(1+\frac{\gamma\mu}{2}\Big)^{-1}\big(f(x^{t})+R(x^{t}) -f^\star - R^\star\big) +\frac{ \gamma }{2\theta} (1+s)^2 
\frac{r}{n}\sum_{i=1}^n \sqnorm{\nabla f_i(x^{t})-h_i^{t}}\\
&\quad+ \left(\frac{ L}{2}-\frac{1}{2\gamma}+\frac{\gamma L^2}{2}+\frac{\gamma}{2s^2}\frac{r_{\mathrm{av}}}{r}\tilde{L}^2\right)\!\Exp{\sqnorm{x^{t+1}-x^t}}.
\end{align*}
We now choose $\gamma$ small enough so that 
 \begin{equation*}
L-\frac{1}{\gamma}+\gamma L^2+\frac{\gamma}{s^2}\frac{r_{\mathrm{av}}}{r}\tilde{L}^2 \leq 0.
\end{equation*}
If we assume $\gamma\leq \frac{1}{L}$, a sufficient condition is 
 \begin{equation}
2L-\frac{1}{\gamma}+\frac{\gamma}{s^2}\frac{r_{\mathrm{av}}}{r}\tilde{L}^2 \leq 0.\label{eqgreg3}
\end{equation}
A sufficient condition for \eqref{eqgreg3} to hold is \citep[Lemma 5]{ric21}:
\begin{equation}
0<\gamma \leq \frac{1}{2L+\tilde{L}\sqrt{\frac{r_{\mathrm{av}}}{r}}\frac{1}{s}}.\label{eqgamfek2}
\end{equation}
Then, assuming that \eqref{eqgamfek2} holds, we have, for every $t\geq 0$, conditionally on $x^t$, $h^t$ and $(h_i^t)_{i=1}^n$,
\begin{align*}
\Exp{\Psi^{t+1}} &\leq \Big(1+\frac{\gamma\mu}{2}\Big)^{-1} \big(f(x^t)  +R(x^t)-f^\star-R^\star \big) +\frac{ \gamma }{2\theta} (1+s)^2 
 \frac{r}{n}\sum_{i=1}^n \sqnorm{\nabla f_i(x^t)-h_i^{t}}\\
&\leq \max\Big({\textstyle\frac{1}{1+\frac{1}{2}\gamma\mu}},(1+s)^2 r\Big)  \Psi^t.
\end{align*}
We set $s=s^\star$ and, accordingly, $\theta=\theta^\star$, so that  $(1+s^\star)^2 r = \frac{r+1}{2}<1$. Then, 
for every $t\geq 0$, conditionally on $x^t$, $h^t$ and $(h_i^t)_{i=1}^n$,
\begin{align*}
\Exp{\Psi^{t+1}} &\leq \max\left({\frac{1}{1+\frac{1}{2}\gamma\mu}},\frac{r+1}{2}\right)  \Psi^t.
\end{align*}
Unrolling the recursion using the tower rule yields \eqref{eqsdgerg2}.

\section{Proof of Theorem~\ref{thm:noncvx}}

    Let $\theta>0$; its value will be set to the prescribed value later on. We introduce the Lyapunov function, for every $t\geq 0$,
   \begin{equation*}
   \Psi^t \eqdef f(x^t)-f^{\inf} + \frac{\gamma}{2\theta}  \frac{1}{n}\sum_{i=1}^n \sqnorm{\nabla f_i(x^t)-h_i^{t}}.
   \end{equation*}
 According to \citep[Lemma 4]{ric21}, we have, for every $t\geq 0$,
   \begin{align}
   f(x^{t+1}) -f^{\inf} &\leq f(x^t)  -f^{\inf} -\frac{\gamma}{2} \sqnorm{\nabla f(x^t)} +\frac{ \gamma }{2}\sqnorm{g^{t+1}-\nabla f(x^t)} + \left(\frac{ L}{2}-\frac{1}{2\gamma}\right)\sqnorm{x^{t+1}-x^t}.\notag
   \end{align}
   As shown in the proof of Theorem~\ref{theo1}, we have, conditionally on $x^t$, $h^t$ and $(h_i^t)_{i=1}^n$,
\begin{align*}
\Exp{\sqnorm{g^{t+1}-\nabla f(x^t)}} &\leq 
\left((1-\nu+\nu\eta)^2+\nu^2\oma\right)\frac{1}{n}\sum_{i=1}^n \sqnorm{\nabla f_i(x^t)-h_i^{t}}.
\end{align*}
   As for the control variates $h_i^t$, as shown in the proof of Theorem \ref{theo1}, we have, conditionally on $x^t$, $h^t$ and $(h_i^t)_{i=1}^n$, 
   \begin{align*}
   \Exp{\frac{1}{n}\sum_{i=1}^n \sqnorm{\nabla f_i(x^{t+1})-h_i^{t+1}}}&
   \leq (1+s)\big((1-\lambda+\lambda\eta)^2+\lambda^2\omega\big) \frac{1}{n}\sum_{i=1}^n \sqnorm{\nabla f_i(x^t)-h_i^{t}}\\
   &\quad+(1+s^{-1})\tilde{L}^2\Exp
   {\sqnorm{x^{t+1}-x^{t}}}.
   \end{align*}

    Hence, for every $t\geq 0$, conditionally on $x^t$, $h^t$ and $(h_i^t)_{i=1}^n$, we have
   \begin{align}
   \Exp{\Psi^{t+1}} &\leq f(x^t) - f^{\inf} - \frac{\gamma}{2} \sqnorm{\nabla f(x^t)}\notag \\
   &\quad +\frac{ \gamma }{2\theta}\Big( \theta\big((1-\nu+\nu\eta)^2+\nu^2\oma\big)\notag+(1+s)\big((1-\lambda+\lambda\eta)^2+\lambda^2\omega\big)
   \Big) \frac{1}{n}\sum_{i=1}^n \sqnorm{\nabla f_i(x^t)-h_i^{t}}\notag\\
   &\quad+ \left(\frac{ L}{2}-\frac{1}{2\gamma}+\frac{\gamma}{2\theta}(1+s^{-1})\tilde{L}^2\right)\!\Exp{\sqnorm{x^{t+1}-x^t}}.\label{eqq1z}
   \end{align}
   Let $r \eqdef (1 - \lambda + \lambda \eta)^2 + \lambda^2 \omega, r_{\mathrm{av}}\eqdef (1 - \nu + \nu\eta)^2 + \nu^2 \oma$. Set $\theta \eqdef s(1+s)\frac{r}{r_{\mathrm{av}}}$. We can rewrite \eqref{eqq1z} as:
   \begin{align*}
   \Exp{\Psi^{t+1}} &\leq f(x^t) - f^{\inf} - \frac{\gamma}{2} \sqnorm{\nabla f(x^t)} +\frac{ \gamma }{2\theta}\Big( \theta r_{\mathrm{av}}
   +(1+s)r
   \Big) \frac{1}{n}\sum_{i=1}^n \sqnorm{\nabla f_i(x^t)-h_i^{t}}\\
   &\quad+ \left(\frac{ L}{2}-\frac{1}{2\gamma}+\frac{\gamma}{2\theta}(1+s^{-1})\tilde{L}^2\right)\!\Exp{\sqnorm{x^{t+1}-x^t}}\\
   &=f(x^t) - f^{\inf} - \frac{\gamma}{2} \sqnorm{\nabla f(x^t)} +\frac{ \gamma }{2\theta} (1+s)^2 
   \frac{r}{n}\sum_{i=1}^n \sqnorm{\nabla f_i(x^t)-h_i^{t}}\\
   &\quad+ \left(\frac{ L}{2}-\frac{1}{2\gamma}+\frac{\gamma}{2s^2}\frac{r_{\mathrm{av}}}{r}\tilde{L}^2\right)\!\Exp{\sqnorm{x^{t+1}-x^t}}.
   \end{align*}
   We now choose $\gamma$ small enough so that 
   \begin{equation}
   L-\frac{1}{\gamma}+\frac{\gamma}{s^2}\frac{r_{\mathrm{av}}}{r}\tilde{L}^2 \leq 0.\label{eqgreg5}
   \end{equation}
   A sufficient condition for \eqref{eqgreg5} to hold is \citep[Lemma 5]{ric21}:
   \begin{equation}
   0<\gamma \leq \frac{1}{L+\tilde{L}\sqrt{\frac{r_{\mathrm{av}}}{r}}\frac{1}{s}}.\label{eqgamfek5}
   \end{equation}
   Then, assuming that \eqref{eqgamfek5} holds, we have, for every $t\geq 0$, conditionally on $x^t$, $h^t$ and $(h_i^t)_{i=1}^n$,
   \begin{align*}
   \Exp{\Psi^{t+1}} &\leq f(x^t) - f^{\inf} - \frac{\gamma}{2} \sqnorm{\nabla f(x^t)} +\frac{ \gamma }{2\theta} (1+s)^2 
   \frac{r}{n}\sum_{i=1}^n \sqnorm{\nabla f_i(x^t)-h_i^{t}}.
   \end{align*}
 We have chosen $s$ so that $(1+s)^2 r = 1$. 
 Hence, using the tower rule, we have, for every $t\geq 0$,
   \begin{align*}
      \Exp{\Psi^{t+1}} \leq \Exp{\Psi^t} - \frac{\gamma}{2}\Exp{\sqnorm{\nabla f(x^t)}}.
   \end{align*}
   Let $T\geq 1$. By summing up the inequalities for $t=0, \cdots, T-1$, we get 
   \begin{align*}
      0 \leq \Exp{\Psi^T} \leq \Psi^0 - \frac{\gamma}{2} \sum_{t=0}^{T-1} \Exp{\sqnorm{\nabla f(x^t)}}.
   \end{align*}
   Multiplying both sides by $\frac{2}{\gamma T}$ and rearranging the terms, we get
   \begin{align*}
      \frac{1}{T}\sum_{t=0}^{T-1} \Exp{\sqnorm{\nabla f(x^t)}} \leq \frac{2}{\gamma T} \Psi^0,
   \end{align*}
   where the left hand side can be interpreted as $\mathbb{E}\left[\left\|\nabla f(\hat{x}^{T})\right\|^{2}\right]$, where $\hat{x}^{T}$ is chosen from $x^{0}, x^{1}, \ldots, x^{T-1}$ uniformly at random.

\end{document}